\documentclass[11pt]{article}
\usepackage[margin=1in]{geometry}
\usepackage{amsmath,amssymb,amsthm,mathtools}
\usepackage{booktabs}
\usepackage{graphicx}
\usepackage{adjustbox}
\usepackage{enumitem}
\usepackage{microtype}
\usepackage[round]{natbib}

\usepackage[colorlinks=true,linkcolor=black,citecolor=blue!50!black,urlcolor=blue!50!black]{hyperref}%
\usepackage{iftex}
\ifXeTeX
  \usepackage{fontspec}
\else
  \usepackage{times}
\fi
\usepackage{algorithm}
\usepackage{algpseudocode}
\usepackage{xcolor}

\theoremstyle{plain}
\newtheorem{theorem}{Theorem}
\newtheorem{proposition}{Proposition}
\newtheorem{corollary}{Corollary}
\newtheorem{lemma}{Lemma}
\theoremstyle{definition}
\newtheorem{assumption}{Assumption}
\newtheorem{remark}{Remark}

\DeclareMathOperator{\sign}{sign}
\DeclareMathOperator{\TV}{TV}
\DeclareMathOperator*{\argmin}{arg\,min}
\newcommand{\Real}{\mathbb{R}}
\newcommand{\Prob}{\mathbb{P}}

\title{Reliability-Weighted Direct Preference Optimization:\\
Penalized Weights for Robust Preference Learning}
\author{Bin Luo$^{1}$\quad Param Venkat Vivek Kesireddy$^{1}$\quad Anil Vallepu$^{1}$\\[4pt]
$^{1}$School of Data Science and Analytics, Kennesaw State University\\
Correspondence: \texttt{bluo@kennesaw.edu}}
\date{October 2026}

\begin{document}

\maketitle

\begin{abstract}
Direct Preference Optimization (DPO) treats every reported comparison as correct, yet preference data
contain reversed, tied or otherwise noisy comparisons. RW-DPO gives each comparison an unknown reliability weight
$w_i\in[0,1]$ that enters the likelihood linearly --- DPO is $w\equiv1$, a reversed comparison $w_i=0$ --- and
estimates the weights jointly with the policy under a minimax-concave penalty on $-\log w_i$. The penalty keeps
every comparison the model agrees with at exactly $w_i=1$ and acts only on those it contradicts, through
closed-form weight updates at negligible cost over DPO. We prove that recovering all flipped comparisons from a single
annotation per comparison requires margins of order $\log n$. Furthermore, in a linear-index model, starting from an
initial fit pointing in the correct direction, our estimator jointly consistently recovers the preference direction
and weights, as well as identifies the flipped comparisons above this margin scale, even when a fixed fraction of
comparisons is flipped. In simulations RW-DPO has the lowest average
estimation error among DPO, noise-rate corrections and recent robust methods, with far more precise flags than
recent label-correction methods. In DPO fine-tuning of a language model with $10$--$30\%$
flipped labels, its win rate does not fall as the flip rate grows and exceeds that of DPO trained on clean
labels, matching the strongest recent method with corrections concentrated on the flipped labels.
\end{abstract}

\section{Introduction}

Direct Preference Optimization \citep{rafailov2023} aligns a language model from pairwise comparisons and
treats every reported preference as correct. Real comparisons are not: some are \emph{reversed} (the annotator
prefers the inferior response), some \emph{tied} (the interface forces an arbitrary choice), and many
\emph{noisy}: the label matches the true preference only with some probability below one. DPO fits them all, and its quality falls steadily with the share of flipped
labels (Section~\ref{sec:exp-rq2}).

Existing remedies fall short in two ways (Section~\ref{sec:related}). Methods built on a global flip rate
\citep{chowdhury2024rdpo,mitchell2023cdpo} treat every comparison alike, so they cannot tell which comparisons are
wrong. Methods that act on each comparison each lack part of what is needed: ROPO \citep{liang2024ropo} can discard a
comparison but not reverse it, PLC-DPO \citep{cho2026plc} blends every comparison with its reversal by a fixed rule
and so never fully trusts one, and R3M's $\ell_1$ penalty \citep{bukharin2024r3m} biases the estimates. None comes
with a theoretical guarantee that the flipped comparisons are recovered when a fixed fraction of labels is flipped,
the regime of real preference data.

RW-DPO gives each comparison a reliability weight, which lets the fitted model correct the labels it
contradicts. The weight enters the likelihood linearly, so it moves continuously from trusted ($w_i=1$) through
uninformative ($w_i=\tfrac12$) to reversed ($w_i=0$). The weights are estimated jointly with the policy under a
minimax-concave penalty on $-\log w_i$, the scale on which a trusted comparison sits at the origin, which makes
them sparse on that scale: most comparisons, the correct ones, keep exactly $w_i=1$ and train as in DPO. A
comparison the model contradicts is down-weighted, the more the stronger the contradiction; and because the penalty
flattens beyond a saturation point, a strongly contradicted comparison can reach exactly $w_i=0$ at a finite price
and is then reversed.

Our contributions are:
\begin{enumerate}[leftmargin=*,nosep]
\item \textbf{A reliability model} (Section~\ref{sec:model}): an unknown $w_i\in[0,1]$ enters the likelihood
linearly and nests DPO, reversal, ties ($w_i=\tfrac12$) and partial reliability.
\item \textbf{A penalty on $\boldsymbol{-\log w}$} (Section~\ref{sec:method}) that gives exact $w_i=1$, graded
attenuation and exact $w_i=0$, with closed-form weight updates and weighted DPO as policy step (Section~\ref{sec:alg}).
\item \textbf{Guarantees} (Section~\ref{sec:theory}): the information limit for detecting a flip from one
annotation and, in a linear-index model, joint consistency of the direction and the weights with consistent
flip recovery.
\item \textbf{Evidence} (Section~\ref{sec:exp}) in simulations and in DPO fine-tuning of a language model.
\end{enumerate}

\section{Related work}
\label{sec:related}

\begin{table}[t]
\centering
\footnotesize
\setlength{\tabcolsep}{2.5pt}
\caption{RW-DPO against robust DPO variants: how finely reliability is resolved, whether a flip can be reversed,
and what is guaranteed.}
\label{tab:compare}
\begin{adjustbox}{max width=\columnwidth}\begin{tabular}{@{}lllll@{}}
\toprule
Method & Reliability & Reverses & Detection & Consistency \\
       & resolved    & a flip   & limit     &             \\
\midrule
DPO & none & --- & --- & --- \\
rDPO \citep{chowdhury2024rdpo} & global rate & on average & --- & known rate \\
Dr.\ DPO \citep{wu2024drdpo} & global (DRO) & --- & --- & DRO bound \\
ROPO \citep{liang2024ropo} & per-pair weight & no & --- & convergence\textsuperscript{a} \\
FA-DPO \citep{xu2025fadpo} & covariate model & \checkmark & --- & if model holds \\
H\"older-DPO \citep{fujisawa2025holder} & robust loss & --- & --- & robustness bound \\
R3M \citep{bukharin2024r3m} & mean shift ($\ell_1$) & absorbs & --- & sublinear outliers \\
PLC-DPO \citep{cho2026plc} & per-pair route & \checkmark & --- & --- \\
\textbf{RW-DPO} & per-pair weight & \checkmark\ (exact) & \checkmark & \checkmark\textsuperscript{b} \\
\bottomrule
\end{tabular}\end{adjustbox}

\vspace{2pt}
{\scriptsize \textsuperscript{a}Convergence of the algorithm, not statistical consistency. \textsuperscript{b}Linear-index
model, binary true weights, for flips above a detection threshold (Section~\ref{sec:theory}).\par}
\end{table}

\paragraph{Global corrections.} Conservative DPO \citep{mitchell2023cdpo} smooths labels, rDPO
\citep{chowdhury2024rdpo} debiases the loss under a known flip rate, and Dr.\ DPO \citep{wu2024drdpo} hedges
against worst-case noise. All three treat every comparison alike.

\paragraph{Per-comparison corrections.} ROPO \citep{liang2024ropo} learns non-negative weights under a retained
budget, discarding but never reversing; FA-DPO \citep{xu2025fadpo} models the flip probability by a logistic
regression on covariates; H\"older-DPO \citep{fujisawa2025holder} uses a redescending loss that also locates
mislabels. PLC-DPO \citep{cho2026plc} routes each pair to clean, flipped or tied states by a fixed posterior formula
of its standardized margin and mixes the corresponding losses, so every comparison receives some weight on its
reversed label. Closest in structure, R3M \citep{bukharin2024r3m} adds an $\ell_1$-penalized mean shift per
comparison and extends to DPO. RW-DPO instead estimates the weights by penalized likelihood: most stay at exactly
$w_i=1$, MCP avoids the $\ell_1$ bias \citep{sheowen2011}, and reversal is exact.

\paragraph{What is guaranteed.} The theory behind these methods concerns the policy or the optimization, not which
comparisons are flipped (Table~\ref{tab:compare}). rDPO bounds the policy's error when the flip rate is known,
Dr.\ DPO bounds a worst-case risk and H\"older-DPO the effect of outliers; ROPO's guarantee is convergence of its
algorithm, FA-DPO is consistent only when its model of the flip probability is correct, and PLC-DPO gives none.
R3M's theory, the closest to ours, addresses the corrupted comparisons themselves but requires their number to grow
sublinearly in $n$, so the share of flips must vanish. In preference data the share of wrong labels does not
shrink as more data are collected. RW-DPO's guarantees are stated for that regime: with a fixed fraction of flips,
the estimated direction and weights are consistent, and the flips beyond a detection threshold are flagged and,
with a shrinking cut, reversed; the threshold is set by what one annotation can reveal
(Theorems~\ref{thm:info-recovery}--\ref{thm:recovery}).

\paragraph{Robust regression and case weights.} \citet{sheowen2011} add per-observation mean-shift parameters
under a nonconvex penalty and show that an $L_1$ penalty loses the consistency the nonconvex one attains --- our
reason for MCP; \citet{gaofang2016} recast this as log-scale case weights, the device we adopt. Case-specific and
misclassification-robust logistic fits \citep{tibsmanning2014,lee2015case,copas1988,carrollpederson1993} and
folded-concave theory \citep{fanli2001scad,zou2008lla,zhang2010mcp,fanxuezou2014} identify the per-case parameter
from a continuous residual, not from one binary annotation.

\section{Background: Direct Preference Optimization}
\label{sec:background}

DPO replaces the reward of KL-regularized RLHF by $\beta\log(\pi_\theta/\pi_{\mathrm{ref}})$, whose normalizing
constant cancels within a pair. We fix a label-independent order on each pair (e.g.\ by hashing the responses),
write the responses $y_i^A,y_i^B$, and set $Y_i=+1$ if the annotator reported $y_i^A\succ y_i^B$ and $-1$ otherwise.

\paragraph{Margin.} With policy $\pi_\theta$, frozen reference
$\pi_{\mathrm{ref}}$ and temperature $\beta$, define the DPO margin
\begin{equation}
\label{eq:margin}
d_i(\theta)=\beta\left[\log\frac{\pi_\theta(y_i^A\mid x_i)}{\pi_{\mathrm{ref}}(y_i^A\mid x_i)}
-\log\frac{\pi_\theta(y_i^B\mid x_i)}{\pi_{\mathrm{ref}}(y_i^B\mid x_i)}\right].
\end{equation}
Under Bradley--Terry, $\Prob(Y_i=+1)=\sigma(d_i)$ and DPO minimizes the negative log-likelihood
$-\sum_i\log\sigma(Y_id_i)$. By $\sigma(t)=\tfrac{1+\tanh(t/2)}{2}$, we may write
\begin{equation}
\label{eq:tanh}
\Prob(Y_i=\pm1)=\tfrac{1\pm g(d_i)}{2},\qquad g(d)=\tanh(d/2)\in(-1,1),
\end{equation}
since $g$ is exactly what a reversed label negates. We write
\begin{equation}
\label{eq:m}
m_i(\theta)=Y_i\,g\big(d_i(\theta)\big)\in(-1,1)
\end{equation}
for the \emph{agreement} between model and annotator ($m_i$ near $-1$: the model confidently
contradicts the label).

\section{A contamination model with per-comparison reliability}
\label{sec:model}

Let $p_i^+$ denote the probability that the \emph{true} preference points in the
direction the annotator reported, and $p_i^-=1-p_i^+$ the probability that it
points the other way, so that $p_i^\pm=\tfrac{1\pm Y_ig(d_i)}{2}$ by
\eqref{eq:tanh}. Attach to comparison $i$ a \textbf{reliability weight}
$w_i\in[0,1]$ and model the reported label as
\begin{equation}
\label{eq:mix}
\Prob(Y_i \text{ as observed}) \;=\; w_i\,p_i^+ + (1-w_i)\,p_i^- .
\end{equation}
The interpretation is direct: with probability $w_i$ the annotation transmits the
true preference, and with probability $1-w_i$ it transmits its opposite.

\begin{lemma}[Single-index reduction]
\label{lem:reduce}
With $m_i$ as in \eqref{eq:m}, the model \eqref{eq:mix} is equivalent to
\begin{equation}
\label{eq:r}
r_i(\theta,w_i)\;:=\;\Prob(Y_i\text{ as observed})\;=\;\tfrac12+\big(w_i-\tfrac12\big)\,m_i(\theta) .
\end{equation}
\end{lemma}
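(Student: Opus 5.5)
The plan is a direct algebraic substitution into \eqref{eq:mix}; the only preparatory step is to pin down $p_i^\pm$ in terms of the agreement $m_i$. From \eqref{eq:tanh}, the probability that the true preference points in the reported direction $Y_i$ is $\tfrac{1+Y_i g(d_i)}{2}$. This expression is valid for both signs of $Y_i$: for $Y_i=+1$ it is $\Prob(Y_i=+1)$, and for $Y_i=-1$ it is $\tfrac{1-g(d_i)}{2}=\Prob(Y_i=-1)$. By definition \eqref{eq:m}, $Y_i g(d_i)=m_i(\theta)$, so
\[
p_i^\pm=\frac{1\pm m_i(\theta)}{2}.
\]

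Next I substitute these expressions into \eqref{eq:mix} and collect the constant and the $m_i$ terms:
\[
w_i\,\frac{1+m_i}{2}+(1-w_i)\,\frac{1-m_i}{2}
=\frac{w_i+(1-w_i)}{2}+\frac{w_i-(1-w_i)}{2}\,m_i
=\frac12+\Big(w_i-\frac12\Big)m_i ,
\]
which is exactly \eqref{eq:r}.

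Finally, I check that this is a genuine equivalence and not just a one-way implication. The map from \eqref{eq:mix} to \eqref{eq:r} is an identity in $(w_i,m_i)$, so the two expressions define the same likelihood for every $\theta$ and every $w_i\in[0,1]$. In particular, the likelihood depends on $\theta$ only through the single index $m_i(\theta)$.

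There is no real obstacle here. The one point that needs care is the sign bookkeeping: $p_i^+$ must be read as the probability under Bradley--Terry of the label actually observed, which is why $Y_i$ multiplies $g(d_i)$ and turns into $m_i$.
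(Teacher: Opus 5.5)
Your proof is correct and is the same as the paper's: substitute $p_i^\pm=\tfrac{1\pm m_i}{2}$ into \eqref{eq:mix} and use the identity $w\tfrac{1+m}{2}+(1-w)\tfrac{1-m}{2}=\tfrac12+(w-\tfrac12)m$. Your derivation of $p_i^\pm$ from \eqref{eq:tanh} and \eqref{eq:m} only spells out what the paper already states when it introduces the model in Section~\ref{sec:model}.
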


So $w$ enters the likelihood \emph{linearly}, multiplying the agreement $m_i$.
This is the structural feature the method exploits, and it makes the four regimes
of interest a single continuum:

$w_i=1$ is the DPO term; $w_i\in(\tfrac12,1)$ down-weights the pair; $w_i=\tfrac12$ makes it
uninformative ($r_i=\tfrac12$ whatever $\theta$); $w_i<\tfrac12$ corrects its direction, fully at $w_i=0$.
A genuine tie corresponds to $w_i=\tfrac12$, but ties need not be the only source of
intermediate reliability.

With one weight per comparison the model is saturated; what identifies $\theta$ is that most
comparisons are reliable ($w_i^*=1$), and the penalty concentrates on the minority that does not fit.
In the guarantees of Section~\ref{sec:theory} this requirement enters through the initial fit
(Assumption~\ref{ass:init}).

Lemmas~\ref{lem:reduce}--\ref{lem:wstep}, the weight-step results of Appendix~\ref{app:wstep} and
the first part of Proposition~\ref{prop:bayes} hold for any link in
$\mathcal G=\{g:\Real\to(-1,1)\text{ odd, increasing, }g(0)=0\}$; Theorem~\ref{thm:info-recovery} and
the analysis of Section~\ref{sec:feasible} use the Bradley--Terry link $g(d)=\tanh(d/2)$ of
\eqref{eq:tanh}, which is also our default.

\section{Method: penalizing the log-weight}
\label{sec:method}

RW-DPO estimates the weights jointly with the policy by minimizing the penalized negative log-likelihood of
\eqref{eq:r},
\begin{equation}
\label{eq:obj}
\begin{split}
\Psi(\theta,w)={}&-\sum_{i=1}^{n}\log\Big[\tfrac12+\big(w_i-\tfrac12\big)m_i(\theta)\Big]\\
&+\sum_{i=1}^{n}P(w_i),\qquad w\in[0,1]^n,
\end{split}
\end{equation}
where $P$ penalizes departures from full reliability. Fixing $w\equiv1$ gives $\Psi=-\sum_i\log\sigma(Y_id_i)$, the
DPO objective. With a common weight $w_i\equiv1-\varepsilon$ the likelihood part of \eqref{eq:obj} is the exact
likelihood under a known flip rate $\varepsilon$ (with one weight per annotator, Crowd-BT;
\citealp{chen2013crowdbt}): cDPO's loss \citep{mitchell2023cdpo} is its Jensen upper bound and rDPO's
\citep{chowdhury2024rdpo} an unbiased correction. RW-DPO lets the weight vary by comparison. Nothing requires a
particular policy class, since $m_i$ is computed from the neural margin
\eqref{eq:margin}. The estimator is the alternating update of Section~\ref{sec:alg} built from $\Psi$.

Writing $u_i=-\log w_i\in[0,\infty]$ puts full reliability at the origin, so any penalty
$\varrho(u_i)$ with $\varrho(0)=0$ that grows with $u_i$ is free at $w_i=1$, and a sparsity-inducing $\varrho$
keeps most $u_i$ at exactly zero, that is most $w_i$ at exactly one --- the device of \citet{gaofang2016} for
robust regression, transplanted to a weight that multiplies an agreement rather than a residual.

For $\varrho$ we take the minimax-concave penalty (MCP)
\citep{zhang2010mcp},
\begin{equation}
\label{eq:mcp}
\varrho(u;\lambda,a)=
\begin{cases}
\lambda u-\dfrac{u^{2}}{2a}, & 0\le u\le a\lambda,\\[2mm]
\dfrac{a\lambda^{2}}{2}, & u> a\lambda,
\end{cases}
\end{equation}
with $\lambda,a>0$, and define the penalty on the weight scale by composition,
\begin{equation}
\label{eq:pen}
P(w)=\varrho(-\log w;\lambda,a),\qquad P(0)=\frac{a\lambda^{2}}{2}.
\end{equation}
Like the lasso, the MCP has slope $\lambda$ at $u=0$, which makes $w_i=1$ an exact solution: a comparison the
model agrees with, or only mildly contradicts, keeps its full weight. Unlike the lasso, whose constant slope keeps
shrinking large departures and biases them \citep{sheowen2011}, its slope falls to zero at $u=a\lambda$: the
curvature in between down-weights a contradicted comparison in proportion to the evidence, and beyond the
\emph{saturation weight} $w_{\mathrm{sat}}=e^{-a\lambda}$ further loss of reliability costs nothing, so a full
reversal ($w_i=0$) costs only the finite amount $a\lambda^2/2$. Section~\ref{sec:alg} shows how the algorithm
reaches each of the three states.

\section{Algorithm}
\label{sec:alg}

We alternate a $w$-step, separable across comparisons and in closed form after one standard
approximation, and a $\theta$-step, an ordinary weighted DPO update.

\subsection{The \texorpdfstring{$w$}{w}-step}

With $\theta$ fixed, \eqref{eq:obj} separates into scalar problems
\begin{equation}
\label{eq:wsub}
\min_{w\in[0,1]}\ -\log\Big[\tfrac12+\big(w-\tfrac12\big)m_i\Big]+P(w).
\end{equation}
Their stationarity condition is transcendental, so we use the local linear approximation standard for
folded-concave penalties \citep{zou2008lla}, replacing $\varrho(-\log w)$ at the current $w_i^{(t)}$ by its
tangent majorant
\begin{equation}
\label{eq:lla}
\varrho(-\log w)\approx\varpi_i^{(t)}(-\log w)+\text{const},
\end{equation}
with $\varpi_i^{(t)}=(\lambda+a^{-1}\log w_i^{(t)})_+$: a weighted log-penalty problem with a closed-form solution.

\begin{lemma}[Closed-form weight update]
\label{lem:wstep}
Fix $m\in(-1,1)$ and $\varpi\ge0$, and let
$h(w)=-\log[\tfrac12+(w-\tfrac12)m]+\varpi(-\log w)$, with $\hat w=1$ if $h$ is constant. Then
\[
\argmin_{[0,1]}h=
\begin{cases}
1, & m\ge0,\\[1mm]
\min\Big\{1,\dfrac{\varpi(1-m)}{2|m|(1+\varpi)}\Big\}, & m<0<\varpi,\\[3mm]
0, & m<0=\varpi .
\end{cases}
\]
In particular $\hat w=1$ exactly whenever $m\ge-\dfrac{\varpi}{2+\varpi}$, and
$\hat w=0$ exactly whenever $\varpi=0$ and $m<0$.
\end{lemma}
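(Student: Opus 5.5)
The proof is a one-variable calculus argument on $[0,1]$. It rests on the fact that the data term $\ell(w)=-\log[\tfrac12+(w-\tfrac12)m]$ is the negative log of an affine function of $w$, while the penalty term $-\varpi\log w$ is convex and nonincreasing. We first note that $\tfrac12+(w-\tfrac12)m\in[\tfrac{1-|m|}{2},\tfrac{1+|m|}{2}]\subset(0,1)$ for all $w\in[0,1]$, so $\ell$ is finite, smooth and convex on the closed interval. When $\varpi>0$, $h(w)\to+\infty$ as $w\downarrow0$, so any minimizer lies in $(0,1]$. When $\varpi=0$, $h=\ell$ is continuous on $[0,1]$. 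In every case a minimizer exists, and $h$ is convex (strictly so unless $m=0$ and $\varpi=0$, the constant case handled by the convention $\hat w=1$). It therefore suffices to locate the unique minimizer through the sign of $h'$.

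We then split on the sign of $m$. For $m\ge0$, $\ell'(w)=-m/[\tfrac12+(w-\tfrac12)m]\le0$ and $-\varpi/w\le0$, so $h$ is nonincreasing. Strict decrease holds unless $m=\varpi=0$, so $\hat w=1$. For $m<0=\varpi$, $h=\ell$ has $\ell'>0$, so $h$ is strictly increasing and $\hat w=0$. For $m<0<\varpi$, we solve the stationarity equation
\[
\frac{|m|}{\tfrac12+(w-\tfrac12)m}=\frac{\varpi}{w}.
\]
Cross-multiplying gives the linear equation $|m|w=\varpi(\tfrac12-\tfrac m2)+\varpi m w$. Since $m=-|m|$, this becomes $|m|(1+\varpi)w=\varpi(1-m)/2$, with solution
\[
w^\circ=\frac{\varpi(1-m)}{2|m|(1+\varpi)}.
\]
Here $h'(w)=\tfrac{|m|}{\cdots}-\tfrac{\varpi}{w}$ is negative for $w<w^\circ$ and positive for $w>w^\circ$. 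This sign pattern can be checked either from convexity or directly, since $h'(w)$ has the sign of the linear function $|m|(1+\varpi)w-\varpi(1-m)/2$ after multiplying by the positive quantity $w[\tfrac12+(w-\tfrac12)m]$. Hence the constrained minimizer is $\min\{1,w^\circ\}$.

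The ``in particular'' claims follow directly. First, $\hat w=0$ exactly when $\varpi=0$ and $m<0$: in every other case the minimizer is either $1$ or $w^\circ>0$. Second, $\hat w=1$ holds for $m\ge0$ and, when $m<0$, iff $w^\circ\ge1$ (with $\varpi>0$ forced, since otherwise $\hat w=0$). The condition $w^\circ\ge1$ reads $\varpi(1-m)\ge2|m|(1+\varpi)=-2m(1+\varpi)$. Rearranged, this is $\varpi\ge -m(2+\varpi)$, i.e.\ $m\ge-\varpi/(2+\varpi)$. The threshold $-\varpi/(2+\varpi)$ is $\le0$, so it also covers $m\ge0$ and is consistent with the $\varpi=0$ case.

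The only delicate point is the bookkeeping at the boundary. This covers the case $\varpi=0$, where $w=0$ is admissible and $h$ stays finite, and the degenerate constant case. The sign-of-a-linear-function device avoids needing convexity arguments there.
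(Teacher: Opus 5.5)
Your proof is correct and follows the paper's argument. You differentiate, settle $m\ge0$ and $m<0=\varpi$ by the sign of $h'$, solve the stationarity equation $|m|w=\varpi\,[\tfrac12+(w-\tfrac12)m]$ for $m<0<\varpi$, clip to $[0,1]$, and rearrange $w^\circ\ge1$ into $m\ge-\varpi/(2+\varpi)$. Your observation that $h'$ has the sign of the linear function $|m|(1+\varpi)w-\varpi(1-m)/2$ makes the global-minimality and clipping step slightly more explicit than the paper's appeal to $h''>0$ at the stationary point.
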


So the update keeps $\hat w_i=1$ whenever the model agrees with the annotation or disagrees only mildly,
that is, when $m_i\ge-\varpi_i/(2+\varpi_i)$. On the first pass all weights start at $w_i=1$, giving $\varpi_i=\lambda$
and the threshold $-\lambda/(2+\lambda)$, below which the update returns an interior weight that decreases as the
disagreement grows. Because weights are carried across passes and $\varpi_i$ falls
with $w_i$, a down-weighted comparison is penalized less and can fall further; once
$w_i\le w_{\mathrm{sat}}$ the next update sets $\hat w_i=0$, an exact reversal. Whether this happens or the
weight settles at an interior value is decided by $|m_i|$ and $(\lambda,a)$
(Lemma~\ref{lem:lla-limit}, Appendix~\ref{app:wstep}).

\subsection{A scale-free agreement}
\label{sec:zscale}

The margin scale of a trained policy grows as it sharpens, and down-weighting makes each refit's margins
larger still, so a fixed $\lambda$ on raw margins flags an ever larger share of comparisons
The problem is common to margin-based corrections: PLC-DPO \citep{cho2026plc}
likewise standardizes its margins, by running estimates of their mean and standard deviation. RW-DPO therefore
computes the weights on a \emph{standardized} margin, with a robust scale and without centring. With the label-oriented margin $\xi_i=Y_id_i(\theta)$ (so that $m_i=g(\xi_i)$ in \eqref{eq:m}), let
\[
\hat s=1.4826\,\mathrm{med}_i\big|\xi_i-\mathrm{med}_j\,\xi_j\big|,\quad
z_i=\frac{\xi_i}{\hat s},\quad \tilde m_i=g(z_i),
\]
the robust standard deviation (MAD) of the label-oriented margins, and run the $w$-step of
Lemma~\ref{lem:wstep} on $\tilde m_i$; the margins are not centred, so $z_i<0$ still means disagreement, and
$\beta$ cancels. The $\theta$-step keeps the raw margins. With $g(z)=\tanh(z/2)$, on the first pass a comparison is
down-weighted ($\hat w_i<1$) exactly when $z_i<-c$, with
\[
\tanh\tfrac c2=\frac{\lambda}{2+\lambda}\qquad\Longleftrightarrow\qquad \lambda=e^{c}-1 ,
\]
so we parametrize the penalty by the \emph{cut} $c$, in robust standard deviations, and the saturation
weight $w_{\mathrm{sat}}$ (Appendix~\ref{sec:tuning}). The standardization is also what lets the theory
require only the \emph{direction} of the initial fit (Section~\ref{sec:feasible}).

\subsection{The \texorpdfstring{$\theta$}{theta}-step and the full procedure}

With $w$ fixed, the $\theta$-step is a DPO gradient step in which comparison $i$ carries the coefficient
$(w_i-\tfrac12)g'(d_i)Y_i/r_i$ (Appendix~\ref{app:details}): the DPO weight at $w_i=1$, zero at
$w_i=\tfrac12$, and of opposite sign below $\tfrac12$, so a reversed comparison pushes the policy in the
corrected direction. The objective is invariant under negating every label and complementing every
weight; initializing at the DPO solution selects the label-trusting basin (Appendix~\ref{app:details}).

\begin{algorithm}[t]
\caption{RW-DPO}
\label{alg:rwdpo}
\begin{algorithmic}[1]
\Require preference data $\{(x_i,y_i^A,y_i^B,Y_i)\}_{i=1}^n$; reference
$\pi_{\mathrm{ref}}$; $\beta$; cut $c$ and saturation weight $w_{\mathrm{sat}}$ (so $\lambda=e^c-1$, $a=-\log w_{\mathrm{sat}}/\lambda$); link $g\in\mathcal G$
\State cache the reference log-probabilities
\State $\theta^{(0)}\gets$ DPO fit \Comment{selects the basin}
\State $w_i^{(0)}\gets1$ for all $i$
\For{$t=0,1,2,\dots$}
  \State $\xi_i\gets Y_id_i(\theta^{(t)})$, $\hat s\gets1.4826\,\mathrm{MAD}(\xi)$
  \State $m_i\gets g(\xi_i/\hat s)$ \Comment{$w$-step (Sec.~\ref{sec:zscale})}
  \State $\varpi_i\gets\big(\lambda+a^{-1}\log w_i^{(t)}\big)_+$
  \State $w_i^{(t+1)}\gets1$ if $m_i\ge0$, else
  \Statex \hspace{\algorithmicindent}$\min\{1,\varpi_i(1-m_i)/[2|m_i|(1+\varpi_i)]\}$
  \State $\theta^{(t+1)}\gets$ weighted DPO steps \Comment{raw margins}
\EndFor
\State \Return $\hat\theta$, $\hat w$, and the partition
$\{w_i=1\}$, $\{0<w_i<1\}$, $\{w_i=0\}$
\end{algorithmic}
\end{algorithm}

Algorithm~\ref{alg:rwdpo} costs one scalar formula per comparison per pass over DPO. Like iteratively
reweighted least squares with a MAD scale, it is defined by its update, which Section~\ref{sec:feasible} analyses
directly.

\section{Theoretical analysis}
\label{sec:theory}
Two questions decide whether per-comparison correction can work: how much one annotation can reveal about a
flip, and whether an estimator that learns the weights and the policy together keeps both on track.
Section~\ref{sec:info} answers the first with a sharp limit. Section~\ref{sec:feasible} answers the second for
the full-data alternating estimator, with the $\theta$-step solved to optimality, in a linear-index model with binary true weights: the direction stays consistent
(Theorem~\ref{thm:joint}) and the detectable flips are recovered (Theorem~\ref{thm:recovery}), with rates in
Corollary~\ref{cor:rates}. Supporting results are in Appendices~\ref{app:wstep} and~\ref{app:remarks}, proofs in
Appendix~\ref{app:proofs}.

\subsection{Information limit for reliability detection}
\label{sec:info}

At the true parameter write $d_i^*=d_i(\theta^*)$ and $g_i^*=g(d_i^*)$, so
$\Prob(Y_i=+1)=\tfrac12+(w_i^*-\tfrac12)g_i^*$; the endpoint states $w_i^*=1$ and $w_i^*=0$
give $Y_i=+1$ with probability $\tfrac{1\pm g_i^*}{2}$ (hypotheses $H_{1i}$, $H_{0i}$).

\begin{proposition}[Bayes error for endpoint flip detection]
\label{prop:bayes}
Suppose $\theta^*$ is known and consider testing $H_{1i}:w_i^*=1$ versus
$H_{0i}:w_i^*=0$ from one binary annotation, with equal prior probabilities. The
minimum probability of classification error is
\begin{equation}
\label{eq:bayes-error}
e_i^*=\frac{1-|g_i^*|}{2}.
\end{equation}
For the Bradley--Terry link $g(d)=\tanh(d/2)$, this becomes
\begin{equation}
\label{eq:bayes-error-bt}
e_i^*=\frac{1}{1+\exp(|d_i^*|)}.
\end{equation}
\end{proposition}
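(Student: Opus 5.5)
The plan is a direct application of the Neyman--Pearson/Bayes rule for two simple hypotheses with equal priors, specialized to a single Bernoulli observation.

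\textbf{Step 1: error of any decision rule.} Under $H_{1i}$ the annotation satisfies $\Prob(Y_i=+1)=\tfrac{1+g_i^*}{2}$, and under $H_{0i}$ it satisfies $\Prob(Y_i=+1)=\tfrac{1-g_i^*}{2}$. This follows from \eqref{eq:r}, or from the display preceding the proposition, with $w_i^*\in\{0,1\}$. A (possibly randomized) decision rule is a map $\delta:\{-1,+1\}\to[0,1]$ giving the probability of deciding $H_{0i}$. With equal priors its error probability is
\[
\tfrac12\sum_{y=\pm1}\Big[\delta(y)\,\Prob_{1}(Y_i=y)+(1-\delta(y))\,\Prob_{0}(Y_i=y)\Big].
\]

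\textbf{Step 2: minimize.} The expression is linear in each $\delta(y)$. It is therefore minimized pointwise by choosing the hypothesis with the larger likelihood at each $y$, which is the MAP rule. The minimum equals
\[
\tfrac12\sum_{y}\min\{\Prob_1(Y_i=y),\Prob_0(Y_i=y)\}.
\]
For $y=+1$ and for $y=-1$ the two candidates are $\tfrac{1\pm g_i^*}{2}$, so each minimum equals $\tfrac{1-|g_i^*|}{2}$. Summing over $y$ and multiplying by $\tfrac12$ gives $e_i^*=\tfrac{1-|g_i^*|}{2}$, which is \eqref{eq:bayes-error}.

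This can also be phrased as $e_i^*=\tfrac12\big(1-\TV(\Prob_1,\Prob_0)\big)$ with $\TV=|g_i^*|$. If $g_i^*=0$ the two laws coincide and every rule errs with probability $\tfrac12$, which is consistent with the formula. Only oddness of $g$ (so that the two hypotheses give $\pm g_i^*$) is used, so the first part holds for any $g\in\mathcal G$.

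\textbf{Step 3: Bradley--Terry form.} Substitute $|g_i^*|=\tanh(|d_i^*|/2)$, which uses that $\tanh$ is odd. Then
\[
\frac{1-\tanh(t/2)}{2}=1-\sigma(t)=\sigma(-t)=\frac{1}{1+e^{t}}
\]
with $t=|d_i^*|$, using $\sigma(t)=\tfrac{1+\tanh(t/2)}{2}$ from \eqref{eq:tanh}. This gives \eqref{eq:bayes-error-bt}.

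There is no real obstacle. The only point requiring care is to justify that randomized rules cannot beat the MAP rule, and the pointwise linearity argument in Step~2 settles this.
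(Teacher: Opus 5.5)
Your proof is correct and is essentially the paper's argument. The paper quotes the equal-prior Bayes error $\tfrac12\{1-\TV(P_{1i},P_{0i})\}$ with $\TV=|p_{1i}-p_{0i}|=|g_i^*|$ and then uses the same identity $\tfrac{1-\tanh(|d|/2)}{2}=1/(1+e^{|d|})$, whereas you derive that Bayes-error formula directly by pointwise minimization over randomized rules; one small quibble is that the $\tfrac{1\pm g_i^*}{2}$ laws come from setting $w_i^*\in\{0,1\}$ in the linear-weight model rather than from oddness of $g$, so the first part needs nothing of the link beyond $g_i^*\in(-1,1)$.
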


So no estimator can reliably classify a comparison with bounded margin; since
$e_i^*\asymp e^{-|d_i^*|}$, exact recovery needs margins of order $\log n$.

\begin{theorem}[Information-theoretic exact recovery]
\label{thm:info-recovery}
Suppose the comparisons are conditionally independent given their covariates,
true margins and endpoint labels $w^*$. If
\begin{equation}
\label{eq:info-condition}
\min_{1\le i\le n}|d_i^*|\ge(1+\epsilon)\log n
\end{equation}
for some $\epsilon>0$, then the coordinatewise Bayes-optimal tests recover all
endpoint labels with probability tending to one:
$\Prob(\widehat{\mathcal F}_{\mathrm{Bayes}}=\mathcal F^*)\to1$, where
$\mathcal F^*=\{i:w_i^*=0\}$. Conversely, let the endpoint labels be drawn
independently with $\Prob(w_i^*=0)=\tfrac12$ (the equal-prior setting of
Proposition~\ref{prop:bayes}). If there is a constant $M<\infty$ and a set
$\mathcal I_n$ with $|\mathcal I_n|\ge c_In$ for some $c_I>0$ such that
$|d_i^*|\le M$ for $i\in\mathcal I_n$, then every estimator $\widehat{\mathcal F}$
built from one annotation per comparison satisfies
$\Prob(\widehat{\mathcal F}=\mathcal F^*)\to0$.
\end{theorem}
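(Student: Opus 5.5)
The achievability half follows from Proposition~\ref{prop:bayes} and a union bound; the converse reduces, by conditional independence, to a product of coordinatewise success probabilities, each bounded below one on $\mathcal I_n$.

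\emph{Achievability.} The coordinatewise Bayes test for comparison $i$ declares $w_i^*=1$ when $Y_i=\sign(g_i^*)$ and $w_i^*=0$ otherwise. Whatever the true endpoint label, its error probability is $(1-|g_i^*|)/2$. By \eqref{eq:bayes-error-bt} this equals $1/(1+e^{|d_i^*|})\le e^{-|d_i^*|}$. A union bound over $i$ together with \eqref{eq:info-condition} gives
\[
\Prob(\widehat{\mathcal F}_{\mathrm{Bayes}}\ne\mathcal F^*)\le\sum_{i=1}^n e^{-|d_i^*|}\le n\cdot n^{-(1+\epsilon)}=n^{-\epsilon}\to0.
\]
Note that this bound holds conditionally on $w^*$, so no prior is needed here; conditional independence is not even required for the union bound.

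\emph{Converse.} Fix the covariates and margins; if they are random, argue conditionally on them. Under the prior $\Prob(w_i^*=0)=\tfrac12$ independently, and with the comparisons conditionally independent, the pairs $(w_i^*,Y_i)$ are independent across $i$. Hence the posterior of $w^*$ given $Y$ factorizes over coordinates. For any estimator, $\Prob(\widehat{\mathcal F}=\mathcal F^*)$ is at most the probability of success of the MAP estimator of the whole vector $w^*$. Because the posterior is a product, that MAP estimator is the coordinatewise MAP. Its success probability is therefore
\[
\prod_{i=1}^n(1-e_i^*)\le\prod_{i\in\mathcal I_n}(1-e_i^*).
\]
For $i\in\mathcal I_n$, $|d_i^*|\le M$ gives $e_i^*\ge e_M:=1/(1+e^{M})>0$. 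So the success probability is at most $(1-e_M)^{c_In}\to0$, uniformly over estimators. If covariates and margins are random, take expectations of this conditional bound; it is uniform, so the conclusion persists.

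\emph{Main obstacle.} The step needing care is the converse's reduction from arbitrary (possibly joint, randomized) estimators to the product of coordinatewise Bayes successes. I would state it as the standard fact that for exact-recovery 0--1 loss the MAP rule is optimal. Independence across $i$ then makes the joint MAP coordinatewise, with per-coordinate success $1-e_i^*$ exactly as in Proposition~\ref{prop:bayes}. One should also check that the equal-prior posterior for coordinate $i$ depends only on $Y_i$ and $d_i^*$, which holds because $d_i^*$ does not depend on $w^*$.
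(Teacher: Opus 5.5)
Your proof is correct and follows the paper's argument essentially step for step. For achievability you use a union bound on $e_i^*\le e^{-|d_i^*|}\le n^{-(1+\epsilon)}$. For the converse you use the optimality of the joint posterior mode, its factorization into coordinatewise Bayes tests under the product prior, and the bound $(1-c_0)^{c_In}$ with $c_0=1/(1+e^M)$. Your explicit check that the Bayes test errs with probability $(1-|g_i^*|)/2$ under either hypothesis is a useful addition: it justifies the paper's brief remark that achievability holds for every fixed $\mathcal F^*$.
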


\subsection{Consistency of the feasible estimator in the linear-index model}
\label{sec:feasible}

We now analyse the full-data alternating estimator --- weights from the \emph{fitted} model's
standardized margins (Section~\ref{sec:zscale}), alternated with the $\theta$-step, here solved to optimality;
the simulations of Section~\ref{sec:exp-linear} solve it by L-BFGS to convergence, with a small ridge in place of
the trust region (Appendix~\ref{app:sim-details}) --- in the
linear-index model $d_i(\theta)=\phi_i^\top\theta$, $\phi_i\in\Real^p$, $p$ fixed.

\paragraph{Setting.} Write the true parameter as $\theta^*_n=\kappa_n\vartheta$ with
$\|\vartheta\|=1$, so $\kappa_n$ is the signal strength, and put
$v_i=\phi_i^\top\vartheta$, so the true margin is $d_i^*=\kappa_n v_i$. A clean
annotation $\widetilde Y_i\in\{\pm1\}$ is drawn from the Bradley--Terry model,
$\Prob(\widetilde Y_i=+1\mid\phi_i)=\sigma(d_i^*)$, independently over $i$ given
the features; the reported label is $Y_i=\widetilde Y_i$ if $w_i^*=1$ and
$Y_i=-\widetilde Y_i$ if $w_i^*=0$. For a direction we write
$\angle(\theta,\vartheta)=\arccos(\theta^\top\vartheta/\|\theta\|)$, and we use $\sign(0)=0$.

\begin{assumption}[Linear index, growing signal, design]
\label{ass:linear}
The design satisfies the following.
\begin{enumerate}[label=(\roman*),leftmargin=*,nosep]
\item $\phi_1,\dots,\phi_n$ are i.i.d., $\|\phi_i\|\le B$, and $p$ is fixed.
\item $\kappa_n/\log n\to\infty$ and $\kappa_n\le n$.
\item $\Prob(|v|\le b)\le \bar f\,b$ for $0\le b\le b_0$ (little mass near the boundary).
\item \emph{(direction is identified by the sign pattern)}
$e(u):=\Prob\{\sign(\phi^\top u)\ne\sign(\phi^\top\vartheta)\}\ge c_*\,\angle(u,\vartheta)$
for every $u\ne0$, with $c_*>0$.
\item $\phi$ has a density with respect to Lebesgue measure on $\Real^p$.
\end{enumerate}
\end{assumption}

\begin{assumption}[Endpoint contamination]
\label{ass:endpoint}
$w_i^*\in\{0,1\}$; the flip set $\mathcal F^*=\{i:w_i^*=0\}$ is fixed given the
features and is otherwise arbitrary (it may depend on the features). The
estimator itself is unrestricted, $w_i\in[0,1]$.
\end{assumption}

Conditions (iii)--(v) hold, e.g., for a design uniform on a ball, where $e(u)=\angle(u,\vartheta)/\pi$.

\paragraph{The estimator.} For $\theta\ne0$ let $\xi_i(\theta)=Y_i\phi_i^\top\theta$,
let $\hat s(\theta)=1.4826\,\mathrm{med}_i\,|\xi_i(\theta)-\mathrm{med}_j\,\xi_j(\theta)|$
be the robust scale of Section~\ref{sec:zscale} (any median convention with at least half the values on each side, e.g.\ either middle order
statistic or their average; the bounds use only that a median lies between the smallest and the largest value, and the
positivity of $\hat s$ uses the half-on-each-side property), and $z_i(\theta)=\xi_i(\theta)/\hat s(\theta)$
(not centred). The penalty
constants $(\lambda_n,a_n)$ are indexed by a \emph{cut} $c_n>0$ and a fixed saturation
weight $w_{\mathrm{sat}}\in(0,1)$,
$\lambda_n=e^{c_n}-1$ (so $\tfrac{\lambda_n}{2+\lambda_n}=\tanh\tfrac{c_n}{2}$) and
$a_n=\log(1/w_{\mathrm{sat}})/\lambda_n$,
so on the first pass a comparison is down-weighted ($\hat w_i<1$) exactly when $z_i<-c_n$. We analyse two weight steps at the agreements $\tilde m_i=\tanh\{z_i(\theta)/2\}$ from the current
weights $w$: one LLA update, $W_1(\theta,w)$, and its limit at fixed $\theta$, $W_\infty(\theta,w)$. The $\theta$-step returns a global minimizer
of the likelihood part of \eqref{eq:obj} over a trust region,
$\hat\theta(w)\in\argmin_{\|\theta\|\le R_n}L_n(\theta;w)$, where
\[
L_n(\theta;w)=-\frac1n\sum_{i=1}^n\log\Big[\tfrac12+\big(w_i-\tfrac12\big)\tanh\tfrac{Y_i\phi_i^\top\theta}{2}\Big],
\]
on the \emph{raw} margins, with $R_n\ge\kappa_n$ (e.g.\ $R_n=n$); the ball plays the role of the KL
anchor. Lemma~\ref{lem:thetastep}, Theorem~\ref{thm:joint} and Theorem~\ref{thm:recovery}(a)--(b) use only one
property of this step: the refit's loss is at most that of the competitor $\theta_{\mathrm{ref}}=\{\log(1/b)/b\}\vartheta$
of Lemma~\ref{lem:thetastep}, $L_n(\hat\theta;w)\le L_n(\theta_{\mathrm{ref}};w)$, which any $\theta$-step reaching
that loss satisfies. Global minimality is used only in Theorem~\ref{thm:recovery}(c) and
Corollary~\ref{cor:oracle}, whose conclusions concern the minimizer itself. From an initial fit
$\theta^{(0)}\ne0$ and $w^{(0)}=\mathbf 1$, RW-DPO iterates
$w^{(k)}=W(\theta^{(k-1)},w^{(k-1)})$, $\theta^{(k)}=\hat\theta(w^{(k)})$, $k=1,2,\dots$, with
$W\in\{W_1,W_\infty\}$ ($W_\infty$ is well defined from any start, since the update is monotone).

\begin{assumption}[Tuning and initial fit]
\label{ass:init}
$c_n\to0$; and $\angle(\theta^{(0)},\vartheta)\to0$ in probability.
\end{assumption}

The initial fit need only point in the right direction; plain DPO does so when flips are
independent of the features and the design is symmetric, whereas a feature-dependent minority of
flips can break it (Remark~\ref{rem:init}, Appendix~\ref{app:remarks}).

\paragraph{Detectable comparisons.} Put $\alpha_n=\sqrt{\lambda_n}$, the largest weight a detected flip can keep
after one pass, and $c_n^+=2\,\mathrm{artanh}\,\alpha_n$, the disagreement that guarantees it; both tend to $0$ with $c_n$. For a direction error
$\delta\ge0$ let
\[
b_n(\delta)=\max\Big\{\frac{(1+\epsilon)\log n}{\kappa_n},\,B\delta+3Bc_n^+\Big\},
\]
$\mathcal D_n(\delta)=\{i:|v_i|>b_n(\delta)\}$
for a fixed $\epsilon>0$, and $\gamma_n(\delta)=|\mathcal D_n(\delta)^c|/n$: the first term is the
information-theoretic scale of Theorem~\ref{thm:info-recovery}, the second the cost of the fit's
direction error and the cut. Let the event that every clear-cut clean label is correct be
$G_n=\{\widetilde Y_i=\sign(v_i)\text{ for every }i\text{ with }\kappa_n|v_i|\ge(1+\epsilon)\log n\}$;
by the union bound $\Prob(G_n^c)\le n^{-\epsilon}$.

The proofs rest on two lemmas (Appendix~\ref{app:remarks}): at a fit within angle $\delta$ of $\vartheta$ the
weight step is correct on $\mathcal D_n(\delta)$, and with such weights the $\theta$-step misorders few comparisons.

\begin{theorem}[Estimation consistency, non-asymptotic]
\label{thm:joint}
Let Assumptions~\ref{ass:linear} and~\ref{ass:endpoint} hold and write $\delta_k=\angle(\theta^{(k)},\vartheta)$. Fix $K\ge1$, $\zeta\in(0,1)$ and
$\bar\delta_0>0$, let $\eta_n=C_pn^{-1/2}+\{\log(2/\zeta)/(2n)\}^{1/2}$ with $C_p$ depending only on $p$, and set
$\bar\delta_k=F_n(\bar\delta_{k-1})$, where
\[
F_n(\delta)=\frac{(1+\bar f)\,b_n(\delta)\log\frac{4}{b_n(\delta)}+2\alpha_n+\eta_n\log(8n)}{c_*\log2}.
\]
Suppose $n\ge2p+2$ (ensures $\hat s>0$, and $n\ge3$), $\lambda_n\le w_{\mathrm{sat}}^2$, $\alpha_n<\frac12$, $b_n(\bar\delta_k)\le\min\{e^{-1},b_0\}$
for $0\le k<K$ and $\bar\delta_k<\pi$ for $k\le K$. Then for either weight step there is an event $E_n$ with
$\Prob(E_n)\ge1-n^{-\epsilon}-2\zeta-\Prob(\delta_0>\bar\delta_0)$ on which, for $1\le k\le K$,
\[
\delta_k\le\bar\delta_k,\qquad
\tfrac1n\textstyle\sum_i|w_i^{(k)}-w_i^*|\le\bar f\,b_n(\bar\delta_{k-1})+\eta_n+\alpha_n .
\]
Under Assumption~\ref{ass:init}, for suitable $\bar\delta_{0,n},\zeta_n\to0$, $\Prob(E_n)\to1$ and both bounds
tend to zero.
\end{theorem}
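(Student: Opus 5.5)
The proof is an induction on $k$ that alternates the two supporting lemmas of Appendix~\ref{app:remarks}: the weight-step lemma (at a fit within angle $\delta$ of $\vartheta$, the weight step is correct on $\mathcal D_n(\delta)$) and the $\theta$-step lemma (Lemma~\ref{lem:thetastep}: with such weights the refit misorders few comparisons). Everything is carried out on one good event $E_n$, fixed before the induction starts. Let
\[
E_n=G_n\cap\{\delta_0\le\bar\delta_0\}\cap U_n^{(1)}\cap U_n^{(2)} .
\]
Here $U_n^{(1)}$ is a uniform concentration event over halfspaces $\{\phi:\phi^\top u>t\}$ (VC dimension $p+1$, hence the constant $C_p$). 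On it, empirical frequencies such as $\tfrac1n\#\{i:\sign(\phi_i^\top u)\ne\sign(v_i)\}$ and $\tfrac1n\#\{i:|v_i|\le b\}$ are within $\eta_n$ of their population values $e(u)$ and $\Prob(|v|\le b)$, simultaneously in $u$ and $b$. $U_n^{(2)}$ is the analogous event for the second application, with its own $\zeta$. The VC/DKW bounds give $\Prob(U_n^{(j)c})\le\zeta$, and $\Prob(G_n^c)\le n^{-\epsilon}$ was noted above. This yields the stated lower bound on $\Prob(E_n)$. The point of uniformity is that $\theta^{(k)}$ is data-dependent, so no fresh concentration is needed at each iteration.

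\emph{Weight step.} Suppose that on $E_n$ we have $\delta_{k-1}\le\bar\delta_{k-1}$. First control the scale: $\hat s(\theta^{(k-1)})>0$ because $n\ge2p+2$ and $\phi$ has a density, so no more than $p$ of the $\xi_i$ can tie at any value. Scaling $\theta$ to unit norm, $\hat s\le 2B$ up to the constant $1.4826$, since the median lies between the smallest and largest $|\xi_i|\le B\|\theta\|$. Take $i\in\mathcal D_n(\bar\delta_{k-1})$. Then $|v_i|>B\bar\delta_{k-1}+3Bc_n^+$, so $\phi_i^\top\theta^{(k-1)}/\|\theta^{(k-1)}\|$ has the sign of $v_i$ and magnitude at least $|v_i|-B\delta_{k-1}$. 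Because $\kappa_n|v_i|\ge(1+\epsilon)\log n$, on $G_n$ we have $\widetilde Y_i=\sign(v_i)$, and so $Y_i=\pm\sign(v_i)$ according to whether $w_i^*=1$ or $0$. Hence clean comparisons satisfy $z_i>0$ and receive $\hat w_i=1$ by Lemma~\ref{lem:wstep}. Flipped comparisons satisfy $z_i<-c_n^+$, and the lemma gives $\hat w_i\le\alpha_n$ after a single LLA pass. Under $\lambda_n\le w_{\mathrm{sat}}^2$ this weight is already below $w_{\mathrm{sat}}$, so $W_\infty$ drives it to $0$ (Lemma~\ref{lem:lla-limit}). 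Off $\mathcal D_n$ the error is at most $1$ per comparison, and by Assumption~\ref{ass:linear}(iii) together with $U_n^{(1)}$ there are at most $n(\bar f b_n+\eta_n)$ such comparisons. This gives the weight bound $\bar f\,b_n(\bar\delta_{k-1})+\eta_n+\alpha_n$.

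\emph{$\theta$-step.} This is the main obstacle. We need to turn the loss comparison $L_n(\theta^{(k)};w^{(k)})\le L_n(\theta_{\mathrm{ref}};w^{(k)})$ into an angle bound.

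The competitor $\theta_{\mathrm{ref}}=\{\log(1/b)/b\}\vartheta$, with $b=b_n$, pays at most $\log 2$ per comparison off $\mathcal D_n$ (weights near $\tfrac12$ or wrong) and roughly $b$ per correctly weighted comparison with $|v_i|>b$. Its total loss is therefore bounded by about $(1+\bar f)b\log(4/b)+2\alpha_n$.

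On the other side, every comparison on which the sign of $\phi_i^\top\theta^{(k)}$ disagrees with $\sign(v_i)$ and the weight is correctly at an endpoint costs $\theta^{(k)}$ at least $\log 2$. The weight-step bound shows this misordering occurs on all but $n(\bar f b+\eta_n+\alpha_n)$ of the misordered set. Combining the two sides, the empirical sign-disagreement rate is at most the expression above divided by $\log 2$. The factor $\log(8n)$ multiplying $\eta_n$ absorbs the unboundedness of $-\log r_i$ at $w_i=0$: we clip at $\log(8n)$, which is legitimate since $\|\theta\|\le R_n\le n$ and $\tanh$ saturates. Passing to the population via $U_n^{(2)}$ and then applying Assumption~\ref{ass:linear}(iv), $e(u)\ge c_*\angle$, gives $\delta_k\le F_n(\bar\delta_{k-1})=\bar\delta_k$. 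The hypotheses $b_n\le\min\{e^{-1},b_0\}$ and $\bar\delta_k<\pi$ keep every step inside the range where (iii), (iv) and the concavity bound on $b\log(1/b)$ apply. The delicate part is making this per-comparison accounting rigorous for weights in the interior, in $(\alpha_n,1)$, which requires bounding $-\log r_i$ by its two endpoint cases.

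\emph{Asymptotics.} Under Assumption~\ref{ass:init}, take $\bar\delta_{0,n}\to0$ slowly enough that $\Prob(\delta_0>\bar\delta_{0,n})\to0$, and take $\zeta_n=1/n$. Then $b_n(\delta)\to0$ whenever $\delta\to0$, because $\kappa_n/\log n\to\infty$ and $c_n^+\to0$. Also $\alpha_n\to0$ and $\eta_n\log n\to0$. For fixed $K$, $F_n$ iterated $K$ times maps null sequences to null sequences, so all side conditions hold eventually and both bounds vanish.
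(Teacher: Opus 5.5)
Your skeleton matches the paper's proof. You use a single good event: $G_n\cap\{\delta_0\le\bar\delta_0\}$, a uniform bound on sign-disagreement frequencies, and a DKW bound on $\#\{i:|v_i|\le b\}$. On it you run an induction in which Lemma~\ref{lem:wstep-uniform} fixes the weights on $\mathcal D_n$, the competitor $\theta_{\mathrm{ref}}$ bounds the misordering, and Assumption~\ref{ass:linear}(iv) converts that into an angle. Your weight step and weight-error bound are correct, and applying the lemma on $\mathcal D_n(\bar\delta_{k-1})$ directly spares you the monotonicity of $F_n$.

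The gap is the $\theta$-step. You re-derive it instead of taking it from Lemma~\ref{lem:thetastep}, and the accounting that is supposed to produce $F_n$ fails in several places.

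\emph{Off $\mathcal D_n$ the competitor does not pay at most $\log 2$.} A pair with $|v_i|\le b$ and a wrong weight (an undetected flip at $w_i=1$, or a clean close call pushed to $w_i=0$) costs up to $|\phi_i^\top\theta_{\mathrm{ref}}|+\log2\le\log(1/b)+\log2$. That cost, times $\gamma\le\bar f b+\eta_n$ and together with the additive $\gamma$, is where $\bar f b\log(4/b)$ comes from.

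\emph{The $\eta_n\log(8n)$ term has nothing to do with clipping.} The $\theta$-step minimizes the unclipped $L_n$, and $\|\theta\|\le n$ only gives $-\log r_i\le Bn+\log2$. The term is $\eta_n(\log n+3\log2)$. The $\eta_n n$ excess pairs outside $\mathcal D_n$ each cost the competitor at most $\log(1/b)+\log2\le\log n+\log2$. One further $\eta_n\log2$ comes from the additive $\gamma$, and another from passing from $\hat e$ to $e$. The inequality $\log(1/b)\le\log n$ is exactly where $\kappa_n\le n$ enters, via $b\ge(1+\epsilon)\log n/\kappa_n$. The same inequality gives $\|\theta_{\mathrm{ref}}\|=\log(1/b)/b\le\kappa_n\le R_n$, i.e.\ feasibility of the competitor, which you need and never check.

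\emph{Your lower bound fails under $W_1$.} You require a misordered pair's weight to be ``correctly at an endpoint'', with at most $n(\bar f b+\eta_n+\alpha_n)$ exceptions. At the first pass $\varpi_i=\lambda_n>0$, so every detected flip gets $0<w_i^{(1)}\le\alpha_n$. The exceptions can then be all of $\mathcal F^*$; an $\ell_1$ bound of $\alpha_n$ on the weight error does not bound a count. The missing observation, which is the core of Lemma~\ref{lem:thetastep}, is this. For $i\in\mathcal D_n$, on $G_n$, write $s_i=\widetilde Y_i\phi_i^\top\theta$ and $\alpha_i=|w_i-w_i^*|\le\alpha_n<\frac12$. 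Then $r_i=\alpha_i+(1-2\alpha_i)\sigma(s_i)$, so $r_i\le\frac12$ whenever $s_i\le0$. Hence every misordered detectable pair costs at least $\log2$ whatever its exact weight, and the only exceptions are the $\gamma n$ pairs outside $\mathcal D_n$. Interior weights in $(\alpha_n,1)$ occur only there, so your ``delicate part'' needs no separate treatment. The same identity gives the $\log\frac1{1-\alpha_n}\le2\alpha_n$ term on the competitor's side.

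\emph{You never show $\theta^{(k)}\ne0$.} This is needed for $\delta_k$ to be defined and for the weight-step lemma to apply at the next pass. It follows because $\bar\delta_k<\pi$ and $c_*\le1/\pi$ (take $u=-\vartheta$ in (iv)) push the misordering bound below $1$. Hence $L_n(\theta_{\mathrm{ref}};w^{(k)})<\log2=L_n(0;w^{(k)})$.
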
 Passes keep a consistent direction and correct the weights. The bounds $\bar\delta_k$
themselves grow with $k$: Assumption~\ref{ass:linear}(iii)--(iv) force $\bar f\ge c_*/B$, so
$F_n(\delta)\ge\delta\log(4e)/\log2\approx3.4\,\delta$ whenever $b_n(\delta)\le e^{-1}$. The theorem therefore
covers a fixed number of passes $K$ and does not show that further passes improve the direction.

\begin{theorem}[Recovery consistency]
\label{thm:recovery}
Let $\tilde b_k=b_n(\bar\delta_{k-1})$. Parts (a)--(b) hold under the hypotheses of Theorem~\ref{thm:joint}, on its
event $E_n$, when $\rho_0n\le|\mathcal F^*|\le(1-\rho_0)n$ with $\rho_0\in(0,\frac12]$. Part (c) holds under weaker
hypotheses, without Assumption~\ref{ass:linear}(iii)--(iv): Assumption~\ref{ass:linear}(i),(v), $\kappa_n\le n$,
Assumption~\ref{ass:endpoint}, $n\ge2p+2$, $\lambda_n\le w_{\mathrm{sat}}^2$, $\alpha_n\le\frac14$ (needed only
under $W_1$) and $b_n(\bar\delta_0)\le e^{-1}$, on $G_n\cap\{\delta_0\le\bar\delta_0\}$. For $1\le k\le K$:
\begin{enumerate}[label=(\alph*),leftmargin=*,nosep]
\item \emph{(flags, $w<1$)} clean comparisons with $|v_i|>\tilde b_k$ have $w_i^{(k)}=1$ and flips with
$|v_i|>\tilde b_k$ have $w_i^{(k)}\le\alpha_n<\tfrac12$; the share of flips left unflagged and the share of clean
comparisons flagged are each at most $(\bar f\tilde b_k+\eta_n)/\rho_0$.
\item \emph{(reversal, $w=0$)} flips with $|v_i|>\tilde b_k$ have $w_i^{(k)}=0$ under $W_\infty$, and so, for
$k\ge2$, do flips with $|v_i|>\max(\tilde b_k,\tilde b_{k-1})$ under $W_1$; the share of flips not reversed is at most
$(\bar f\tilde b_k+\eta_n)/\rho_0$ under $W_\infty$ and, for $k\ge2$, $(\bar f\max(\tilde b_k,\tilde b_{k-1})+\eta_n)/\rho_0$
under $W_1$.
\item \emph{(exact recovery; scope in Appendix~\ref{app:remarks})}
if every comparison is detectable, $\min_i|v_i|>b_n(\bar\delta_0)$, then $w^{(1)}=w^*$ under $W_\infty$ and
$\theta^{(1)}$ is an oracle (corrected-label) fit; under $W_1$ this holds at the second pass if also
$\min_i|v_i|>b_n(\delta_1)$.
\end{enumerate}
Under Assumption~\ref{ass:init} the bounds in (a)--(b) tend to zero: flagging and reversal are consistent.
\end{theorem}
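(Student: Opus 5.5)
The plan is to work throughout on the event $E_n$ of Theorem~\ref{thm:joint}, which already contains $G_n$ and $\{\delta_0\le\bar\delta_0\}$ and gives $\delta_{k-1}\le\bar\delta_{k-1}$. Since $b_n(\cdot)$ is nondecreasing in $\delta$, this yields $\mathcal D_n(\bar\delta_{k-1})\subseteq\mathcal D_n(\delta_{k-1})$. Parts (a) and (b) should then follow from the weight-step lemma of Appendix~\ref{app:remarks} (correctness of the $w$-step on $\mathcal D_n(\delta)$ at a fit within angle $\delta$), applied at $\theta^{(k-1)}$, together with a counting argument for the undetectable set.

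\textbf{(a) Flags.} Take $i$ with $|v_i|>\tilde b_k$. Because $\kappa_n|v_i|\ge(1+\epsilon)\log n$, on $G_n$ we have $\widetilde Y_i=\sign(v_i)$, so $Y_i=\pm\sign(v_i)$ according as $w_i^*=1$ or $0$.

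The direction error contributes at most $B\delta_{k-1}\|\theta\|$ to $|\phi_i^\top\theta/\|\theta\|-v_i|$. I would bound the MAD scale $\hat s$ above by a constant times $B\|\theta\|$. The extra margin $3Bc_n^+$ in $b_n$ then makes $z_i>0$ for clean $i$ and $z_i<-c_n^+$ for flips. This is exactly what the lemma packages.

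From this, Lemma~\ref{lem:wstep} at $m=\tilde m_i$ does the work:
\begin{itemize}
\item clean $i$ have $m\ge0$, hence $w=1$;
\item flips have $|m|\ge\alpha_n$ and $\varpi\le\lambda_n$, so $w\le\lambda_n(1+\alpha_n)/(2\alpha_n)\le\alpha_n$ using $\alpha_n^2=\lambda_n$ and $\alpha_n<\tfrac12$.
\end{itemize}
For the same $m$, the $W_\infty$ limit is monotone below the one-step value, so the bound holds for both steps.

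The shares then follow by counting. Every misclassified comparison lies in $\mathcal D_n(\tilde b)^c$. Its empirical share is bounded via Assumption~\ref{ass:linear}(iii) by $\bar f\tilde b_k$ plus a uniform DKW/VC deviation $\eta_n$ over thresholds, on the $\zeta$-part of $E_n$. Dividing by $|\mathcal F^*|/n\ge\rho_0$ (respectively the clean share $\ge\rho_0$) gives the stated bound.

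\textbf{(b) Reversal.} Under $W_\infty$ I would invoke Lemma~\ref{lem:lla-limit}. Iterating LLA at fixed $\theta$ from $w\le\alpha_n$, the condition $\lambda_n\le w_{\mathrm{sat}}^2$ should force the weight below $w_{\mathrm{sat}}$ within a step, because $\alpha_n=\sqrt{\lambda_n}\le w_{\mathrm{sat}}$. At that point $\varpi=0$ and Lemma~\ref{lem:wstep} sets $w=0$.

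Under $W_1$ only one LLA update is taken per pass. The weight carried from pass $k-1$ must already satisfy $w^{(k-1)}_i\le\alpha_n\le w_{\mathrm{sat}}$, which requires $|v_i|>\tilde b_{k-1}$. With $\varpi_i=0$ and $m<0$ at pass $k$, which requires $|v_i|>\tilde b_k$, the update gives $w^{(k)}_i=0$. This explains the $\max(\tilde b_k,\tilde b_{k-1})$ and the restriction $k\ge2$. The counting is as in (a).

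\textbf{(c) Exact recovery.} If $\min_i|v_i|>b_n(\bar\delta_0)$, the same per-comparison argument on $G_n\cap\{\delta_0\le\bar\delta_0\}$ gives $w^{(1)}=w^*$ under $W_\infty$. This uses only (i), (v) and the scale bound, not (iii)--(iv).

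With $w=w^*\in\{0,1\}$, the $\theta$-step loss becomes $-\tfrac1n\sum\log\sigma(\widetilde Y_i\phi_i^\top\theta)$ with corrected labels $\widetilde Y_i$. Hence the global minimizer $\theta^{(1)}$ is the oracle fit.

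For $W_1$, the first pass gives clean $w=1$ and flips $w\le\alpha_n\le w_{\mathrm{sat}}$. The condition $\alpha_n\le\tfrac14$ is used here. The second pass at $\theta^{(1)}$, which is detectable when $\min_i|v_i|>b_n(\delta_1)$, then sends the flips to $0$. The condition $n\ge2p+2$ together with the density assumption (v) ensures $\hat s>0$ almost surely. The final consistency claim is immediate because $\tilde b_k,\eta_n\to0$ under Assumption~\ref{ass:init}.

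\textbf{Main obstacle.} I expect the hardest step to be controlling the MAD scale $\hat s(\theta)$ uniformly. We need $c_n^+\hat s$ to be comparable to $Bc_n^+\|\theta\|$ so that the margin $3Bc_n^+$ suffices, and $\hat s$ depends on all labels, including the flips. My first attempt would use only that the medians lie between the extreme values, which gives $\hat s\le 1.4826\cdot2B\|\theta\|$. I would then absorb the constant into the factor $3$ defining $b_n$.
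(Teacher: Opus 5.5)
Your argument for (a), (b) and the final limits is the paper's proof. On $E_n$, monotonicity of $b_n$ puts $\{i:|v_i|>\tilde b_k\}$ inside $\mathcal D_n(\delta_{k-1})$. Lemma~\ref{lem:wstep-uniform} at $\theta^{(k-1)}$ (the bound $\hat s<3B$, then the closed-form update) gives the weights. The $W_1$ reversal uses the carried weight $w_i^{(k-1)}\le\alpha_n\le w_{\mathrm{sat}}$ from $\mathcal D_n(\delta_{k-2})$, and the counting uses $H_n'$ with Assumption~\ref{ass:linear}(iii). One small inaccuracy: $W_\infty$ is not in general below the one-step value, since from a weight with $T(w)>w$ the iterates increase. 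What you actually need, and what holds, is that every iterate for a detectable flip is at most $\alpha_n\le w_{\mathrm{sat}}$, so the next one is exactly $0$.

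The one genuine omission is in (c) under $W_1$. The second pass applies the weight step at $\theta^{(1)}$. The standardized margins, Lemma~\ref{lem:wstep-uniform} and $\delta_1$ are all defined only for $\theta^{(1)}\ne0$. Part (c) is stated on $G_n\cap\{\delta_0\le\bar\delta_0\}$, not on $E_n$, so nothing you invoke rules out $\theta^{(1)}=0$. This is exactly where $\alpha_n\le\frac14$ is used. It is not needed for $w_i^{(1)}\le\alpha_n\le w_{\mathrm{sat}}$ on flips, which follows from $\lambda_n\le w_{\mathrm{sat}}^2$ alone. It is also the only place the hypotheses $\kappa_n\le n$ and $b_n(\bar\delta_0)\le e^{-1}$ enter, and your (c) uses neither.

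The fix is as follows. With every comparison detectable, $\gamma_n(\delta_0)=0$. The conditions $b=b_n(\delta_0)\le e^{-1}$, $b\ge(1+\epsilon)\log n/\kappa_n$ and $\kappa_n\le n$ give $\log(1/b)/b\le\kappa_n\le R_n$, so $\theta_{\mathrm{ref}}=\{\log(1/b)/b\}\vartheta$ is feasible. The upper-bound step of Lemma~\ref{lem:thetastep} then gives $L_n(\theta_{\mathrm{ref}};w^{(1)})\le b+\log\frac{1}{1-\alpha_n}\le e^{-1}+\log\frac43<\log2=L_n(0;w^{(1)})$, so the minimizer $\theta^{(1)}$ is nonzero. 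Under $W_\infty$ the weights are exactly $w^*$ and the $\log\frac{1}{1-\alpha_n}$ term vanishes, so $e^{-1}<\log2$ suffices. That is why $\alpha_n\le\frac14$ is needed only under $W_1$.
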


\begin{corollary}[Rates under a small cut]
\label{cor:rates}
In Theorem~\ref{thm:joint} let $\bar\delta_0=\bar\delta_{0,n}$ be deterministic, fix $\zeta$ and $K$, put
$r_n=\log n/\kappa_n+\bar\delta_{0,n}+n^{-1/2}\log n$, and suppose $r_n\to0$ and $c_n\le Cr_n^2$. Then for large
$n$ the hypotheses of Theorem~\ref{thm:joint} hold and, on $E_n$, $\bar\delta_k=O(r_n\log^k(1/r_n))$ while the
weight error and the bounds of Theorem~\ref{thm:recovery}(a)--(b) are $O(r_n\log^{k-1}(1/r_n))$, $1\le k\le K$,
with constants depending only on $k,B,\bar f,c_*,C,C_p,\epsilon,\zeta,\rho_0$. The saturation weight enters
only through $w_{\mathrm{sat}}\ge(e^{c_n}-1)^{1/2}\asymp c_n^{1/2}$.
\end{corollary}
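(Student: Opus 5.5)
The corollary follows by plugging the stated scalings into the recursion $\bar\delta_k=F_n(\bar\delta_{k-1})$ of Theorem~\ref{thm:joint} and propagating an $O(\cdot)$ bound by induction on $k$. The plan has three steps: bound the auxiliary quantities $\alpha_n$, $c_n^+$ and $\eta_n\log(8n)$ in terms of $r_n$; check the hypotheses of Theorem~\ref{thm:joint} for large $n$; then read off the weight error and the shares in Theorem~\ref{thm:recovery}(a)--(b) from $\tilde b_k=b_n(\bar\delta_{k-1})$.

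\textbf{Auxiliary quantities.} Since $c_n\le Cr_n^2\to0$, we have $\lambda_n=e^{c_n}-1\le 2Cr_n^2$ for large $n$. Hence $\alpha_n=\sqrt{\lambda_n}\le\sqrt{2C}\,r_n$. Because $\operatorname{artanh}x\le 2x$ for $x\le\frac12$, also $c_n^+=2\operatorname{artanh}\alpha_n\le4\alpha_n=O(r_n)$. With $\zeta$ fixed, $\eta_n\log(8n)=O(n^{-1/2}\log n)=O(r_n)$. Finally $(1+\epsilon)\log n/\kappa_n\le(1+\epsilon)r_n$.

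\textbf{Induction.} I claim $b_n(\delta)=O(r_n+\delta)$, since $b_n(\delta)\le(1+\epsilon)\log n/\kappa_n+B\delta+3Bc_n^+$. The map $b\mapsto b\log(4/b)$ is increasing on $(0,4/e)$, so for $b=O(s)$ with $s\to0$ it gives $b\log(4/b)=O(s\log(1/s))$.

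For $k=1$: $b_n(\bar\delta_{0,n})=O(r_n)$, because $\bar\delta_{0,n}\le r_n$. Therefore
\[
F_n(\bar\delta_{0,n})=O\big(r_n\log(1/r_n)+\alpha_n+\eta_n\log(8n)\big)=O\big(r_n\log(1/r_n)\big).
\]
Inductively, suppose $\bar\delta_{k-1}=O(r_n\log^{k-1}(1/r_n))$. Then $b_n(\bar\delta_{k-1})=O(r_n\log^{k-1}(1/r_n))$, and the logarithm factor adds one more power of $\log(1/r_n)$. Here I use
\[
\log\frac{1}{r_n\log^{k-1}(1/r_n)}\le\log\frac{1}{r_n}
\]
once $\log(1/r_n)\ge1$. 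This gives $\bar\delta_k=O(r_n\log^k(1/r_n))$ for $k\le K$, with constants depending only on $k,B,\bar f,c_*,C,C_p,\epsilon,\zeta$.

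\textbf{Hypotheses of Theorem~\ref{thm:joint}.} Each bound above tends to zero because $r_n\log^K(1/r_n)\to0$. Hence for large $n$:
\begin{itemize}
\item $b_n(\bar\delta_k)\le\min\{e^{-1},b_0\}$ for $k<K$;
\item $\bar\delta_k<\pi$;
\item $\alpha_n<\tfrac12$;
\item $n\ge2p+2$.
\end{itemize}
The remaining condition $\lambda_n\le w_{\mathrm{sat}}^2$ is exactly $w_{\mathrm{sat}}\ge(e^{c_n}-1)^{1/2}\asymp c_n^{1/2}$. This is the only place the saturation weight enters, as stated.

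\textbf{Weight error and recovery bounds.} On $E_n$ the weight error is at most $\bar f\,b_n(\bar\delta_{k-1})+\eta_n+\alpha_n$. By the induction this is $O(r_n\log^{k-1}(1/r_n))$, since $\eta_n,\alpha_n=O(r_n)$.

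The bounds in Theorem~\ref{thm:recovery}(a)--(b) are $(\bar f\tilde b_k+\eta_n)/\rho_0$, or use $\max(\tilde b_k,\tilde b_{k-1})$ under $W_1$. Because $\tilde b_k$ dominates $\tilde b_{k-1}$ in order, both are also $O(r_n\log^{k-1}(1/r_n))$; this step brings in the dependence on $\rho_0$.

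\textbf{Main obstacle.} The one delicate point is the monotonicity and order bookkeeping of $b\log(4/b)$ at each step. One must keep every $b$ below $4/e$, which holds since $b\le e^{-1}$, so that substituting an upper bound for $b$ is legitimate. Everything else is routine.
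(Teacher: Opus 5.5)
Your proposal is correct and follows the paper's proof essentially step for step. It uses the same estimates $\lambda_n\le 2c_n$, $\alpha_n\le(2C)^{1/2}r_n$, $c_n^+\le 4\alpha_n$ and $\eta_n\log(8n)=O(r_n)$, the same induction through the monotonicity of $b\mapsto b\log(4/b)$, and the same reading-off of the weight error and of the $\rho_0^{-1}$-scaled bounds of Theorem~\ref{thm:recovery}(a)--(b), with $w_{\mathrm{sat}}$ entering only through $\lambda_n\le w_{\mathrm{sat}}^2$. One small precision: monotonicity is applied to the upper bound $O(r_n\log^{k-1}(1/r_n))$, so it is that bound, not $b$ itself, that must lie below $4/e$; this holds for large $n$ because the bound tends to zero.
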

A larger cut, of order $\log n/\kappa_n+\delta_0$, keeps clean close calls at $w_i=1$ (Appendix~\ref{app:remarks}).

\begin{remark}[Reversal and down-weighting regimes]
\label{rem:regimes}
With a shrinking cut detected flips are \emph{reversed}: the regime of the guarantees.
At a fixed cut and a fixed agreement $\tilde m_i=-t$, the weight updates settle at an interior weight
$0<w_i<1$ exactly when $\lambda/(2+\lambda)<t\le\tau^{\mathrm{LLA}}$, a band whose upper end has a closed
form (Lemma~\ref{lem:lla-limit}); beyond it the weight reaches $0$. Such a
comparison is \emph{down-weighted} rather than reversed, the regime of the language-model runs. The guarantees concern the direction; the scale is identified only by close calls
(Remark~\ref{rem:scale}, Appendix~\ref{app:remarks}).
\end{remark}

\begin{proposition}[The $W_\infty$ step at a fixed cut]
\label{prop:fixedcut}
For the $W_\infty$ step, Theorem~\ref{thm:joint} and Theorem~\ref{thm:recovery}(a)--(c) remain true without
$\lambda_n\le w_{\mathrm{sat}}^2$ and without a bound on $\alpha_n$, provided condition \eqref{eq:Cinf} holds,
i.e.\ $\tau_n=\tau^{\mathrm{LLA}}(\lambda_n,a_n)<1$, with three changes: (i) $c_n^+$ is replaced in $b_n(\delta)$
by $c_n^\infty=2\,\mathrm{artanh}\,\tau_n$; (ii) $\alpha_n$ is replaced by $0$ in $F_n$ and in the weight-error
bound, and detectable flips receive weight exactly $0$; (iii) the factor $3B$ in $b_n(\delta)$ may be replaced by
any constant $\bar S$ with $\hat s(\theta^{(k)}/\|\theta^{(k)}\|)\le\bar S$ for $k<K$ ($\bar S=3B$ always works).
If $w_{\mathrm{sat}}$ is fixed and $\lambda_n<(1+\lambda_n)w_{\mathrm{sat}}$, then
$\tau_n\le\lambda_n/\{(1+\lambda_n)w_{\mathrm{sat}}\}$, so $c_n^\infty=O(c_n)$, against $c_n^+=O(c_n^{1/2})$.
\end{proposition}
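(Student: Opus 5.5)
Rather than rebuild Theorems~\ref{thm:joint} and~\ref{thm:recovery}, we will re-run their proofs and change only the parts that depend on the weight step. Both proofs use the weight step through one ingredient: the first lemma of Appendix~\ref{app:remarks}. That lemma says that at a fit within angle $\delta$ of $\vartheta$ the step is correct on $\mathcal D_n(\delta)$. Concretely:
\begin{itemize}
\item clean comparisons there keep $w_i=1$;
\item flips there get a weight at most $\alpha_n<\tfrac12$;
\item the disagreement that guarantees this is $c_n^+$.
\end{itemize}
The conditions $\lambda_n\le w_{\mathrm{sat}}^2$ and $\alpha_n<\tfrac12$ enter only to guarantee that a single LLA pass pushes a detected flip below $\alpha_n$. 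So the plan is to prove a $W_\infty$ version of that lemma under \eqref{eq:Cinf}, and then check that the rest of both proofs goes through with the substitutions (i)--(iii).

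\textbf{Step 1 (weight step).} Fix $\theta$ and a comparison with $\tilde m_i=-t$. We use Lemma~\ref{lem:lla-limit}: iterating the LLA update from $w=1$ with $t>\tau^{\mathrm{LLA}}$ drives the weight to exactly $0$. The iteration is monotone, and once $w\le w_{\mathrm{sat}}$ we have $\varpi=0$, so Lemma~\ref{lem:wstep} returns $0$. For $m\ge 0$, Lemma~\ref{lem:wstep} keeps $w=1$ at every iterate, so $W_\infty=1$.

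With $\tau_n<1$, a flip is reversed exactly whenever $z_i<-c_n^\infty$. We then follow the geometric part of the original lemma, which uses only the scale bound $\hat s\le \bar S$. Take a detectable flip with $|v_i|>b_n(\delta)$. On $G_n$ its label is reversed relative to $\sign(v_i)$. At angle $\delta$ we have $|\phi_i^\top u-v_i|\le B\delta$, so $\xi_i/\|\theta\|\le -(|v_i|-B\delta)$. Since $\hat s$ is homogeneous of degree one, $z_i\le -(|v_i|-B\delta)/\bar S<-c_n^\infty$ once $|v_i|>B\delta+\bar S c_n^\infty$. Clean comparisons have $z_i>0$ and so keep weight $1$. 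Nothing here needed $\bar S=3B$ specifically; the factor $3B$ was only a deterministic bound on the MAD scale, since $|\xi_i|\le B\|\theta\|$ and $1.4826\cdot 2<3$. This gives (iii).

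\textbf{Step 2 (propagation).} In the $\theta$-step lemma, and in the weight-error and share bounds, the term $2\alpha_n$ was the loss cost of residual weight $\le\alpha_n$ on detected flips. That cost now vanishes because those weights are exactly $0$, which gives change (ii) in $F_n$ and in the weight-error bound. The reversal statement of Theorem~\ref{thm:recovery}(b) for $W_\infty$ then follows directly from Step 1. Part (c) also follows: when every comparison is detectable, $w^{(1)}=w^*$ exactly, so the refit is the oracle fit.

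We will recheck the places where $\alpha_n<\tfrac12$ was used. The main one is that a flag ($w<1$) still implies a weight $<\tfrac12$, which is trivial now that the weight is $0$. The $n\ge 2p+2$ positivity of $\hat s$ is untouched.

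\textbf{Step 3 (the closing bound).} We need $\tau^{\mathrm{LLA}}\le\lambda/\{(1+\lambda)w_{\mathrm{sat}}\}$. The obstacle is to get this from the closed form of Lemma~\ref{lem:lla-limit}; if that form is awkward, argue via a fixed point as follows.
\begin{itemize}
\item An interior fixed point $w\in(w_{\mathrm{sat}},1)$ of the LLA map at $m=-t$ satisfies $w=\varpi(1+t)/[2t(1+\varpi)]$ with $\varpi=\lambda+a^{-1}\log w\le\lambda$.
\item Hence $w\le \lambda(1+t)/[2t(1+\lambda)]$, and since $w>w_{\mathrm{sat}}$, $2t(1+\lambda)w_{\mathrm{sat}}<\lambda(1+t)\le 2\lambda$.
\item So no interior fixed point exists when $t\ge\lambda/\{(1+\lambda)w_{\mathrm{sat}}\}$. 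The iteration, being monotone decreasing, then falls below $w_{\mathrm{sat}}$ and hits $0$.
\item The hypothesis $\lambda_n<(1+\lambda_n)w_{\mathrm{sat}}$ ensures this bound is below $1$, so $\tau_n<1$.
\end{itemize}
Finally, $2\,\mathrm{artanh}\,x\sim 2x$ gives $c_n^\infty=O(\lambda_n)=O(c_n)$. By contrast, $\alpha_n=\sqrt{\lambda_n}$ gives $c_n^+=O(c_n^{1/2})$.
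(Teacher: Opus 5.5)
Your proposal is correct and follows the paper's proof. Only Lemma~\ref{lem:wstep-uniform}(b) changes, via Lemma~\ref{lem:lla-limit} and the bound $\hat s\le\bar S$, after which Lemma~\ref{lem:thetastep} applies with $\alpha=0$. Your fixed-point bound on $\tau_n$ is the paper's bound $M\le\lambda_n/\{(1+\lambda_n)w_{\mathrm{sat}}\}$ together with $M/(2-M)\le M$, written differently. One point to make explicit: $W_\infty$ starts from the current weight, not from $1$, so you need the comparison $T_t^k(w)\le T_t^k(1)$ (which reaches $0$ in finitely many steps) to conclude that a detected flip ends at $0$ from any start.
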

Condition \eqref{eq:Cinf} is what makes reversal possible at all; $\lambda_n\le w_{\mathrm{sat}}^2$ and
$\alpha_n<\frac12$ serve only to make a detected flip's weight small within one or two single updates, as $W_1$
requires. Under $W_\infty$ the proposition covers a fixed cut with $w_{\mathrm{sat}}$ above the reversal threshold
(Table~\ref{tab:regimes}), including the flip-recovery configuration $c=1$, $w_{\mathrm{sat}}=0.3$
($c^\infty=1.87$), with non-asymptotic bounds; with $\bar S$ of the order of $\hat s$, a flip is guaranteed to be
reversed once its margin exceeds $c^\infty$ robust scales plus the direction error. Every configuration we report
($c\ge0.25$) violates $\lambda_n\le w_{\mathrm{sat}}^2$, so Theorems~\ref{thm:joint}--\ref{thm:recovery} as
stated describe the small-cut limit. The down-weighting configurations ($w_{\mathrm{sat}}=0.005$ at $c\ge0.5$,
used for the language model) also violate \eqref{eq:Cinf} and are not covered: their flagged comparisons keep
weights above $\frac12$ (Table~\ref{tab:regimes}), the counting step of Lemma~\ref{lem:thetastep} fails, and at
best a bias of the order of the flip rate can be expected.

\begin{figure}[t]
\centering
\includegraphics[width=\textwidth]{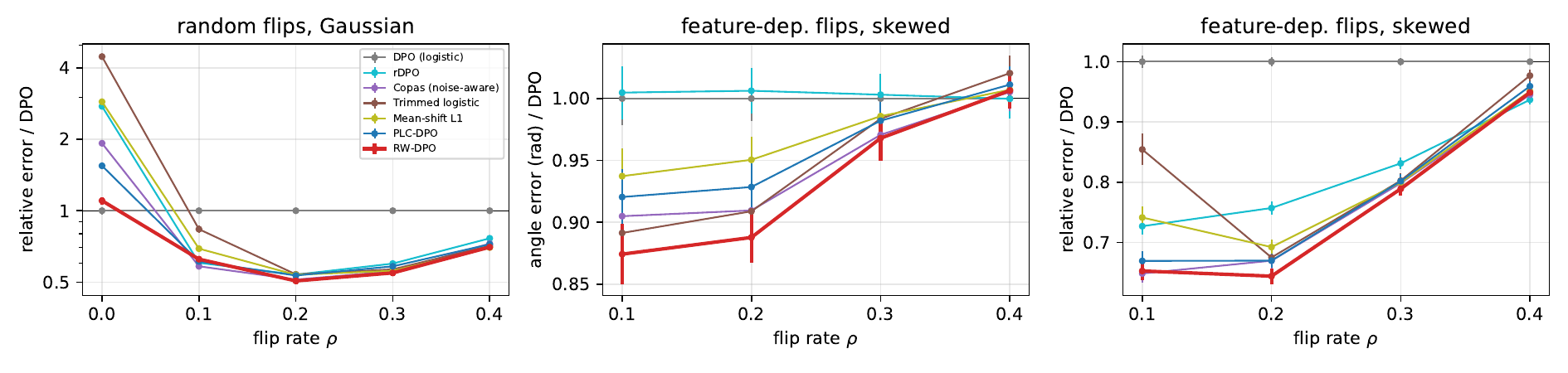}
\caption{Error relative to DPO (1 = no gain; bars $\pm2$ s.e.) against the flip rate, every method tuned on
clean validation data. Left: relative error under random flips (log scale). Middle and right: angle and
relative error under feature-dependent flips with skewed features. Settings in
Appendix~\ref{app:sim-details}.}
\label{fig:sim-main}
\end{figure}
\begin{table}[t]
\centering
\small
\caption{RW-DPO has the lowest relative error under both flip types and, on clear-cut flips, far more precise flags than PLC-DPO and mean-shift. Linear-index model, clean-validation tuning. Random flips and feature-dependent flips with skewed features: relative and angle error, means over $\rho\in\{0.1,\dots,0.4\}$, 12 settings, 100 replicates (s.e.\ $\le0.005$). Clear-cut flips (flips only where $|d_i^*|\ge3.5$, $\rho\in\{0.05,0.1\}$): detection, swamping and precision of the flags, and angle error. Designs, methods and metrics: Appendix~\ref{app:sim-details}. Results by flip rate: Table~\ref{tab:sim-full}. Bold: best non-oracle.}
\label{tab:sim-main}
\begin{tabular}{lcccccccc}
\toprule
 & \multicolumn{2}{c}{random flips} & \multicolumn{2}{c}{feature-dependent flips} & \multicolumn{4}{c}{clear-cut flips} \\
Method & rel.\ err. & angle & rel.\ err. & angle & detection & swamping & precision & angle \\
\midrule
Oracle labels & 0.162 & 0.150 & 0.167 & 0.153 & -- & -- & -- & 0.078 \\
\midrule
DPO (logistic) & 0.632 & 0.383 & 0.666 & 0.591 & -- & -- & -- & 0.150 \\
rDPO & 0.406 & 0.383 & 0.556 & 0.592 & -- & -- & -- & 0.147 \\
Copas (noise-aware) & 0.385 & \textbf{0.374} & 0.530 & 0.571 & -- & -- & -- & 0.093 \\
Trimmed logistic & 0.410 & 0.383 & 0.560 & 0.576 & 0.94 & 0.004 & 0.95 & 0.095 \\
Mean-shift $\ell_1$ & 0.393 & 0.377 & 0.545 & 0.580 & 0.99 & 0.345 & 0.19 & 0.121 \\
PLC-DPO & 0.394 & 0.377 & 0.536 & 0.577 & 0.99 & 0.151 & 0.35 & 0.111 \\
\textbf{RW-DPO} & \textbf{0.381} & 0.378 & \textbf{0.526} & \textbf{0.566} & 0.95 & 0.008 & 0.91 & \textbf{0.092} \\
\bottomrule
\end{tabular}
\end{table}

\section{Experiments}
\label{sec:exp}

RW-DPO is designed to keep the comparisons the model agrees with at full weight and to correct only those it
contradicts. We evaluate this selectivity in two settings: simulations of the linear-index model, in which the
flipped comparisons are known and the estimation error can be measured exactly, and DPO fine-tuning of a language
model, in which performance is measured by the quality of the trained policy.

\subsection{Simulations: accuracy and precision}
\label{sec:exp-linear}

In this section, we simulate the linear-index model of Section~\ref{sec:feasible}, where the flip set is known,
and test two aspects: the estimation error against existing robust methods, and how precisely the corrections
target the flips. Designs, methods, implementation, tuning and metrics are described in
Appendix~\ref{app:sim-details}; results by flip rate are in Table~\ref{tab:sim-full}.

\paragraph{RW-DPO estimates the preference most accurately (Table~\ref{tab:sim-main}, Figure~\ref{fig:sim-main}).}
Flips are drawn at random, or tied to a feature unrelated to the true preference so that they also bias the
direction DPO learns; each design covers $12$ settings, four flip rates and $100$ replicates. The competitors
represent the main ways of handling flips: correcting the loss for a global flip rate (rDPO and the
misclassification-robust likelihood of \citealt{copas1988}), discarding or absorbing suspicious comparisons
(trimmed logistic regression, an $\ell_1$ mean-shift fit), and routing each comparison by its margin (PLC-DPO).
With every method tuned on the same clean validation data, RW-DPO has the lowest relative error
$\|\hat\theta-\theta^*\|/\|\theta^*\|$ under both random flips ($0.381$, against $0.632$ for DPO) and
feature-dependent flips ($0.526$ against $0.666$), where it also has the lowest angle error ($0.566$). The differences
are small but systematic: on the same replicates RW-DPO's error is below every other method's by at least $4.9$
standard errors of the paired difference (Table~\ref{tab:sim-paired}). Figure~\ref{fig:sim-main} breaks the comparison
down by flip rate. RW-DPO is most accurate at the moderate flip rates $0.2$ and $0.3$, and, because it keeps
agreeing comparisons at full weight, it loses little accuracy when the labels are clean: with no flips its error is
$0.178$, against $0.162$ for DPO, whereas methods that assume a positive flip rate incur a larger error (Copas
$0.311$, rDPO $0.445$; Table~\ref{tab:sim-full}).

\paragraph{Its corrections are concentrated on the flips (Table~\ref{tab:sim-main}).} A low error could also
result from correcting many comparisons slightly. RW-DPO instead corrects few comparisons, predominantly flipped
ones. When flips occur only on clear-cut comparisons ($|d_i^*|\ge3.5$), as when annotators err on easy pairs,
RW-DPO flags $95\%$ of them
while flagging $0.8\%$ of the clean comparisons (\emph{swamping}), against $15\%$ for PLC-DPO and $35\%$ for
mean-shift; $91\%$ of its flags are true flips (\emph{precision}), against $0.35$ and $0.19$. PLC-DPO and
mean-shift attain slightly higher detection only at the cost of much higher swamping. The corrections do not
degrade the estimated direction: RW-DPO's angle error ($0.092$) is the lowest. As flips become clearer, its
detection rises from $0.70$ to $0.98$ and its precision from $0.83$ to $0.94$ (Table~\ref{tab:sim-full}),
consistent with the theory of Section~\ref{sec:theory}. (PLC-DPO makes no explicit flip decision; we count a
pair as flagged when ``flipped'' is the most probable state of its routing posterior,
Appendix~\ref{app:sim-details}.)

\paragraph{Summary.} In the simulations RW-DPO attains the lowest estimation error under both flip designs, with
little loss of accuracy on clean labels, and its corrections are concentrated on the flips, with substantially
higher precision than PLC-DPO and mean-shift and no loss in the estimated direction.

\begin{table}[t]
\centering
\small
\caption{RW-DPO matches PLC-DPO and stays above clean-label DPO at every flip rate, with its correction concentrated on the flips. Top: win rate (\%) against clean-label DPO (two epochs; mean $\pm$ s.d.\ over three seeds). Bottom: the correction of a pair is $1-\hat w_i$ for RW-DPO and the weight PLC-DPO's loss puts on the reversed label; ``applied'' is its mean over all pairs, ``on flips'' the share of the total correction that falls on flipped pairs ($\rho$ if spread at random; Table~\ref{tab:llm-landing}). Bold: best in each column.}
\label{tab:llm-summary}
\setlength{\tabcolsep}{3pt}
\begin{adjustbox}{max width=\columnwidth}\begin{tabular}{lccccc}
\toprule
Method & $\rho=0.0$ & $\rho=0.1$ & $\rho=0.2$ & $\rho=0.3$ & mean ($\rho>0$) \\
\midrule
DPO & 50.0\,{\scriptsize$\pm$1.1} & 45.9\,{\scriptsize$\pm$1.0} & 41.7\,{\scriptsize$\pm$1.8} & 34.5\,{\scriptsize$\pm$0.2} & 40.7 \\
rDPO ($\varepsilon=\rho$) & -- & 49.4\,{\scriptsize$\pm$0.6} & 46.6\,{\scriptsize$\pm$0.8} & 40.0\,{\scriptsize$\pm$1.3} & 45.3 \\
cDPO ($\varepsilon=\rho$) & -- & 43.9\,{\scriptsize$\pm$0.9} & 36.6\,{\scriptsize$\pm$0.8} & 29.9\,{\scriptsize$\pm$0.6} & 36.8 \\
PLC-DPO & \textbf{53.8}\,{\scriptsize$\pm$0.5} & \textbf{53.7}\,{\scriptsize$\pm$0.6} & \textbf{54.2}\,{\scriptsize$\pm$1.7} & 54.0\,{\scriptsize$\pm$1.8} & 54.0 \\
\textbf{RW-DPO} & 52.8\,{\scriptsize$\pm$0.2} & 53.4\,{\scriptsize$\pm$1.2} & 53.7\,{\scriptsize$\pm$1.1} & \textbf{55.1}\,{\scriptsize$\pm$2.9} & \textbf{54.1} \\
\midrule
\multicolumn{6}{l}{\emph{Correction (\%): applied, and share on flipped pairs}} \\
PLC-DPO applied & -- & 11 & 12 & 10 & 11 \\
RW-DPO applied & 3 & 3 & 4 & 5 & 4 \\
PLC-DPO on flips & -- & 10 & 20 & 30 & 20 \\
RW-DPO on flips & -- & \textbf{26} & \textbf{43} & \textbf{51} & \textbf{40} \\
\bottomrule
\end{tabular}\end{adjustbox}
\end{table}

\subsection{DPO fine-tuning of a language model}
\label{sec:exp-rq2}

In this section, we examine whether the gains seen in the simulations, where margins come from a correctly
specified model, carry over to DPO fine-tuning of a language model on randomly flipped preferences: we measure
how the win rate of the trained policy changes with the flip rate, and where each method's correction goes.

\paragraph{Setup.} We use the benchmark, recipe and evaluation code of PLC-DPO \citep{cho2026plc}:
\texttt{Qwen2.5-1.5B-}\allowbreak\texttt{ultrachat200k} with LoRA and the same model as frozen reference, $\beta=0.01$
(Appendix~\ref{app:llm}); $19{,}852$ UltraFeedback
comparisons with a share $\rho\in\{0,0.1,0.2,0.3\}$ of labels flipped at random, the range of PLC-DPO's benchmark,
three seeds per $\rho$ (flip masks used only for evaluation). RW-DPO uses $c=0.85$ and $w_{\mathrm{sat}}=0.005$: the
cut was chosen from $\{0.85,1,1.25\}$ by validation log-loss on one development cell ($\rho=0.2$, seed $0$;
Appendix~\ref{app:llm}) and held fixed for all cells, as is PLC-DPO's preset. Baselines are DPO, rDPO
\citep{chowdhury2024rdpo} and cDPO (label smoothing), both given the true flip rate, and PLC-DPO at its released
preset. In its authors' evaluation PLC-DPO had the highest mean win rate against DPO, ahead of cDPO, Dr.~DPO
\citep{wu2024drdpo}, ROPO \citep{liang2024ropo} and other robust variants, and the highest at every injected flip
rate of their noise test \citep{cho2026plc}. The metric is the win rate of greedy generations on $2000$ UltraFeedback prompts, judged by
Skywork-Reward-V2-Llama-3.1-8B (ties count one half), against clean-label DPO at the same training budget, as in
PLC-DPO (two epochs, averaged over three seeds; $50$ = as good as clean-label DPO).

\paragraph{RW-DPO holds its win rate as the flip rate grows (Table~\ref{tab:llm-summary}).} DPO degrades steadily as
the flip rate grows: its win rate against clean-label DPO falls from $45.9$ at $10\%$ flips to $34.5$ at $30\%$.
Correcting for the known flip rate offsets this only partly: rDPO recovers part of the loss and cDPO none (mean
$45.3$ and $36.8$, against $40.7$ for DPO). RW-DPO's win rate, in contrast, does not decrease with the flip rate:
it is $53.4$, $53.7$ and $55.1$ as the flip rate rises from $10\%$ to $30\%$, so the policy trained with RW-DPO
on flipped labels is preferred to DPO trained on clean labels at every flip rate. It matches PLC-DPO, the strongest recent method (mean $54.1$
against $54.0$), and with no flips both improve on DPO ($52.8$ and $53.8$). Clean preference accuracy falls
slightly for both (Appendix~\ref{app:llm}).

\paragraph{RW-DPO attains this with a lighter, more targeted correction.} With the flips known, we measure where each method's correction goes (Table~\ref{tab:llm-summary}, bottom; Table~\ref{tab:llm-landing}): $1-w_i$ for RW-DPO, and
for PLC-DPO the weight its loss puts on the reversed label, recomputed by its routing formula on the final model. Both are the weight a method places on the reversed label, in the likelihood for RW-DPO and in the loss for PLC-DPO, so their levels are comparable only roughly; the share on flipped pairs is scale-free. PLC-DPO applies about $11\%$, spread evenly: the share
that falls on flipped pairs equals the flip rate. RW-DPO applies $3$--$5\%$, of which $26$--$51\%$ falls on flipped
pairs, $1.7$--$2.6$ times chance, so a flipped pair loses $2.4$--$3.2$ times the weight of a clean one. Its flags
($w_i<1$) detect $30$--$31\%$ of the flips at $11$--$13\%$ swamping. Because a language model's margins identify
flips less sharply than the simulated ones (flag precision at most one half), RW-DPO down-weights suspected flips
rather than reversing them, a conservative response when flags are uncertain.

\paragraph{Re-weighting requires a trained policy.} The weights are computed from the policy's margins. After one
epoch RW-DPO ($c=0.5$) and PLC-DPO are comparable ($39.3$ and $39.2$) and
above the other methods ($29.3$--$30.4$); the second epoch sharpens the policy's margins and raises RW-DPO's flag precision from
$0.17$--$0.42$ to $0.24$--$0.50$.

\paragraph{Summary.} On the language model, RW-DPO trained on flipped labels yields a policy preferred to
clean-label DPO at every flip rate, matching the strongest recent method with corrections concentrated on the flips. In
both settings it keeps most comparisons at full weight and concentrates its correction on the comparisons most
likely to be mislabeled.

\section{Conclusion}

We proposed RW-DPO, which corrects preference data selectively: its penalized reliability weights keep every
comparison the model agrees with at exactly $w_i=1$ and adjust only those it contradicts. Theoretically, with a
fixed fraction of flipped labels the estimator is consistent in a linear-index model and recovers the flipped
comparisons above the $\log n$ margin scale, which limits what any method can recover from one annotation per
comparison. Empirically, it attains the lowest estimation error in simulations, with little loss of accuracy on
clean labels and substantially more precise flags than recent label-correction methods; in DPO fine-tuning with
$10$--$30\%$ flipped labels, its policy is preferred to DPO trained on clean labels and matches the strongest recent
method, with corrections concentrated on the flipped labels. The guarantees are established for a linear index with
binary true weights; ties and graded reliability are left for future work.

\subsection*{AI use statement}
We used generative AI tools, including coding assistants, to write and debug experiment code, run and
aggregate experiments, search and summarize related literature, assist with mathematical derivations, and
draft and edit the manuscript. The research question, methods, mathematical formulations, and final claims
were determined by the authors. All AI-assisted code was tested; reported results were verified against
experiment logs and code; mathematical derivations and proofs were independently checked by the authors;
citations were checked against their sources; and all text was reviewed by the authors. The authors take
responsibility for the final content, including the text, claims, and artifacts produced with the aid of
generative AI.

\clearpage
\appendix

\section{Model and algorithm details}
\label{app:details}

\subsection*{The $\theta$-step gradient}
With $w$ fixed, differentiating \eqref{eq:obj} gives
\begin{equation}
\label{eq:grad}
\nabla_\theta\Psi=-\sum_{i=1}^{n}\underbrace{\frac{w_i-\tfrac12}{r_i(\theta,w_i)}\;
g'\big(d_i(\theta)\big)\,Y_i}_{\textstyle c_i(\theta,w_i)}\;\nabla_\theta d_i(\theta),
\end{equation}
so the update is an ordinary DPO gradient step in which comparison $i$ carries the
scalar coefficient $c_i$. Three features are worth noting. At $w_i=1$, for the
Bradley--Terry link, the coefficient reduces to $Y_i\,\sigma(-Y_id_i)$, the standard DPO gradient
weight times the label sign, so the $\theta$-step is exactly DPO on the reliable comparisons. At
$w_i=\tfrac12$ the coefficient is zero and the comparison contributes nothing. At
$w_i<\tfrac12$ the coefficient changes sign, so a reversed comparison pushes the
policy in the corrected direction rather than merely being ignored.

\begin{remark}[Label-flip symmetry and initialization]
\label{rem:symmetry}
The objective \eqref{eq:obj} admits two
near-equivalent solutions related by wholesale relabeling. The per-comparison
reliability $r_i=\tfrac12+(w_i-\tfrac12)m_i$ is invariant under the joint map that
negates every reported label and complements every weight,
$(\{Y_i\},w)\mapsto(\{-Y_i\},\mathbf 1-w)$: negating $Y_i$ negates the agreement
$m_i$, while $w_i\mapsto1-w_i$ negates $(w_i-\tfrac12)$, so their product is
unchanged. Consequently a solution $(\theta^+,w\approx\mathbf1)$ that trusts the
reported labels and a solution $(\theta^-,w\approx\mathbf0)$ that trusts their
reverse attain the same likelihood, where $\theta^-$ is any policy inducing the
opposite margin $d_i(\theta^-)=-d_i(\theta^+)$ on every pair. In a linear reward
parametrization $d_i(\theta)=\theta^\top\phi_i$ this reversed policy is simply
$\theta^-=-\theta^+$; for a neural policy it is the opposite-ranking policy, not a
negation of the weights. It is the penalty, which charges $P(0)=a\lambda^2/2$ for
each reversed comparison and nothing for a reliable one, that breaks the tie: the
label-trusting basin reverses only the $s$ corrupted comparisons while its counterpart
reverses the other $n-s$, so the former is preferred by $(n-2s)\,a\lambda^2/2$ whenever the corrupted
comparisons are a minority, $s<n/2$. We
therefore initialize $\theta^{(0)}$ at a DPO fit (on the language model, a short DPO warm-up), which lies in the
label-trusting basin, to select it.
\end{remark}

\section{The weight step at a fixed margin}
\label{app:wstep}

We next study the scalar reliability problem that arises when $\theta$ is held
fixed. With $m=Yg(d(\theta))$ write
\[
F_{\lambda,a}(w;m)=-\log\Big[\tfrac12+\big(w-\tfrac12\big)m\Big]+P(w),\qquad
w\in[0,1].
\]

\begin{lemma}[Endpoint behaviour of the penalized weight]
\label{lem:endpoint}
For every $m\ge0$ the unique minimizer of $F_{\lambda,a}(w;m)$ is $w=1$. For $m<0$
the likelihood term is strictly increasing in $w$. Suppose $(\lambda,a)$ are such
that the endpoint region is nonempty, i.e.\ $w=0$ is the global minimizer for at
least one $m\in(-1,0)$ (the corner regime; guaranteed by
\eqref{eq:Cinf}, Lemma~\ref{lem:lla-limit}(e)). Then there is a threshold
$\tau^{\mathrm{exact}}(\lambda,a)\in(0,1)$ such that $w=0$ is the global minimizer
exactly when $m\le-\tau^{\mathrm{exact}}(\lambda,a)$.
\end{lemma}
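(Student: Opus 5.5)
The plan is to handle the three claims of Lemma~\ref{lem:endpoint} in order, working on $F_{\lambda,a}(w;m)=\ell(w;m)+P(w)$ with $\ell(w;m)=-\log[\tfrac12+(w-\tfrac12)m]$. The penalty $P(w)=\varrho(-\log w)$ is nonincreasing in $w$ on $[0,1]$: $-\log w$ decreases in $w$ and $\varrho$ is nondecreasing on $[0,\infty]$. Moreover $P(w)>0$ for $w<1$ and $P(1)=0$, because $\varrho(u)>0$ for $u>0$; on $(0,a\lambda]$ we have $\varrho(u)=u(\lambda-u/2a)\ge u\lambda/2>0$.

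\emph{Case $m\ge0$.} Here $\partial_w\ell=-m/[\tfrac12+(w-\tfrac12)m]\le0$, so $\ell$ is nonincreasing, and $P$ is nonincreasing with $P(w)>P(1)=0$ for $w<1$. Hence $F(w)>F(1)$ for every $w<1$: when $m>0$ both terms favour $w=1$, and when $m=0$ the likelihood term is constant while the penalty is strict. So $w=1$ is the unique minimizer.

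\emph{Case $m<0$.} The derivative $\partial_w\ell=|m|/[\tfrac12+(w-\tfrac12)m]>0$, since the bracket lies in $(0,1)$. This gives strict increase of the likelihood term.

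\emph{Threshold.} Write $m=-t$ with $t\in(0,1)$, and define
\[
D(t)=\min_{w\in(0,1]}F(w;-t)-F(0;-t).
\]
Then $w=0$ is a global minimizer exactly when $D(t)\ge0$. (Whether "the" global minimizer means unique is a boundary issue at the threshold, to be settled by the tie convention.) The key step is monotonicity. For each fixed $w>0$, the difference
\[
F(w;-t)-F(0;-t)=\log\frac{\tfrac12+\tfrac t2}{\tfrac12-(w-\tfrac12)t}+P(w)-P(0)
\]
should be nondecreasing in $t$. The penalty part does not depend on $t$. For the log part, write
\[
q(t)=\log(1+t)-\log\bigl(1+(1-2w)t\bigr);
\]
its derivative is $\tfrac1{1+t}-\tfrac{1-2w}{1+(1-2w)t}$, whose sign is that of $(1+(1-2w)t)-(1-2w)(1+t)=2w>0$. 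So each difference is strictly increasing in $t$, and hence so is the pointwise minimum $D(t)$ over $w\in(0,1]$.

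Therefore $\{t:D(t)\ge0\}$ is an interval $[\tau,1)$ or $(\tau,1)$. It is nonempty by hypothesis, since some $m\in(-1,0)$ has $w=0$ globally optimal. Continuity of $D$ in $t$ gives closedness: $F$ is jointly continuous on $[0,1]\times[0,1)$ in $(w,t)$, since $P$ is continuous at $0$ because $\varrho$ is constant beyond $a\lambda$, and $[0,1]$ is compact, so the minimum over $w\in[0,1]$ is continuous in $t$. Restricting the minimum to $w\in(0,1]$ is handled by noting that the infimum over $(0,1]$ equals the minimum over $[0,1]$ unless $0$ is the strict minimizer, in which case $D>0$ anyway. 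Hence the set is $[\tau^{\mathrm{exact}},1)$.

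It remains to show $\tau^{\mathrm{exact}}>0$. For small $t$, $D(t)\le F(1;-t)-F(0;-t)=\log\frac{1+t}{1-t}-a\lambda^2/2$, which is negative as $t\to0$. This also gives $\tau^{\mathrm{exact}}<1$ trivially.

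The main obstacle is the careful continuity and tie-handling argument for the interval endpoint. I would resolve it by defining $\tau^{\mathrm{exact}}=\inf\{t:D(t)\ge0\}$ and using the strict monotonicity of $D$ to get the exact "iff" with $m\le-\tau^{\mathrm{exact}}$.
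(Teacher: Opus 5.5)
Your proof is correct and follows essentially the paper's route. It uses the same computation that $F(w;-t)-F(0;-t)$ is strictly increasing in $t$ for each fixed $w>0$ (the paper's $\partial_t\Delta<0$), the same upper-interval structure, and the same comparison with $w=1$ as $t\downarrow0$ to keep $\tau^{\mathrm{exact}}$ away from $0$; your use of $P>0$ on $[0,1)$ also makes uniqueness at $m=0$ more explicit than the paper does.

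One small correction: because $F(\cdot\,;-t)$ is continuous at $w=0$, your $D(t)$ always satisfies $D(t)\le0$. So $D$ is identically $0$ on the corner region, hence not strictly increasing there and never $>0$. Only its monotonicity and continuity are needed, and the paper gets closedness more directly by writing the region as $\bigcap_{w\in(0,1]}\{t:F(w;-t)\ge F(0;-t)\}$, an intersection of closed sets.
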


\paragraph{Local linear approximation.}
The MCP derivative is $\varrho'(u;\lambda,a)=(\lambda-u/a)_+$; because $\varrho$
is concave, its tangent majorizes. At an iterate $w^{(t)}$ define
$\varpi^{(t)}=\varrho'(-\log w^{(t)};\lambda,a)=(\lambda+a^{-1}\log w^{(t)})_+$,
so the LLA surrogate is
$-\log[\tfrac12+(w-\tfrac12)m]+\varpi^{(t)}(-\log w)+\text{const}$, which is
minimized in closed form by Lemma~\ref{lem:wstep}.

The algorithm carries each weight across passes, so what matters at a fixed margin
is where the \emph{iterated} update ends. Fix $m=-t$ with $t\in(0,1)$ and write the
update of Lemma~\ref{lem:wstep} as a map of the current weight,
\[
T_t(w)=\begin{cases}
\min\Big\{1,\ \dfrac{\varpi(w)\,(1+t)}{2t\,(1+\varpi(w))}\Big\}, & \varpi(w)>0,\\[2mm]
0, & \varpi(w)=0,
\end{cases}
\qquad \varpi(w)=\big(\lambda+a^{-1}\log w\big)_+,
\]
so $\varpi(w)=0$ exactly when $w\le w_{\mathrm{sat}}=e^{-a\lambda}$.

\begin{lemma}[Warm-started LLA at a fixed margin]
\label{lem:lla-limit}
Let $w^{(0)}=1$ and $w^{(k+1)}=T_t(w^{(k)})$.
\begin{enumerate}[label=(\alph*),leftmargin=*,nosep]
\item The sequence is nonincreasing and converges to the largest fixed point
$w^{(\infty)}$ of $T_t$ in $[0,1]$.
\item $w^{(\infty)}=0$ if and only if $T_t(w)<w$ for every $w\in(w_{\mathrm{sat}},1]$;
in that case $w^{(k)}=0$ exactly for all large $k$. Otherwise $w^{(\infty)}$ is the
largest fixed point in $(w_{\mathrm{sat}},1]$, which equals $1$ when
$t\le\lambda/(2+\lambda)$.
\item Let $M(\lambda,a)=\max_{w\in[w_{\mathrm{sat}},1]}\varphi(w)/w$ with
$\varphi=\varpi/(1+\varpi)$. Then $w^{(\infty)}=0$ if and only if
$t>\tau^{\mathrm{LLA}}(\lambda,a)$, where
\[
\tau^{\mathrm{LLA}}=\frac{M}{2-M}\ \ \text{if }M<1,\qquad
\tau^{\mathrm{LLA}}=1\ \ \text{otherwise};
\]
$\tau^{\mathrm{LLA}}\ge\lambda/(2+\lambda)$, and at $t=\tau^{\mathrm{LLA}}$ the
comparison is attenuated ($w^{(\infty)}>0$).
\item $\tau^{\mathrm{LLA}}<1$ if and only if
\begin{equation}
\label{eq:Cinf}
\frac{\varpi(w)}{1+\varpi(w)}<w\qquad\text{for every }w\in[w_{\mathrm{sat}},1].
\tag{C$_\infty$}
\end{equation}
\item For $w\in(w_{\mathrm{sat}},1)$, $T_t(w)<w$ if and only if
$F'_{\lambda,a}(w;-t)>0$. Consequently, whenever $w^{(\infty)}=0$ the objective
$F_{\lambda,a}(\cdot\,;-t)$ is strictly increasing on $[0,1]$ and $w=0$ is its unique
global minimizer; in particular $\tau^{\mathrm{exact}}\le\tau^{\mathrm{LLA}}$, and
\eqref{eq:Cinf} implies that the endpoint region of Lemma~\ref{lem:endpoint} is
nonempty.
\end{enumerate}
For $m\ge0$ the update returns $w=1$ in one pass from any current weight
(Lemma~\ref{lem:wstep}).
\end{lemma}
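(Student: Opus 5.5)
The map $T_t$ is nondecreasing in $w$, and the whole argument rests on that monotonicity. To see it, note that $\varpi(w)=(\lambda+a^{-1}\log w)_+$ is nondecreasing, $\varpi\mapsto\varpi/(1+\varpi)$ is increasing, and truncation at $1$ and the zero branch below $w_{\mathrm{sat}}$ preserve order; continuity at $w_{\mathrm{sat}}$ holds because $\varphi(w_{\mathrm{sat}})=0$. Write $T_t(w)=\min\{1,\frac{1+t}{2t}\varphi(w)\}$ with $\varphi=\varpi/(1+\varpi)$, a single continuous nondecreasing formula on $[0,1]$.

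For (a), $w^{(1)}=T_t(1)\le1=w^{(0)}$. Induction with monotonicity gives $w^{(k+1)}\le w^{(k)}$, so the sequence converges. By continuity its limit is a fixed point. Any fixed point $w^\dagger\le1$ satisfies $w^\dagger=T_t^k(w^\dagger)\le T_t^k(1)=w^{(k)}$, so the limit is the largest fixed point.

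For (b), $0$ is always a fixed point. If $T_t(w)<w$ on $(w_{\mathrm{sat}},1]$, there is no fixed point there, and none in $(0,w_{\mathrm{sat}}]$ since $T_t=0$ there. Hence the limit is $0$. Exact zero is reached in finitely many steps: otherwise the iterates stay in $(w_{\mathrm{sat}},1]$ and converge to a point of $[w_{\mathrm{sat}},1]$. That point must be a fixed point, so it is $w_{\mathrm{sat}}$, where $T_t(w_{\mathrm{sat}})=0\ne w_{\mathrm{sat}}$, a contradiction. Conversely, if $T_t(w_0)\ge w_0$ for some $w_0>w_{\mathrm{sat}}$, the intermediate value theorem on $[w_0,1]$ applied to $T_t(w)-w$, which is $\le0$ at $1$, gives a fixed point in $(w_{\mathrm{sat}},1]$. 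Lastly, $T_t(1)=1$ iff $\frac{1+t}{2t}\frac{\lambda}{1+\lambda}\ge1$, which is equivalent to $t\le\lambda/(2+\lambda)$.

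For (c) and (d), the condition in (b) reads $\frac{1+t}{2t}\varphi(w)<w$ for all $w\in(w_{\mathrm{sat}},1]$ (the truncated case gives $1<w$ or equality at $w=1$, also excluded), that is, $\frac{1+t}{2t}M<1$ once the supremum is handled. The function $\frac{1+t}{2t}$ decreases from $\infty$ to $1$, so the inequality holds iff $t>M/(2-M)$ when $M<1$, and never when $M\ge1$. This gives $\tau^{\mathrm{LLA}}$. The main obstacle is the endpoint and strictness bookkeeping. First, $\varphi(w)/w$ is continuous on the compact set $[w_{\mathrm{sat}},1]$ and vanishes at $w_{\mathrm{sat}}$, so the maximum is attained in $(w_{\mathrm{sat}},1]$. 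Strict inequality at every point of the open interval is therefore equivalent to the strict inequality with the maximum. At $t=\tau^{\mathrm{LLA}}$ equality holds at a maximizer, producing a positive fixed point, so the comparison is attenuated. Taking $w=1$ gives $M\ge\lambda/(1+\lambda)$, hence $\tau^{\mathrm{LLA}}\ge\lambda/(2+\lambda)$. Part (d) is then immediate: $M<1$ iff $\varphi(w)<w$ on the compact set, which is \eqref{eq:Cinf}.

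For (e), compute $F'(w;-t)=\frac{t}{\frac12-(w-\frac12)t}-\frac{\varpi(w)}{w}$ on $(w_{\mathrm{sat}},1)$, using $P'(w)=-\varrho'(-\log w)/w$. Then $F'>0$ iff $tw>\varpi(\frac12-wt+\frac t2)$. Rearranging gives $w>\frac{\varpi(1+t)}{2t(1+\varpi)}$, which is $T_t(w)<w$ because the untruncated value is then below $w<1$. If $w^{(\infty)}=0$, then $F'>0$ on $(w_{\mathrm{sat}},1)$. On $(0,w_{\mathrm{sat}})$ the penalty is constant and the likelihood term is strictly increasing (Lemma~\ref{lem:endpoint}). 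With continuity on $[0,1]$, $F$ is strictly increasing, so $0$ is the unique minimizer. Hence $\tau^{\mathrm{exact}}\le\tau^{\mathrm{LLA}}$, and by (d), \eqref{eq:Cinf} makes the endpoint region nonempty, since then some $t<1$ gives $w^{(\infty)}=0$. The final sentence of the lemma is Lemma~\ref{lem:wstep}.
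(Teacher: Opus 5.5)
Your proof is correct and follows the paper's argument essentially step for step: $T_t$ is monotone and continuous; from $w^{(0)}=1$ the iterates decrease to the largest fixed point; (b) is the intermediate-value dichotomy; and (c) reduces to $\frac{1+t}{2t}M<1$, using that the maximum of $\varphi(w)/w$ is attained away from $w_{\mathrm{sat}}$. The only differences are minor streamlinings: for (d) you read $\tau^{\mathrm{LLA}}<1\iff M<1\iff(\mathrm{C}_\infty)$ directly off the attained maximum, where the paper uses a limiting argument as $t\uparrow1$ and notes this equivalence only parenthetically; and for (e) you check $F'>0\iff T_t(w)<w$ by direct algebra and glue the two intervals by continuity, instead of invoking strict convexity of the LLA surrogate and $C^1$-smoothness of $P$ at $w_{\mathrm{sat}}$.
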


So the iterated update keeps comparisons with $t\le\lambda/(2+\lambda)$, attenuates those
with $\lambda/(2+\lambda)<t<\tau^{\mathrm{LLA}}$ and reverses those with $t>\tau^{\mathrm{LLA}}$.

\section{The linear-index model: lemmas and remarks}
\label{app:remarks}

\subsection*{The two lemmas behind Theorem~\ref{thm:joint}}
\begin{lemma}[The weight step at any well-aligned fit]
\label{lem:wstep-uniform}
Under Assumptions~\ref{ass:linear}(i),(v) and~\ref{ass:endpoint}, let $n\ge2p+2$ (ensures $\hat s>0$, and $n\ge3$) and
$\lambda_n\le w_{\mathrm{sat}}^2$ (so $\lambda_n<1$ and $c_n^+$ is defined). On $G_n$ and almost surely,
simultaneously for every $\delta\ge0$, every $\theta\ne0$ with $\angle(\theta,\vartheta)\le\delta$, every weight vector $w$ and every
$i\in\mathcal D_n(\delta)$:
\begin{enumerate}[label=(\alph*),leftmargin=*,nosep]
\item if $w_i^*=1$, then $W_1(\theta,w)_i=W_\infty(\theta,w)_i=1$;
\item if $w_i^*=0$, then $W_1(\theta,w)_i\le\alpha_n$, with $W_1(\theta,w)_i=0$ when
$w_i\le w_{\mathrm{sat}}$; and $W_\infty(\theta,w)_i=0$.
\end{enumerate}
\end{lemma}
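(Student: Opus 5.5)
The plan is to reduce everything to a deterministic statement about the sign and size of the standardized margin $z_i(\theta)$ for $i\in\mathcal D_n(\delta)$, and then read off the weights from Lemma~\ref{lem:wstep} and Lemma~\ref{lem:lla-limit}. Since $z_i(\theta)=\xi_i(\theta)/\hat s(\theta)$ is invariant under positive rescaling of $\theta$, I will take $\|\theta\|=1$ throughout.

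\emph{Step 1 (labels on $\mathcal D_n(\delta)$).} For $i\in\mathcal D_n(\delta)$ we have $|v_i|>b_n(\delta)\ge(1+\epsilon)\log n/\kappa_n$, so $\kappa_n|v_i|\ge(1+\epsilon)\log n$. On $G_n$ this gives $\widetilde Y_i=\sign(v_i)$. Hence $Y_i\sign(v_i)=+1$ if $w_i^*=1$ and $-1$ if $w_i^*=0$. This does not depend on $\theta$, $w$ or $\delta$, which is what makes the statement hold simultaneously over all of them.

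\emph{Step 2 (sign and size of $z_i$).} Since $\angle(\theta,\vartheta)\le\delta$ and both are unit vectors, $\|\theta-\vartheta\|\le\delta$. Therefore $|\phi_i^\top\theta-v_i|\le B\delta$. For $i\in\mathcal D_n(\delta)$ we have $|v_i|>B\delta+3Bc_n^+$, so $\sign(\phi_i^\top\theta)=\sign(v_i)$ and $|\phi_i^\top\theta|>3Bc_n^+$.

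For the scale, every $|\xi_j(\theta)|\le B$, so $|\xi_j-\mathrm{med}\,\xi|\le2B$ (a median lies between the extremes). This gives $\hat s(\theta)\le1.4826\cdot2B<3B$.

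I also need $\hat s(\theta)>0$ for every $\theta\ne0$ simultaneously, almost surely. I expect this to be the one genuinely delicate point, because of the uniformity over $\theta$. I would argue as follows. If $\hat s(\theta)=0$, then by the half-on-each-side property at least $n/2\ge p+1$ of the values $\xi_j(\theta)=\psi_j^\top\theta$, with $\psi_j=Y_j\phi_j$, equal a common constant. Those $p+1$ points $\psi_j$ then lie in one affine hyperplane. Each sign pattern of $Y$ yields i.i.d.-like points with a Lebesgue density (Assumption~\ref{ass:linear}(v)). For any fixed sign pattern and index set, $p+1$ such points are affinely dependent with probability zero. A finite union over sign patterns and index sets handles the dependence of $Y$ on the data. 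Hence $\hat s>0$ for all $\theta$ at once, almost surely.

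Combining the two bounds, $z_i$ has sign $Y_i\sign(v_i)$ and $|z_i|>c_n^+$. So $\tilde m_i>0$ for clean $i$, and $\tilde m_i=-t$ with $t>\tanh(c_n^+/2)=\alpha_n$ for flipped $i$.

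\emph{Step 3 (weights).} Part (a): $\tilde m_i>0$, so Lemma~\ref{lem:wstep} returns $1$ from any current weight. Iterating keeps $1$, so $W_1=W_\infty=1$.

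Part (b): $\varpi\le\lambda_n$ and $t<1$, so
\[
W_1(\theta,w)_i\le\frac{\varpi(1+t)}{2t(1+\varpi)}\le\frac{\lambda_n}{t}<\frac{\lambda_n}{\alpha_n}=\alpha_n .
\]
If $w_i\le w_{\mathrm{sat}}$, then $\varpi=0$ and the update is exactly $0$. For $W_\infty$, the map $T_t$ satisfies $T_t(w)<\alpha_n=\sqrt{\lambda_n}\le w_{\mathrm{sat}}<w$ for every $w\in(w_{\mathrm{sat}},1]$, using $\lambda_n\le w_{\mathrm{sat}}^2$. Starting from any $w_i$, one update therefore lands in $[0,w_{\mathrm{sat}}]$, and the next gives $0$, which is a fixed point. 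Hence $W_\infty(\theta,w)_i=0$; this matches Lemma~\ref{lem:lla-limit}(b). The hypothesis $\lambda_n\le w_{\mathrm{sat}}^2<1$ also makes $\alpha_n<1$, so $c_n^+$ is defined.
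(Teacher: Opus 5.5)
Your proposal is correct and follows the paper's proof essentially step for step. You normalize $\theta$, use $G_n$ to fix $\widetilde Y_i=\sign(v_i)$ on $\mathcal D_n(\delta)$, bound $|\phi_i^\top\theta-v_i|\le B\delta$ and $\hat s<3B$ (positivity a.s. via affine independence of the $\pm\phi_i$, where your union over sign patterns spells out the paper's one-line remark), and read off the weights from Lemma~\ref{lem:wstep} using $\lambda_n/t<\alpha_n\le w_{\mathrm{sat}}$ to reach $0$ within two passes.
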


\begin{lemma}[The $\theta$-step with nearly correct weights]
\label{lem:thetastep}
Under Assumption~\ref{ass:endpoint}, on $G_n$, let $\delta\ge0$ be such that
$b=b_n(\delta)<1$, let $0\le\alpha<\tfrac12$, let $w\in[0,1]^n$ satisfy
$|w_i-w_i^*|\le\alpha$ for all $i\in\mathcal D_n(\delta)$. Then any $\hat\theta$ with
$L_n(\hat\theta;w)\le L_n(\theta_{\mathrm{ref}};w)$, $\theta_{\mathrm{ref}}=\{\log(1/b)/b\}\vartheta$ --- in particular
$\hat\theta(w)$ when $R_n\ge\log(1/b)/b$ --- satisfies
\[
\hat e(\hat\theta):=\frac1n\#\{i:\sign(\phi_i^\top\hat\theta)\ne\sign(v_i)\}
\le\gamma+\frac{b+\log\frac1{1-\alpha}+\gamma\{\log(1/b)+\log2\}}{\log 2},\qquad \gamma=\gamma_n(\delta).
\]
\end{lemma}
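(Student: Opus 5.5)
The plan is a direct loss comparison. Upper-bound $L_n(\theta_{\mathrm{ref}};w)$ term by term. Lower-bound $L_n(\hat\theta;w)$ by $\log 2$ times the share of detectable comparisons that $\hat\theta$ misorders. The assumed inequality $L_n(\hat\theta;w)\le L_n(\theta_{\mathrm{ref}};w)$ then bounds that share. Adding the undetectable comparisons, at most $\gamma$, gives the claim.

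\emph{Reduction to clean labels.} Write the $i$-th term through the clean label $\widetilde Y_i$, using the symmetry of Remark~\ref{rem:symmetry}. Put $\tilde w_i=w_i$ if $w_i^*=1$ and $\tilde w_i=1-w_i$ if $w_i^*=0$. Then $Y_i(w_i-\frac12)=\widetilde Y_i(\tilde w_i-\frac12)$, so with $x_i=\widetilde Y_i\phi_i^\top\theta$,
\[
r_i=\tilde w_i\,\sigma(x_i)+(1-\tilde w_i)\,\sigma(-x_i).
\]
For $i\in\mathcal D_n(\delta)$ the hypothesis $|w_i-w_i^*|\le\alpha$ gives $\tilde w_i\ge1-\alpha>\frac12$. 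Also $|v_i|>b\ge(1+\epsilon)\log n/\kappa_n$, so on $G_n$ we have $\widetilde Y_i=\sign(v_i)$.

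\emph{Upper bound at $\theta_{\mathrm{ref}}$.} For $i\in\mathcal D_n(\delta)$, $x_i=|v_i|\log(1/b)/b\ge\log(1/b)$. Hence
\[
r_i\ge\tilde w_i\,\sigma(x_i)\ge\frac{1-\alpha}{1+b},
\]
and the term is at most $\log\frac1{1-\alpha}+\log(1+b)\le\log\frac1{1-\alpha}+b$. For $i\notin\mathcal D_n(\delta)$, $|x_i|\le|v_i|\log(1/b)/b\le\log(1/b)$, using $|v_i|\le b$ and $\|\vartheta\|=1$. Since $r_i$ is a convex combination of $\sigma(\pm x_i)$,
\[
r_i\ge\sigma(-\log(1/b))=\frac{b}{1+b}\ge\frac b2,
\]
and the term is at most $\log(1/b)+\log2$. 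This case needs no control of $w_i$. Averaging,
\[
L_n(\theta_{\mathrm{ref}};w)\le b+\log\tfrac1{1-\alpha}+\gamma\{\log(1/b)+\log 2\}.
\]

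\emph{Lower bound at $\hat\theta$, and the main point.} Every term is nonnegative because $r_i\le1$. Take $i\in\mathcal D_n(\delta)$ with $\sign(\phi_i^\top\hat\theta)\ne\sign(v_i)=\widetilde Y_i$; this includes $\phi_i^\top\hat\theta=0$ under the convention $\sign(0)=0$. Then $x_i\le0$, so $\sigma(x_i)\le\frac12\le\sigma(-x_i)$. Because $\tilde w_i\ge\frac12$ puts at least half the mass on the smaller value, $r_i\le\frac12$ and the term is at least $\log2$.

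This is the step where $\alpha<\frac12$ is essential. A weight of $\frac12$ or beyond would let a misordered comparison cost nothing. It is also the counting step that fails in the down-weighting regime mentioned after Proposition~\ref{prop:fixedcut}.

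\emph{Combining.} The share of misordered comparisons inside $\mathcal D_n(\delta)$ is therefore at most $L_n(\hat\theta;w)/\log2\le L_n(\theta_{\mathrm{ref}};w)/\log2$. Adding at most $\gamma$ misordered comparisons outside $\mathcal D_n(\delta)$ gives the stated bound on $\hat e(\hat\theta)$.

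For the trust-region minimizer it only remains to check feasibility. Since $\|\theta_{\mathrm{ref}}\|=\log(1/b)/b\le R_n$, the point $\theta_{\mathrm{ref}}$ lies in the ball, so $L_n(\hat\theta(w);w)\le L_n(\theta_{\mathrm{ref}};w)$ holds automatically.
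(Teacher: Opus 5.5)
Your proof is correct and follows the paper's argument: you compare against $\theta_{\mathrm{ref}}$ split over $\mathcal D_n(\delta)$ and its complement, and you charge at least $\log 2$ to each misordered detectable pair. Your $\tilde w_i$ rewriting is the paper's identity $r_i=(1-\alpha_i)\sigma(s_i)+\alpha_i\sigma(-s_i)$, extended to all pairs so that it also yields the paper's ``any weight'' bound $r_i\ge\sigma(-|x_i|)$.

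One small correction to your side remark. At $\tilde w_i=\tfrac12$ the term equals $\log 2$ for every $\theta$, so the counting step still holds at $\alpha=\tfrac12$. What fails there is the upper bound, whose $\log\frac1{1-\alpha}$ becomes $\log 2$ and makes the conclusion vacuous.
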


\subsection*{Scope of Theorem~\ref{thm:recovery}(c)}
Under Assumption~\ref{ass:linear}(iv), $\Prob(|v|\le b)\ge(c_*/B)\,b$ for $b\le B\pi$ (a sign disagreement at
angle $\delta$ forces $|v|\le B\delta$), so $\Prob\{\min_i|v_i|>b\}\le e^{-nc_*b/B}$ and the condition of part (c)
fails with high probability; conversely, a design with $|v|\ge v_0>0$ violates (iv), and then only part (c)
applies. Part (c) is therefore stated with its own, weaker hypotheses: Assumptions~\ref{ass:linear}(i),(v),
$\kappa_n\le n$ and~\ref{ass:endpoint}, with $n\ge2p+2$, $\lambda_n\le w_{\mathrm{sat}}^2$, $\alpha_n\le\frac14$
(needed only under $W_1$) and
$b_n(\bar\delta_0)\le e^{-1}$; it holds on $G_n\cap\{\delta_0\le\bar\delta_0\}$, of probability at least
$1-n^{-\epsilon}-\Prob(\delta_0>\bar\delta_0)$, and the oracle fit is any minimizer of the corrected-label
likelihood over the trust region. The choice of $\bar\delta_{0,n}$ and $\zeta_n$ in Theorem~\ref{thm:joint}:
$\bar\delta_{0,n}\to0$ with $\Prob(\delta_0>\bar\delta_{0,n})\to0$ (possible under Assumption~\ref{ass:init}) and
$\log(1/\zeta_n)=o(n/\log^2n)$.

\begin{remark}[The initial fit]
\label{rem:init}
The initial fit need only point in the right direction; its scale is irrelevant, since the weights use
standardized margins. Assumption~\ref{ass:init} is where the requirement that most comparisons be reliable enters, and it
is a genuine restriction on the contamination. When each comparison is flipped
independently of its features with probability $\rho<\tfrac12$ and the design is
rotationally symmetric, the reported label depends on $\phi$ only through
$\phi^\top\vartheta$, so the population minimizer of the logistic loss is proportional to
$\vartheta$ \citep{brillinger1982,liduan1989} with a positive constant; contamination shrinks
the scale of the DPO fit but not its direction, and plain DPO is expected to satisfy
Assumption~\ref{ass:init} (we expect the step from the population to the empirical minimizer to follow by
M-estimation arguments for fixed $p$; since $\kappa_n\to\infty$ it is not covered by standard results, and we do
not prove it). Feature-dependent flips can break it even when
they are a minority: if $\mathcal F^*$ is the wedge $\{\sign(\phi^\top\vartheta)\ne\sign(\phi^\top u')\}$
for a direction $u'\ne\vartheta$, the reported labels follow $\sign(\phi^\top u')$ wherever the clean label
equals $\sign(\phi^\top\vartheta)$, so DPO is pulled towards $u'$, and RW-DPO started there stays near it.
\end{remark}

\begin{remark}[Direction versus scale]
\label{rem:scale}
Theorem~\ref{thm:joint} is stated for the direction $\vartheta$, not for the relative error
$\|\hat\theta-\theta^*\|/\|\theta^*\|$, and a stronger statement should not be expected in general. The two
parts of $\theta^*=\kappa_n\vartheta$ are identified by different comparisons. Clear-cut comparisons
($|v_i|$ large) have labels that are almost surely correct whatever $\kappa_n$ is; they fix the
direction. Only close calls ($|v_i|$ small), whose labels err with probability $\sigma(-\kappa_n|v_i|)$,
carry information about the scale $\kappa_n$. A flip of a close call cannot be told apart from an
ordinary Bradley--Terry error (Theorem~\ref{thm:info-recovery}), so when flips occur among close
calls at a non-vanishing rate, they inflate the apparent error rate there and bias every estimator of
$\kappa_n$, ours included. The situation is different under \emph{clear-cut contamination},
$\mathcal F^*\subseteq\mathcal D_n(\delta)$: then every flip is reversed, at once under $W_\infty$ and from the second pass under $W_1$ (Lemma~\ref{lem:wstep-uniform}),
and if in addition the cut leaves the close calls at $w_i=1$, the $\theta$-step coincides with the
oracle fit on the clean labels and RW-DPO's first refit (second under $W_1$) inherits the
relative-error consistency of that fit (Corollary~\ref{cor:oracle} below). Both requirements on the cut --- large enough to leave close calls untouched, small enough that
$\lambda_n\le w_{\mathrm{sat}}^2$ --- can hold together only for large $n$, since $\log n/\kappa_n\to0$; at moderate
$n$ a cut that permits reversal also down-weights some clean close calls and biases the scale.
\end{remark}

\subsection*{Relative-error consistency under clear-cut contamination}

Remark~\ref{rem:scale} explains why the scale of $\theta^*$ cannot be guaranteed in general. When every
flip is clear-cut and the cut is slow enough to leave every clean comparison at $w_i=1$, the first
refit (the second under $W_1$) \emph{is} the oracle fit on the clean labels, and inherits whatever the
oracle fit achieves, relative error included. The one extra ingredient is a lower bound on the robust scale
$\hat s$, which needs the label-oriented margins to be spread out:
\begin{equation}
\label{eq:spread}
\text{there are }s_0>0,\ \eta_S>0,\ \delta_S>0\text{ with }
\sup_{q\in\Real}\Prob\big(|\phi^\top u-q|\le s_0\big)\le\tfrac14-\eta_S
\text{ for every unit }u\text{ with }\angle(u,\vartheta)\le\delta_S.
\tag{S}
\end{equation}
Condition \eqref{eq:spread} is not an extra assumption: under Assumption~\ref{ass:linear}(i),(v) it holds
with $\eta_S=\tfrac18$, $\delta_S=\pi$ and some $s_0>0$. Indeed $Q(u,s)=\sup_q\Prob(|\phi^\top u-q|\le s)\to0$ as
$s\downarrow0$ for each unit $u$, since $\phi^\top u$ has a Lebesgue density, and
$Q(u,s)\le Q(u',s+B\|u-u'\|)$; covering the unit sphere by finitely many balls of radius $s'(u')/B$ with
$Q(u',2s'(u'))<\tfrac18$ gives $Q(u,s_0)<\tfrac18$ for all $u$, with $s_0=\min s'$. For a design uniform on a
ball one may take any $s_0<1/(8\sup_u\|f_u\|_\infty)$, where $f_u$ is the density of $\phi^\top u$.

\begin{corollary}[Oracle property under clear-cut contamination]
\label{cor:oracle}
Let Assumptions~\ref{ass:linear}(i),(v) and~\ref{ass:endpoint} hold, and let $s_0$ be a constant for which
\eqref{eq:spread} holds; let
$c_n\to0$ and $\delta_0=\angle(\theta^{(0)},\vartheta)$, and suppose
\begin{enumerate}[label=(\roman*),leftmargin=*,nosep]
\item \emph{(clear-cut contamination)} $\Prob\{\mathcal F^*\subseteq\mathcal D_n(\delta_0)\}\to1$;
\item \emph{(slow cut)} $\Prob\big\{(1+\epsilon)\log n/\kappa_n+B\delta_0\le s_0c_n\big\}\to1$.
\end{enumerate}
Let $\ell_n(\theta)=-\frac1n\sum_{i=1}^n\log\sigma(\widetilde Y_i\phi_i^\top\theta)$ be the oracle
(corrected-label) negative log-likelihood and $\Theta^{\mathrm{or}}_n=\argmin_{\|\theta\|\le R_n}\ell_n$ the set
of trust-region oracle fits. With probability tending to one:
\begin{enumerate}[label=(\alph*),leftmargin=*,nosep]
\item under $W_\infty$, $w^{(1)}=w^*$ exactly and $\theta^{(1)}\in\Theta^{\mathrm{or}}_n$;
\item under $W_1$, $w^{(1)}_i=1$ on every clean comparison and $w^{(1)}_i\le\alpha_n$ on every flip; if $\theta^{(1)}\ne0$ and (i)
and (ii) hold also with $\delta_1=\angle(\theta^{(1)},\vartheta)$ in place of $\delta_0$ --- conditions on the first
refit that the other hypotheses do not guarantee --- then $w^{(2)}=w^*$ and
$\theta^{(2)}\in\Theta^{\mathrm{or}}_n$.
\end{enumerate}
Consequently, if the oracle fit is relative-error consistent,
$\sup_{\theta\in\Theta^{\mathrm{or}}_n}\|\theta-\theta^*_n\|/\|\theta^*_n\|\xrightarrow{p}0$, then so is RW-DPO:
$\|\theta^{(1)}-\theta^*_n\|/\|\theta^*_n\|\xrightarrow{p}0$ under $W_\infty$, and, under the added hypotheses of (b), the same for $\theta^{(2)}$ under $W_1$.
\end{corollary}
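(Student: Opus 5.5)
The plan is to reduce everything to Lemma~\ref{lem:wstep-uniform}, applied on the event $G_n\cap\{\text{(i),(ii) hold}\}$; this event has probability tending to one, since $\Prob(G_n^c)\le n^{-\epsilon}$. Two facts carry the argument: every flip is detectable, and every clean comparison has $w^{(1)}_i=1$. Once $w^{(1)}=w^*$ holds exactly, the likelihood $L_n(\theta;w^*)$ coincides term by term with the oracle likelihood $\ell_n$. The reason is that for $w_i^*=0$ the term $\tfrac12-\tfrac12\tanh(Y_i\phi_i^\top\theta/2)=\sigma(-Y_i\phi_i^\top\theta)=\sigma(\widetilde Y_i\phi_i^\top\theta)$, and for $w_i^*=1$ we have $Y_i=\widetilde Y_i$. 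So the global minimizer over $\|\theta\|\le R_n$ lies in $\Theta^{\mathrm{or}}_n$. The final consequence is then immediate from the assumed oracle consistency.

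\emph{Flips.} By (i), every $i\in\mathcal F^*$ lies in $\mathcal D_n(\delta_0)$. Lemma~\ref{lem:wstep-uniform}(b) at $\theta=\theta^{(0)}$, $w=\mathbf 1$ gives $W_\infty(\theta^{(0)},\mathbf 1)_i=0$ and $W_1(\cdot)_i\le\alpha_n$. This lemma needs $\lambda_n\le w_{\mathrm{sat}}^2$, which holds for large $n$ because $c_n\to0$ and $w_{\mathrm{sat}}$ is fixed.

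\emph{Clean comparisons: the main obstacle.} Clean comparisons inside $\mathcal D_n(\delta_0)$ keep weight $1$ by part (a) of the lemma. The difficulty is clean close calls with $|v_i|\le b_n(\delta_0)$, which the lemma does not cover; for these I will use the slow-cut condition (ii) directly. On $G_n$, a clean $i$ with $\kappa_n|v_i|\ge(1+\epsilon)\log n$ has $Y_i=\sign(v_i)$, so $\xi_i(\theta^{(0)})\ge\|\theta^{(0)}\|(|v_i|-B\delta_0)$. A clean $i$ with smaller $|v_i|$ has $|\phi_i^\top u|\le|v_i|+B\delta_0\le(1+\epsilon)\log n/\kappa_n+B\delta_0\le s_0c_n$ for $u=\theta^{(0)}/\|\theta^{(0)}\|$. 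Hence every clean comparison satisfies $\xi_i/\|\theta^{(0)}\|\ge -s_0c_n$.

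It remains to show $\hat s(u)\ge s_0$, and I would do this through (S) together with a uniform empirical-process (VC) bound over $u$. Fix any centre $q=\mathrm{med}\,\xi$ and consider the at least $n/2$ points on each side of it. If $\hat s/1.4826<s_0$, more than half of the $Y_i\phi_i^\top u$ would lie in $[q-s_0,q+s_0]$. Because $Y_i=\pm1$, this forces at least a quarter of the $\phi_i^\top u$ into $[q-s_0,q+s_0]$ or into $[-q-s_0,-q+s_0]$. That contradicts (S) uniformly in $u$ and $q$ with high probability, since slabs have finite VC dimension and (S) gives the margin $\tfrac14-\eta_S$. This yields $z_i\ge-s_0c_n/\hat s\ge -c_n$, so $\tilde m_i\ge-\tanh(c_n/2)=-\lambda_n/(2+\lambda_n)$. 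By Lemma~\ref{lem:wstep} starting from $w=\mathbf 1$ (where $\varpi=\lambda_n$), $W_1$ returns $1$. Under $W_\infty$ the iterates then stay at $1$ (Lemma~\ref{lem:lla-limit}(b)). This proves (a) and the first claim of (b).

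\emph{Second pass under $W_1$.} Here the weights after the first pass are not $\mathbf 1$. Flips now carry $w^{(1)}_i\le\alpha_n$, and I need $\alpha_n\le w_{\mathrm{sat}}$ so that the next $W_1$ step gives $0$ exactly (Lemma~\ref{lem:wstep-uniform}(b)). This follows from $\alpha_n=\sqrt{\lambda_n}\le w_{\mathrm{sat}}$. Clean comparisons still have weight $1$, hence $\varpi=\lambda_n$, and the same close-call argument at $\theta^{(1)}$ applies. This uses (i), (ii) with $\delta_1$ and $\theta^{(1)}\ne0$, and gives $w^{(2)}=w^*$. The oracle identification then proceeds as before.
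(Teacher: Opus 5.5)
Your proposal is correct and follows the paper's proof essentially step for step. Both use the same good event ($G_n$, (i), (ii) and a uniform VC bound over slabs), the same two-slab pigeonhole through \eqref{eq:spread} giving $\hat s(u)\ge s_0$, and the same bound $\xi_i(u)\ge-s_0c_n$ on every clean comparison, hence $z_i\ge-c_n$ and $w_i=1$. Both then apply Lemma~\ref{lem:wstep-uniform}(b) to the flips, identify $L_n(\cdot\,;w^*)=\ell_n$, and repeat the argument at $\theta^{(1)}$ using $\alpha_n\le w_{\mathrm{sat}}$.

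One detail to add: \eqref{eq:spread} is assumed only for $\angle(u,\vartheta)\le\delta_S$, not uniformly in $u$. On the event in (ii) this holds for large $n$, because $B\delta_0\le s_0c_n\to0$. The same argument with $\delta_1$ covers the second pass, and $n\ge2p+2$ likewise holds eventually.
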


Condition (i) asks that no flip lie in the band $|v_i|\le b_n(\delta_0)$, which shrinks to zero; flips at
$|v_i|\ge t_0$ for a fixed $t_0>0$ qualify (the contamination pattern of the clear-cut design of Section~\ref{sec:exp-linear}, although that design, with a fixed
signal and Gaussian features, is not itself within the corollary's asymptotics). Condition (ii) asks that the cut be at least a constant multiple of
both the information scale $\log n/\kappa_n$ and the initial direction error; it implies
Assumption~\ref{ass:init}, and it is what keeps the close calls --- the only comparisons that carry
information about $\kappa_n$ --- at full weight. The proof also uses $\lambda_n\le w_{\mathrm{sat}}^2$
(Lemma~\ref{lem:wstep-uniform}), which holds eventually because $c_n\to0$. At a given $n$ the corollary's
two requirements on the cut, $c_n\ge\{(1+\epsilon)\log n/\kappa_n+B\delta_0\}/s_0$ and
$c_n\le\log(1+w_{\mathrm{sat}}^2)$, can hold together only when
$(1+\epsilon)\log n/\kappa_n\le s_0\log(1+w_{\mathrm{sat}}^2)\approx s_0w_{\mathrm{sat}}^2$ (sufficient conditions
of the proof, not necessary ones for the estimator). The corollary is conditional on the oracle: maximum likelihood with a growing
signal learns $\kappa_n$ only from the close calls, heuristically a fraction of order $1/\kappa_n$ of the
sample, so its relative-error consistency needs $\kappa_n$ to grow well below $n$; for standard Gaussian
design, \citet{hsumazumdar2024} show how the sample complexity of estimating the logistic parameter to a
given $\ell_2$ error depends on the dimension and on the inverse temperature (here $\kappa_n$), with distinct
high-, moderate- and low-temperature regimes; their setting is not covered by
Assumption~\ref{ass:linear}(i), and we do not use their results.

\begin{remark}[The cut and clean close calls]
\label{rem:cut-tradeoff}
A cut with $c_n=O(r_n^2)$ (Corollary~\ref{cor:rates}) flags clean close calls, counted as swamping in
Theorem~\ref{thm:recovery}(a). Corollary~\ref{cor:oracle} gives sufficient conditions for keeping every
clean comparison at $w_i=1$ and reversing every flip: $c_n\ge\{(1+\epsilon)\log n/\kappa_n+B\delta_0\}/s_0$
with probability tending to one, $w_{\mathrm{sat}}\ge(e^{c_n}-1)^{1/2}$, and every flip in
$\mathcal D_n(\delta_0)$. Since $c_n^+\le4\alpha_n\le4(2c_n)^{1/2}$ for $c_n\le1$ and $\alpha_n\le\frac12$, the last holds
when every flip has $|v_i|>\max\{(1+\epsilon)\log n/\kappa_n,\ B\delta_0+12B(2c_n)^{1/2}\}$; then, with
probability tending to one, $w^{(1)}=w^*$ and $\theta^{(1)}\in\Theta^{\mathrm{or}}_n$ under $W_\infty$ (under $W_1$,
at the second pass, with the added hypotheses of Corollary~\ref{cor:oracle}(b)). This flip separation is
of order $c_n^{1/2}$ and grows with the cut, so the sufficient conditions are weakest for a cut of order
$\log n/\kappa_n+\delta_0$, where the separation is of order $(\log n/\kappa_n+\delta_0)^{1/2}$ --- larger than
the separation $\log n/\kappa_n+\delta_0+c_n^{1/2}$ needed for (a)--(b) at a cut $c_n=O(r_n^2)$. These are
sufficient conditions; in practice the cut is a tuning constant chosen on validation data.
\end{remark}

\begin{remark}[Why the endpoint assumption is needed]
\label{rem:endpoint}
The endpoint assumption converts reliability estimation into flip-support
recovery. When $w_i^*\in(0,1)$, one binary observation is a single Bernoulli draw
whose success probability depends on $w_i^*$, so individual consistency of an
arbitrary sequence of interior weights cannot be obtained without repeated
annotations or further structure. RW-DPO nonetheless continues to allow
$w_i\in(0,1)$ and uses such values for graded attenuation in finite samples.
\end{remark}

\section{Proofs}
\label{app:proofs}

\begin{proof}[Proof of Lemma~\ref{lem:reduce}]
$w\tfrac{1+m}{2}+(1-w)\tfrac{1-m}{2}=\tfrac12+(w-\tfrac12)m$.
\end{proof}

\begin{proof}[Proof of Lemma~\ref{lem:wstep}]
Write $r(w)=\tfrac12+(w-\tfrac12)m$, so $r'(w)=m$ and
$h'(w)=-m/r(w)-\varpi/w$. If $m>0$, or $m=0<\varpi$, then $h'<0$ on $(0,1]$ and the
minimum is at $w=1$; if $m=0=\varpi$ then $h\equiv\log2$ and $\hat w=1$ by convention. If $m<0$, setting $h'(w)=0$ gives $-mw=\varpi r(w)$, i.e.
$-mw=\varpi[\tfrac12+(w-\tfrac12)m]$; solving,
$-wm(1+\varpi)=\varpi(1-m)/2$ and hence
$w=\varpi(1-m)/\{2|m|(1+\varpi)\}$. The stationary point is a minimum since
$h''>0$ there, and clipping to $[0,1]$ handles the boundary. Setting the
expression $\ge1$ and rearranging gives $m\ge-\varpi/(2+\varpi)$. If $\varpi=0$
the penalty vanishes and $h'=-m/r>0$ for $m<0$, so the minimum is $w=0$.
\end{proof}

\begin{proof}[Proof of Proposition~\ref{prop:bayes}]
Under $H_{1i}$ and $H_{0i}$ the two Bernoulli distributions have success
probabilities $p_{1i}=\tfrac{1+g_i^*}{2}$ and $p_{0i}=\tfrac{1-g_i^*}{2}$. The
Bayes error for two simple hypotheses with equal priors is
$\tfrac12(1-\TV(P_{1i},P_{0i}))$, and for Bernoulli laws
$\TV(P_{1i},P_{0i})=|p_{1i}-p_{0i}|=|g_i^*|$. Hence $e_i^*=\tfrac{1-|g_i^*|}{2}$.
For Bradley--Terry, $\tfrac{1-\tanh(|d|/2)}{2}=\tfrac{1}{1+e^{|d|}}$, which proves
\eqref{eq:bayes-error-bt}.
\end{proof}

\begin{proof}[Proof of Theorem~\ref{thm:info-recovery}]
For Bradley--Terry, Proposition~\ref{prop:bayes} gives
$e_i^*\le e^{-|d_i^*|}\le n^{-(1+\epsilon)}$, so by the union bound
$\Prob(\widehat{\mathcal F}_{\mathrm{Bayes}}\ne\mathcal F^*)\le\sum_i e_i^*\le
n^{-\epsilon}\to0$; this holds for every fixed $\mathcal F^*$, and the equal prior is used only in the converse,
where a lower bound on the Bayes risk is also a lower bound on the worst case over $\mathcal F^*$. For the converse, the estimator maximizing
$\Prob(\widehat{\mathcal F}=\mathcal F^*)$ is the joint posterior mode. Under the
product prior and conditional independence the posterior of $w^*$ factorizes over
comparisons, so the joint mode is the vector of coordinatewise Bayes tests and its
success probability is $\prod_i(1-e_i^*)$. If $|d_i^*|\le M$ on $\mathcal I_n$
then $e_i^*\ge\tfrac{1}{1+e^M}=:c_0>0$ for each such $i$, so every estimator has
$\Prob(\widehat{\mathcal F}=\mathcal F^*)\le(1-c_0)^{c_In}\to0$.
\end{proof}

\begin{proof}[Proof of Lemma~\ref{lem:endpoint}]
For $m\ge0$, $\partial_w\{-\log[\tfrac12+(w-\tfrac12)m]\}=-m/[\tfrac12+(w-\tfrac12)m]\le0$,
so the likelihood term decreases in $w$; since $P\ge0$ and $P(1)=0$, both terms are
minimized at $w=1$. For $m<0$ write $m=-t$, $t\in(0,1)$, and
$A(w,t)=\tfrac12+(w-\tfrac12)(-t)=\tfrac{1+t}{2}-wt>0$, so
$F_{\lambda,a}(w;-t)=-\log A(w,t)+P(w)$. Let $\Delta(w,t)=F_{\lambda,a}(0;-t)-F_{\lambda,a}(w;-t)$;
the corner $w=0$ is the global minimizer iff $\Delta(w,t)\le0$ for all $w\in(0,1]$.
A direct computation gives, for each fixed $w>0$,
\[
\frac{\partial\Delta}{\partial t}
=\frac{\tfrac12-w}{A(w,t)}-\frac{1}{1+t}
=\frac{(\tfrac12-w)(1+t)-A(w,t)}{A(w,t)(1+t)}
=\frac{-w}{A(w,t)(1+t)}<0 ,
\]
so $\Delta(w,\cdot)$ is strictly decreasing in $t$ for every $w>0$. Hence if the
corner is global at some $t_0$ (i.e.\ $\Delta(\cdot,t_0)\le0$ everywhere), it is
global at every $t\ge t_0$; the set $S=\{t\in(0,1):\ w=0\text{ global}\}$ is therefore an
upper interval, nonempty by hypothesis. It stays away from $0$: as $t\downarrow0$,
$\Delta(1,t)\to P(0)-P(1)=P(0)>0$, so $\Delta(1,t)>0$ for all small $t$. It is closed in $(0,1)$,
being an intersection of the sets $\{t:\Delta(w,t)\le0\}$, each closed by continuity of
$\Delta(w,\cdot)$. Hence its infimum $\tau^{\mathrm{exact}}\in(0,1)$ belongs to it, and
$S=[\tau^{\mathrm{exact}},1)$.
\end{proof}

\begin{proof}[Proof of Lemma~\ref{lem:lla-limit}]
\emph{Monotonicity.} $\varpi$ is nondecreasing in $w$, $v\mapsto v/(1+v)$ is
increasing, and $\min\{1,\cdot\}$ preserves order; as $\varpi(w)\downarrow0$ the
first branch tends to $0$, so $T_t$ is nondecreasing and continuous on $[0,1]$
(with $T_t=0$ on $[0,w_{\mathrm{sat}}]$). For fixed $w$, $T_t(w)$ is nonincreasing
in $t$ because $(1+t)/t$ is decreasing.

(a) $w^{(1)}=T_t(1)\le1=w^{(0)}$, and if $w^{(k)}\le w^{(k-1)}$ then
$w^{(k+1)}=T_t(w^{(k)})\le T_t(w^{(k-1)})=w^{(k)}$; so the sequence is
nonincreasing, bounded below by $0$, and converges to some $w^{(\infty)}$, which is
a fixed point by continuity of $T_t$. If $\bar w$ is any fixed point then
$w^{(0)}=1\ge\bar w$, and $w^{(k)}\ge\bar w$ implies
$w^{(k+1)}=T_t(w^{(k)})\ge T_t(\bar w)=\bar w$; hence $w^{(\infty)}\ge\bar w$.

(b) $T_t=0$ on $(0,w_{\mathrm{sat}}]$, so $T_t$ has no fixed point there. If
$T_t(w)<w$ on $(w_{\mathrm{sat}},1]$, the only fixed point is $0$ and
$w^{(\infty)}=0$ by (a); since $w^{(k)}\downarrow0<w_{\mathrm{sat}}$, some
$w^{(k)}\le w_{\mathrm{sat}}$, and then $w^{(k+1)}=0$, which is absorbing. If instead
$T_t(\bar w)\ge\bar w$ for some $\bar w\in(w_{\mathrm{sat}},1]$, then $h=T_t-\mathrm{id}$
satisfies $h(\bar w)\ge0\ge h(1)$, so by continuity there is a fixed point in
$[\bar w,1]$ and $w^{(\infty)}>0$ by (a). When $t\le\lambda/(2+\lambda)$,
$T_t(1)=1$ by Lemma~\ref{lem:wstep}, so $w^{(\infty)}=1$.

(c) For $w\in(w_{\mathrm{sat}},1]$, $T_t(w)<w$ holds exactly when the unclipped
value is below $w$ (a clipped value equals $1\ge w$), i.e.\ when
$\varphi(w)(1+t)/(2t)<w$, i.e.\ $\varphi(w)/w<2t/(1+t)$; and $\varphi(w_{\mathrm{sat}})=0$.
So by (b), $w^{(\infty)}=0$ iff $\varphi(w)/w<2t/(1+t)$ on $[w_{\mathrm{sat}},1]$,
iff $M<2t/(1+t)$ (the maximum is attained, $\varphi$ being continuous on a compact
set), iff $t>M/(2-M)$ when $M<1$; when $M\ge1$ no $t<1$ qualifies. At
$t=\tau^{\mathrm{LLA}}$ the maximizer $\bar w$ has $T_t(\bar w)=\bar w$, so
$w^{(\infty)}>0$ by (b). The lower bound follows from the last statement of (b).

(d) As $t\uparrow1$, $T_t(w)\downarrow T_1(w):=\varpi(w)/(1+\varpi(w))$, with
$0\le T_t(w)-T_1(w)\le(1-t)/(2t)$ uniformly in $w$. If \eqref{eq:Cinf} holds,
$w-T_1(w)$ is continuous and positive on the compact $[w_{\mathrm{sat}},1]$, hence at
least some $\nu>0$ there; for $t$ with $(1-t)/(2t)<\nu$ we get $T_t(w)<w$ on
$(w_{\mathrm{sat}},1]$, so $\tau^{\mathrm{LLA}}<1$. Conversely, if
$T_1(\bar w)\ge\bar w$ for some $\bar w\in[w_{\mathrm{sat}},1]$ (necessarily
$\bar w>w_{\mathrm{sat}}$, as $T_1(w_{\mathrm{sat}})=0$), then
$T_t(\bar w)\ge T_1(\bar w)\ge\bar w$ for every $t<1$, and (b) gives
$w^{(\infty)}>0$ for every $t<1$, so $\tau^{\mathrm{LLA}}=1$. Finally, if
$(1+\lambda)e^{-a\lambda}>\lambda$, i.e.\ $T_1(1)=\lambda/(1+\lambda)<w_{\mathrm{sat}}$,
then $T_1(w)\le T_1(1)<w_{\mathrm{sat}}\le w$ on $[w_{\mathrm{sat}},1]$, which is
\eqref{eq:Cinf}. (Equivalently, \eqref{eq:Cinf} is $M<1$.)

(e) Fix $w\in(w_{\mathrm{sat}},1)$, so $\varpi(w)>0$. The LLA surrogate
$h_w(v)=-\log[\tfrac{1+t}{2}-vt]+\varpi(w)(-\log v)$ is strictly convex in $v$, and
$T_t(w)$ is its minimizer over $[0,1]$ (Lemma~\ref{lem:wstep}). Hence $T_t(w)<w$ iff
$h_w'(w)>0$. Since $P'(w)=-\varrho'(-\log w)/w=-\varpi(w)/w$,
\[
h_w'(w)=\frac{t}{\tfrac{1+t}{2}-wt}-\frac{\varpi(w)}{w}=F'_{\lambda,a}(w;-t).
\]
On $(0,w_{\mathrm{sat}})$, $P$ is constant, so $F'=t/(\tfrac{1+t}{2}-wt)>0$; $P$ is
$C^1$ on $(0,1)$ because $\varpi\to0$ at $w_{\mathrm{sat}}$; and at $w=1$,
$T_t(1)<1$ iff $F'(1^-)>0$ by the same argument. So if $w^{(\infty)}=0$, i.e.\
$T_t(w)<w$ on $(w_{\mathrm{sat}},1]$ by (b), then $F'>0$ on $(0,1)$, $F$ is strictly
increasing on $[0,1]$, and $w=0$ is its unique minimizer. Every
$t>\tau^{\mathrm{LLA}}$ therefore lies in the endpoint region of
Lemma~\ref{lem:endpoint}, which gives $\tau^{\mathrm{exact}}\le\tau^{\mathrm{LLA}}$
and, under \eqref{eq:Cinf}, nonemptiness of that region.
\end{proof}

\begin{proof}[Proof of Lemma~\ref{lem:wstep-uniform}]
Fix such a $\theta$ and let $u=\theta/\|\theta\|$; $z_i$ depends on $\theta$ only
through $u$, because $\xi_i$ and $\hat s$ are both proportional to $\|\theta\|$. Since
$\|u-\vartheta\|=2\sin(\angle/2)\le\delta$,
$|\phi_i^\top u-v_i|\le B\delta$ for every $i$. Each $|\xi_i(u)|\le B$, so
$|\xi_i-\mathrm{med}\,\xi|\le2B$ and $\hat s(u)\le2.966B<3B$. Positivity holds for all $u$ at once,
almost surely: $\hat s(u)=0$ would put at least $n/2\ge p+1$ of the points $Y_i\phi_i$ on one
affine hyperplane $\{x:x^\top u=\mathrm{med}\,\xi\}$, and under a density (Assumption~\ref{ass:linear}(v))
no $p+1$ of the $\pm\phi_i$ are affinely dependent, almost surely. (Only the upper bound
on $\hat s$ is used below.) Take $i\in\mathcal D_n(\delta)$; then
$\kappa_n|v_i|>(1+\epsilon)\log n$, so on $G_n$, $\widetilde Y_i=\sign(v_i)$.
(a) $Y_i=\sign(v_i)$, so $\xi_i(u)=|v_i|+\sign(v_i)\phi_i^\top(u-\vartheta)\ge|v_i|-B\delta>0$;
then $\tilde m_i>0$, and Lemma~\ref{lem:wstep} returns $w=1$ from any current weight,
which is then a fixed point.
(b) $Y_i=-\sign(v_i)$, so $\xi_i(u)\le-(|v_i|-B\delta)<-3Bc_n^+$, hence $z_i<-c_n^+$ and
$t_i:=|\tilde m_i|>\tanh(c_n^+/2)=\alpha_n$. Whatever the current weight, the adaptive
weight satisfies $0\le\varpi\le\lambda_n$, and Lemma~\ref{lem:wstep} gives
\[
W_1(\theta,w)_i\le\frac{\varpi(1+t_i)}{2t_i(1+\varpi)}\le\frac{\lambda_n}{t_i}<\frac{\lambda_n}{\alpha_n}=\alpha_n .
\]
If $w_i\le w_{\mathrm{sat}}$ then $\varpi=0$ and the update is $0$. Since
$\alpha_n\le w_{\mathrm{sat}}$, a second pass at the same $\theta$ therefore returns $0$,
which is absorbing; so the repeated updates reach $0$ within two passes from any start,
and $W_\infty(\theta,w)_i=0$.
\end{proof}

\begin{proof}[Proof of Lemma~\ref{lem:thetastep}]
\emph{Pairs with nearly correct weights.} Let $i\in\mathcal D_n(\delta)$,
$\alpha_i=|w_i-w_i^*|$ and $s_i=\widetilde Y_i\phi_i^\top\theta$. If $w_i^*=1$ then
$Y_i=\widetilde Y_i$ and $w_i=1-\alpha_i$; if $w_i^*=0$ then $Y_i=-\widetilde Y_i$ and
$w_i=\alpha_i$. In both cases, using $\tfrac12+\tfrac12\tanh(s/2)=\sigma(s)$,
\[
r_i=\tfrac12+\big(w_i-\tfrac12\big)\tanh\tfrac{Y_i\phi_i^\top\theta}{2}=(1-\alpha_i)\,\sigma(s_i)+\alpha_i\,\sigma(-s_i);
\]
with $\alpha_i=0$ this is the corrected-label likelihood.
\emph{Any weight.} Since $\tfrac12+(w-\tfrac12)\tanh(t/2)\ge\tfrac12-\tfrac12|\tanh(t/2)|=\sigma(-|t|)$,
every term satisfies $0\le-\log r_i\le\log(1+e^{|\phi_i^\top\theta|})\le|\phi_i^\top\theta|+\log2$.
\emph{Upper bound.} By hypothesis $L_n(\hat\theta;w)\le L_n(\theta_{\mathrm{ref}};w)$, with
$\theta_{\mathrm{ref}}=\{\log(1/b)/b\}\vartheta$ (for $\hat\theta(w)$ this holds because $\theta_{\mathrm{ref}}$ is feasible,
$\|\theta_{\mathrm{ref}}\|\le R_n$). For $i\in\mathcal D_n(\delta)$, on $G_n$,
$s_i=|v_i|\log(1/b)/b\ge\log(1/b)$, so $r_i\ge(1-\alpha)\sigma(s_i)$ and the term is at most
$\log\frac1{1-\alpha}+\log(1+b)\le\log\frac1{1-\alpha}+b$. For $i\notin\mathcal D_n(\delta)$,
$|v_i|\le b$ and the term is at most $\log(1/b)+\log2$. Hence
$L_n(\hat\theta;w)\le L_n(\theta_{\mathrm{ref}};w)\le b+\log\frac1{1-\alpha}+\gamma\{\log(1/b)+\log2\}$.
\emph{Lower bound.} If $i\in\mathcal D_n(\delta)$ and $\sign(\phi_i^\top\hat\theta)\ne\sign(v_i)=\widetilde Y_i$,
then $s_i\le0$, so $r_i=\alpha_i+(1-2\alpha_i)\sigma(s_i)\le\tfrac12$ (as $\alpha_i\le\alpha<\tfrac12$) and the term
is at least $\log2$; all terms are nonnegative. So the share of
$\mathcal D_n(\delta)$-pairs misclassified by $\hat\theta$ is at most
$L_n(\hat\theta;w)/\log2$; the pairs outside $\mathcal D_n(\delta)$ add at most $\gamma$.
\end{proof}

\begin{proof}[Proof of Theorem~\ref{thm:joint}]
\emph{Good event.} Let $H_n=\{\sup_{u\ne0}|\hat e(u)-e(u)|\le\eta_n\}$ and
$H_n'=\{\sup_{b\ge0}|\tfrac1n\#\{i:|v_i|\le b\}-\Prob(|v|\le b)|\le\eta_n\}$. The sets
$\{x:\sign(x^\top u)\ne\sign(x^\top\vartheta)\}$, $u\ne0$, are Boolean combinations of a fixed number of
half-spaces through the origin and fixed sets, so they form a VC class with dimension depending only on $p$.
By the chaining bound for VC classes \citep[Thm.~2.6.7 and Sec.~2.14]{vdvw1996} the expected supremum in $H_n$
is at most $C_pn^{-1/2}$, and by the bounded-difference inequality $\Prob(H_n^c)\le\zeta$; the
Dvoretzky--Kiefer--Wolfowitz inequality (with Massart's constant) gives $\Prob(H_n'^c)\le\zeta$.
With $\Prob(G_n^c)\le n^{-\epsilon}$, the event $E_n=G_n\cap H_n\cap H_n'\cap\{\delta_0\le\bar\delta_0\}$ has
$\Prob(E_n)\ge1-n^{-\epsilon}-2\zeta-\Prob(\delta_0>\bar\delta_0)$.

\emph{The one-step bound.} For $\delta$ with $b=b_n(\delta)\le\min\{e^{-1},b_0\}$ put
$\gamma^+=\bar f\,b+\eta_n$ and
\[
\bar F_n(\delta)=c_*^{-1}\Big[\eta_n+\gamma^++\frac{b+\log\frac1{1-\alpha_n}+\bar f\,b\{\log(1/b)+\log2\}+\eta_n(\log n+\log2)}{\log2}\Big].
\]
Multiplying out, $c_*\log2\cdot\bar F_n(\delta)=b+\bar f\,b\log(4/b)+\log\frac1{1-\alpha_n}+\eta_n\log(8n)$. Since
$\log(4/b)\ge1$ and $\log\frac1{1-\alpha}\le2\alpha$ for $\alpha\le\frac12$, $\bar F_n\le F_n$. Moreover $F_n$ is
nondecreasing on $\{\delta:b_n(\delta)\le e^{-1}\}$, because $b_n$ is nondecreasing in $\delta$ and
$b\mapsto b\log(4/b)$ is increasing on $(0,4/e)$.

\emph{Induction on $E_n$.} Suppose $\delta_{k-1}\le\bar\delta_{k-1}$ and put $b=b_n(\delta_{k-1})\le b_n(\bar\delta_{k-1})$.
Lemma~\ref{lem:wstep-uniform}, applied at $\theta=\theta^{(k-1)}$ with current weights $w^{(k-1)}$, gives
$|w_i^{(k)}-w_i^*|\le\alpha_n$ on $\mathcal D_n(\delta_{k-1})$. On $H_n'$ and by
Assumption~\ref{ass:linear}(iii), $\gamma_n(\delta_{k-1})\le\bar f b+\eta_n=\gamma^+$.
Lemma~\ref{lem:thetastep} applies with $\alpha=\alpha_n$, since $b\ge(1+\epsilon)\log n/\kappa_n$ and
$\kappa_n\le n$ give $\log(1/b)\le\log n$ and $\log(1/b)/b\le\kappa_n\le R_n$; using $\log(1/b)\le\log n$ for the
$\eta_n$ part of $\gamma^+$, it gives $\hat e(\theta^{(k)})\le c_*\bar F_n(\delta_{k-1})-\eta_n$. (Here $\theta^{(k)}\ne0$: the bound of Lemma~\ref{lem:thetastep} has the form $\gamma+U/\log2$ with
$U\ge L_n(\theta_{\mathrm{ref}};w^{(k)})$, and it is at most $c_*\bar\delta_k-\eta_n<1$, since $\bar\delta_k<\pi$ and
$c_*\le1/\pi$ (take $u=-\vartheta$ in Assumption~\ref{ass:linear}(iv)); so
$L_n(\theta^{(k)};w^{(k)})\le L_n(\theta_{\mathrm{ref}};w^{(k)})<\log2=L_n(0;w^{(k)})$.) On $H_n$, $e(\theta^{(k)})\le c_*\bar F_n(\delta_{k-1})$, and
Assumption~\ref{ass:linear}(iv) gives $\delta_k\le\bar F_n(\delta_{k-1})\le F_n(\delta_{k-1})\le F_n(\bar\delta_{k-1})=\bar\delta_k$.
The weight bound follows: $\frac1n\sum_i|w_i^{(k)}-w_i^*|\le\gamma_n(\delta_{k-1})+\alpha_n\le\bar f b_n(\bar\delta_{k-1})+\eta_n+\alpha_n$.

\emph{Asymptotics.} Under Assumption~\ref{ass:init} there are deterministic $\bar\delta_{0,n}\to0$ with
$\Prob(\delta_0>\bar\delta_{0,n})\to0$; take $\zeta_n\to0$ with $\log(1/\zeta_n)=o(n/\log^2n)$. Then
$\eta_n\log(8n)\to0$, $\alpha_n\to0$ and $b_n(\delta_n)\to0$ whenever $\delta_n\to0$ ($\log n/\kappa_n\to0$,
$c_n^+\to0$), so $F_n(\delta_n)\to0$ and $\bar\delta_{k,n}\to0$ for each fixed $k$; the hypotheses on $n$ hold
eventually, and the probability bound tends to one.
\end{proof}

\begin{proof}[Proof of Theorem~\ref{thm:recovery}]
Work on the event $E_n$ of the proof of Theorem~\ref{thm:joint}, where $\delta_{k-1}\le\bar\delta_{k-1}$ for every
$k\le K$, so $\{i:|v_i|>\tilde b_k\}\subseteq\mathcal D_n(\delta_{k-1})$ because $b_n$ is nondecreasing.
(a) Lemma~\ref{lem:wstep-uniform} at $\theta^{(k-1)}$ gives $w_i^{(k)}=1$ for clean and $w_i^{(k)}\le\alpha_n<\frac12$ for
flipped comparisons in $\mathcal D_n(\delta_{k-1})$, so such flips are flagged at either threshold
($w<1$ or $w<\frac12$) and such clean comparisons are not. Unflagged flips and flagged clean comparisons
therefore lie in $\{i:|v_i|\le \tilde b_k\}$, which on $H_n'$ and by Assumption~\ref{ass:linear}(iii) has at most
$(\bar f \tilde b_k+\eta_n)n$ elements; both $|\mathcal F^*|$ and $n-|\mathcal F^*|$ are at least $\rho_0n$.
(b) Under $W_\infty$ the same lemma gives $w_i^{(k)}=0$ on the flips of $\mathcal D_n(\delta_{k-1})$. Under $W_1$,
a flip in $\mathcal D_n(\delta_{k-2})$ has $w_i^{(k-1)}\le\alpha_n\le w_{\mathrm{sat}}$ (as $\lambda_n\le w_{\mathrm{sat}}^2$),
and if it also lies in $\mathcal D_n(\delta_{k-1})$ the lemma gives $w_i^{(k)}=0$; flips with
$|v_i|>\max(\tilde b_k,\tilde b_{k-1})$ lie in both sets. The count is as in (a).
(c) If $\min_i|v_i|>b_n(\bar\delta_0)\ge b_n(\delta_0)$ then $\mathcal D_n(\delta_0)=\{1,\dots,n\}$, and on
$G_n\cap\{\delta_0\le\bar\delta_0\}$ Lemma~\ref{lem:wstep-uniform} gives the statements about $w^{(1)}$. With
$w=w^*$ every term of $L_n$ is the corrected-label loss (proof of Lemma~\ref{lem:thetastep}), and
$\theta^{(1)}\ne0$ because $L_n(0;w^{(1)})=\log2$ exceeds $L_n(\theta_{\mathrm{ref}};w^{(1)})\le e^{-1}+\log\frac43$.
Under $W_\infty$, where $w^{(1)}=w^*$ has no fractional weights, the bound is $L_n(\theta_{\mathrm{ref}};w^{(1)})\le
b_n(\bar\delta_0)\le e^{-1}<\log2$, so $\alpha_n\le\frac14$ is needed only under $W_1$.
The same argument at $\theta^{(1)}$, with $w^{(1)}_i\le\alpha_n\le w_{\mathrm{sat}}$ on the flips, gives $w^{(2)}=w^*$
under $W_1$. Part (c) uses neither $H_n$, $H_n'$ nor Assumption~\ref{ass:linear}(iii)--(iv). The limits follow
from those in Theorem~\ref{thm:joint}, since $\tilde b_k\to0$ and $\eta_n\to0$.
\end{proof}

\begin{proof}[Proof of Corollary~\ref{cor:rates}]
Let $n$ be so large that $Cr_n^2\le1$, $\alpha_n\le\frac14$, $\lambda_n\le w_{\mathrm{sat}}^2$, and
$b_n(\bar\delta_k)\le\min\{e^{-1},b_0\}$, $\bar\delta_k<\pi$ for $k\le K$; these hold eventually by the bounds below.
For $c\le1$, $\lambda=e^c-1\le2c$, so $\alpha_n\le(2C)^{1/2}r_n$; and $c_n^+=2\,\mathrm{artanh}\,\alpha_n\le4\alpha_n$
for $\alpha_n\le\frac12$. Hence $b_n(\delta)\le(1+\epsilon)\log n/\kappa_n+B\delta+12B(2C)^{1/2}r_n$, and
$\eta_n\log(8n)=O(n^{-1/2}\log n)=O(r_n)$ for fixed $\zeta$. So $b_n(\bar\delta_0)=O(r_n)$ and, $b\mapsto b\log(4/b)$
being increasing, $\bar\delta_1=F_n(\bar\delta_0)=O(r_n\log(1/r_n))$; by induction
$b_n(\bar\delta_{k-1})=O(r_n\log^{k-1}(1/r_n))$ and $\bar\delta_k=O(r_n\log^k(1/r_n))$. The weight error and the rates
of Theorem~\ref{thm:recovery} are at most $\bar f b_n(\bar\delta_{k-1})+\eta_n+\alpha_n$ up to the factor $\rho_0^{-1}$.
The parameter $w_{\mathrm{sat}}$ enters the hypotheses only through $\lambda_n\le w_{\mathrm{sat}}^2$.
\end{proof}

\begin{proof}[Proof of Proposition~\ref{prop:fixedcut}]
Only Lemma~\ref{lem:wstep-uniform}(b) changes. Let $i\in\mathcal D_n(\delta)$ be a flip, the detectable set now
being defined with $\bar S c_n^\infty$ in place of $3Bc_n^+$. As in the proof of that lemma,
$\xi_i(u)\le-(|v_i|-B\delta)<-\bar Sc_n^\infty\le-\hat s(u)\,c_n^\infty$, so $z_i<-c_n^\infty$ and
$\tilde m_i=-t_i$ with $t_i>\tanh(c_n^\infty/2)=\tau_n$. By Lemma~\ref{lem:lla-limit}(b)--(c), $T_{t_i}(w)<w$ for
every $w\in(w_{\mathrm{sat}},1]$. The update is monotone and continuous in the current weight, so from any
starting weight in $[0,1]$ the iterates converge to a fixed point of $T_{t_i}$; none lies in
$(w_{\mathrm{sat}},1]$, and on $[0,w_{\mathrm{sat}}]$ the map is $0$, so the limit is $0$: $W_\infty(\theta,w)_i=0$.
Clean comparisons in $\mathcal D_n(\delta)$ keep weight $1$ as before. The weights are therefore exactly $w_i^*$ on
$\mathcal D_n(\delta)$, Lemma~\ref{lem:thetastep} applies with $\alpha=0$, and the proofs of
Theorems~\ref{thm:joint} and~\ref{thm:recovery} go through with $\alpha_n=0$ (the conditions
$\lambda_n\le w_{\mathrm{sat}}^2$ and $\alpha_n<\frac12$ enter them only through Lemma~\ref{lem:wstep-uniform}(b)
and the flip weight $\alpha$ of Lemma~\ref{lem:thetastep}). For the bound on $\tau_n$: $\varphi$ is
nondecreasing, so $\varphi(w)\le\varphi(1)=\lambda_n/(1+\lambda_n)$, and $w\ge w_{\mathrm{sat}}$, so
$M\le\lambda_n/\{(1+\lambda_n)w_{\mathrm{sat}}\}$; and $M/(2-M)\le M$ for $M\le1$. Finally
$2\,\mathrm{artanh}\,\tau\le4\tau$ for $\tau\le\frac12$, while $c_n^+=2\,\mathrm{artanh}\,\lambda_n^{1/2}\ge2\lambda_n^{1/2}$.
\end{proof}

\begin{proof}[Proof of Corollary~\ref{cor:oracle}]
\emph{A lower bound on the scale.} The sets $\{x:|x^\top u-q|\le s_0\}$, $u\in\Real^p$, $q\in\Real$, are
intersections of two half-spaces and form a class of finite VC dimension ($p$ fixed), so the event
$S_n=\{\sup_{u,q}|\frac1n\#\{i:|\phi_i^\top u-q|\le s_0\}-\Prob(|\phi^\top u-q|\le s_0)|\le\eta_S/2\}$ has
$\Prob(S_n)\to1$ \citep{vapnik1998}. Let $\theta\ne0$ with $\angle(\theta,\vartheta)\le\delta_S$, put
$u=\theta/\|\theta\|$ and $q=\mathrm{med}_j\,\xi_j(u)$. Since $|\xi_i(u)-q|=|\phi_i^\top u-Y_iq|$, a pair
with $|\xi_i(u)-q|\le s_0$ has $|\phi_i^\top u-q|\le s_0$ or $|\phi_i^\top u+q|\le s_0$, so on $S_n$, by
\eqref{eq:spread}, at most $2n(\frac14-\frac{\eta_S}2)=n(\frac12-\eta_S)<\frac n2$ pairs do. Whatever convention is used
for the median of an even number of values, both middle order statistics of $|\xi_i(u)-q|$ then exceed $s_0$,
so the median absolute deviation exceeds $s_0$, and $\hat s(\theta)>1.4826\,s_0\|\theta\|\ge s_0\|\theta\|$.

\emph{Good event.} Let $E_n$ be the intersection of $G_n$, $S_n$, the event in (i), and
$\{\varrho_n\le s_0c_n\}$ with $\varrho_n=(1+\epsilon)\log n/\kappa_n+B\delta_0$. By (i), (ii) and
$\Prob(G_n^c)\le n^{-\epsilon}$, $\Prob(E_n)\to1$. Take $n$ so large that $n\ge2p+2$,
$\lambda_n\le w_{\mathrm{sat}}^2$ and $B\delta_S>s_0c_n$; on $E_n$ the last gives $\delta_0<\delta_S$
(as $B\delta_0\le \varrho_n\le s_0c_n$). Work on $E_n$, at $\theta=\theta^{(0)}$, $u=\theta/\|\theta\|$, current
weights $w^{(0)}=\mathbf 1$; as in the proof of Lemma~\ref{lem:wstep-uniform}, $|\phi_i^\top u-v_i|\le B\delta_0$.

\emph{Clean comparisons stay at $w=1$.} Let $w_i^*=1$, so $Y_i=\widetilde Y_i$; $v_i\ne0$ almost surely. If
$\widetilde Y_i=\sign(v_i)$ then $\xi_i(u)=\sign(v_i)\phi_i^\top u\ge|v_i|-B\delta_0\ge-B\delta_0$. Otherwise,
on $G_n$, $\kappa_n|v_i|<(1+\epsilon)\log n$ and $\xi_i(u)\ge-|v_i|-B\delta_0>-\varrho_n$. In both cases
$\xi_i(u)\ge-\varrho_n$. If $\xi_i(u)\ge0$ then $z_i\ge0$; otherwise
$z_i=\xi_i(u)\|\theta\|/\hat s(\theta)>\xi_i(u)/s_0\ge-\varrho_n/s_0\ge-c_n$. So
$\tilde m_i=\tanh(z_i/2)\ge-\tanh(c_n/2)=-\lambda_n/(2+\lambda_n)$. With $w_i^{(0)}=1$ the adaptive weight is
$\varpi=\lambda_n$, and Lemma~\ref{lem:wstep} returns $w=1$; since the weight is unchanged, $1$ is a fixed
point of the update, and $W_1(\theta,\mathbf 1)_i=W_\infty(\theta,\mathbf 1)_i=1$.

\emph{Flips are reversed.} By (i) every flip lies in $\mathcal D_n(\delta_0)$, and
Lemma~\ref{lem:wstep-uniform}(b) gives $W_\infty(\theta,\mathbf 1)_i=0$ and $W_1(\theta,\mathbf 1)_i\le\alpha_n$.
This proves the weight statements in (a) and the first part of (b).

\emph{The refit is the oracle fit.} With $w=w^*$, the identity in the proof of Lemma~\ref{lem:thetastep}
(with $\alpha_i=0$) gives $\tfrac12+(w_i^*-\tfrac12)\tanh(Y_i\phi_i^\top\theta/2)=\sigma(\widetilde Y_i\phi_i^\top\theta)$
for every $i$, so $L_n(\cdot\,;w^*)=\ell_n$ and $\hat\theta(w^*)\in\Theta^{\mathrm{or}}_n$. This completes (a).

\emph{Second pass under $W_1$.} On the analogous event with $\delta_1$ in place of $\delta_0$ (probability
tending to one by the added hypotheses), the clean comparisons carry $w^{(1)}_i=1$ and the argument above
applies verbatim at $\theta^{(1)}$. A flip has $w^{(1)}_i\le\alpha_n\le w_{\mathrm{sat}}$ and lies in
$\mathcal D_n(\delta_1)$, so Lemma~\ref{lem:wstep-uniform}(b) gives $W_1(\theta^{(1)},w^{(1)})_i=0$. Hence
$w^{(2)}=w^*$ and $\theta^{(2)}\in\Theta^{\mathrm{or}}_n$.

\emph{Relative error.} For $\epsilon'>0$,
$\Prob(\|\theta^{(1)}-\theta^*_n\|>\epsilon'\|\theta^*_n\|)\le
\Prob(\sup_{\theta\in\Theta^{\mathrm{or}}_n}\|\theta-\theta^*_n\|>\epsilon'\|\theta^*_n\|)+\Prob(E_n^c)\to0$
under $W_\infty$, and likewise for $\theta^{(2)}$ under $W_1$.
\end{proof}

\section{Tuning and reporting}
\label{sec:tuning}

\paragraph{Parametrization.} We index the penalty by the cut $c$ and the saturation weight
$w_{\mathrm{sat}}=e^{-a\lambda}$, through $\lambda=e^c-1$ and $a=\log(1/w_{\mathrm{sat}})/\lambda$. The cut decides which
comparisons are flagged: on the first pass a comparison is down-weighted ($\hat w_i<1$) exactly when its $z$-scaled agreement is
below $-c$ robust standard deviations (Section~\ref{sec:zscale}). The saturation weight decides only what
happens to the most strongly contradicted comparisons. Reversal is reachable exactly when condition
\eqref{eq:Cinf} of Lemma~\ref{lem:lla-limit} holds, which at a given cut requires $w_{\mathrm{sat}}$ above a
threshold ($0.173$ at $c=0.85$ and $0.224$ at $c=1$); below it, flagged comparisons settle at an interior weight
(at $c=1$, $w_{\mathrm{sat}}=0.005$, no weight falls below $0.61$).

\paragraph{Reading the constants as thresholds.} Under $W_\infty$ the two constants are two thresholds on the
$z$-scaled agreement: a comparison is kept for $z_i\ge-c$, attenuated for $-c^\infty\le z_i<-c$, and reversed for
$z_i<-c^\infty$, where $\tanh(c^\infty/2)=\tau^{\mathrm{LLA}}$ (Lemma~\ref{lem:lla-limit},
Proposition~\ref{prop:fixedcut}). Below the reversal threshold of Table~\ref{tab:regimes}, $c^\infty=\infty$ and
the most contradicted comparisons stop at a floor weight. This suggests tuning $(c,c^\infty)$, two thresholds in
robust standard deviations, instead of $(c,w_{\mathrm{sat}})$: $w_{\mathrm{sat}}$ is recovered from $c^\infty$ by a
one-dimensional search, since $\tau^{\mathrm{LLA}}$ decreases in $w_{\mathrm{sat}}$ from $1$ to $\tanh(c/2)$. The
regime follows the goal. When flips are expected to be clear-cut and labels are to be corrected, choose a finite
$c^\infty$; Proposition~\ref{prop:fixedcut} then applies. For robust training on ambiguous preferences, choose
$c^\infty=\infty$; there $w_{\mathrm{sat}}$ matters little (at $c=1$ the floor weight is $0.61$ at
$w_{\mathrm{sat}}=0.005$ and $0.56$ at $0.1$), and it sets how much influence a flagged comparison keeps. At the
fitted model, the shares with $z_i<-c$ and $z_i<-c^\infty$ are the flagged and reversed shares; a reversed share well
above the plausible flip rate signals swamping.

\begin{table}[h]
\centering\small
\begin{tabular}{@{}llccccc@{}}
\toprule
& & \multicolumn{5}{c}{$w_{\mathrm{sat}}$}\\
\cmidrule(l){3-7}
$c$ & reversal needs $w_{\mathrm{sat}}>$ & $0.005$ & $0.1$ & $0.2$ & $0.3$ & $0.5$\\
\midrule
0.5 & 0.075 & \emph{f\,0.34} & $1.75$ & $0.79$ & $0.60$ & $0.50$ \\
0.85 & 0.173 & \emph{f\,0.54} & \emph{f\,0.48} & $2.48$ & $1.32$ & $0.91$ \\
1 & 0.224 & \emph{f\,0.61} & \emph{f\,0.56} & \emph{f\,0.49} & $1.87$ & $1.12$ \\
1.5 & 0.408 & \emph{f\,0.77} & \emph{f\,0.75} & \emph{f\,0.74} & \emph{f\,0.72} & $2.29$ \\
\bottomrule
\end{tabular}
\caption{Regimes of the $W_\infty$ weight step at a fixed cut. A number is the reversal threshold
$c^\infty$ (a comparison with $z_i<-c^\infty$ is reversed, one with $-c^\infty\le z_i<-c$ attenuated); an
entry f\,$x$ means reversal is impossible (condition \eqref{eq:Cinf} fails) and the most contradicted
comparisons stop at weight $x$. Computed by \texttt{src/regimes.py}.}
\label{tab:regimes}
\end{table}

\paragraph{Selection.} In the simulations every method's constants are chosen by clean-validation log-loss over
its grid (for RW-DPO, $c\in\{0.25,0.5,0.75,1,1.5,2\}$ and $w_{\mathrm{sat}}\in\{0.005,0.1,0.3\}$).
\emph{Fewer candidates.} When each fit is expensive, $w_{\mathrm{sat}}$ can be fixed at $0.005$ and the cut chosen
from a few values; on the language model we used $\{0.85,1,1.25\}$. In the simulations the
six grid cuts at $w_{\mathrm{sat}}=0.005$ come close to the full grid, whereas these three cuts alone are clearly
better than DPO but trail the full grid and several tuned baselines (Table~\ref{tab:sim-recipes}), so a wider set
is safer. The choice for the language model is
described in Section~\ref{sec:exp-rq2} and Appendix~\ref{app:llm}.

The fixed-$a$ convention with $a=1$ is the hard-thresholding end of the family (for an
orthogonal design, MCP with $a=1$ is hard thresholding, as in $\Theta$-IPOD
\citep{sheowen2011}): the attenuation band
$(\lambda/(2+\lambda),\tau^{\mathrm{LLA}}]$ is empty for $\lambda\le0.5$ and narrow
above it ($[0.333,0.389]$ at $\lambda=1$), so almost every comparison that leaves
$w=1$ goes to $w=0$ and graded attenuation largely disappears. The penalty is
nonconvex on the $w$-scale for $u\in(a\lambda-1,a\lambda)$ for every $(\lambda,a)$;
$\lambda<1/a$ only makes that region start at $w=1$, since on $u<a\lambda$
\[
F''_{\lambda,a}(w;m)=\frac{m^2}{[\tfrac12+(w-\tfrac12)m]^2}+\frac{\lambda-(1+u)/a}{w^2},
\qquad u=-\log w .
\]

\paragraph{Reporting.} Because $\hat w_i$ is an estimate of a per-comparison constant supported by a
single annotation, we recommend reporting the realized partition
$(\#\{\hat w_i=1\},\ \#\{0<\hat w_i<1\},\ \#\{\hat w_i=0\})$ alongside policy
metrics, and, under injected corruption, the rates of correct detection and of
\emph{swamping} --- reliable comparisons wrongly reversed. Swamping is the
quantity that decides whether a correction procedure is usable, and it is
routinely omitted.

\section{Simulation details}
\label{app:sim-details}

This section describes the simulations of Section~\ref{sec:exp-linear}: the model, three flip designs
(random, feature-dependent and clear-cut flips), the metrics, the methods with their implementation, and the tuning. Full results by flip rate and clearness are in Table~\ref{tab:sim-full}.

\paragraph{Model and data.} Each comparison has a feature vector $\phi_i\in\Real^p$ and margin
$d_i=\phi_i^\top\theta$; the true parameter is $\theta^*=\kappa\vartheta$ with $\|\vartheta\|=1$.
Clean labels are drawn from the Bradley--Terry model at $d_i^*=\phi_i^\top\theta^*$, and the labels
of the flip set $\mathcal F^*$ are reversed; $\rho=|\mathcal F^*|/n$ is the flip rate. A random
$20\%$ of the comparisons, with their \emph{clean} labels, form a validation split, and every
method is scored on a separate clean test set of $5000$ comparisons.

\paragraph{Metrics.} \emph{Angle error} is $\angle(\hat\theta,\theta^*)=\arccos\{\hat\theta^\top
\theta^*/(\|\hat\theta\|\|\theta^*\|)\}$, the error in the preference \emph{direction}, which is
what the theorem controls and what determines every ranking. \emph{Relative error} is
$\|\hat\theta-\theta^*\|/\|\theta^*\|$, which also penalizes a wrong scale. \emph{Test log-loss}
is the clean held-out negative log-likelihood (calibration). For methods that flag comparisons:
the \emph{detection rate} is the share of true flips that are flagged, the \emph{swamping rate}
the share of clean comparisons that are flagged, \emph{precision} the share of flagged comparisons
that are true flips. A comparison is \emph{flagged} by RW-DPO when $\hat w_i<1$, by trimmed logistic
regression when it is trimmed, and by the mean-shift fit when its shift is nonzero. PLC-DPO makes no explicit
flip decision: from its standardized margin it computes, for each pair, a posterior over clean, flipped and tied
states, and uses it only to mix the corresponding losses. To score its flags we recompute this posterior at the
final fit and count a pair as flagged when ``flipped'' is its most probable state.

\paragraph{Methods.} DPO (logistic regression on the reported labels); rDPO (the unbiased
loss of \citet{chowdhury2024rdpo}); cDPO (label smoothing); the noise-aware likelihood of
\citet{copas1988}, $-\log[(1-\varepsilon)\sigma(z)+\varepsilon\sigma(-z)]$, which is RW-DPO's model
with every weight equal to $1-\varepsilon$; trimmed logistic regression (least trimmed likelihood);
the case-specific mean-shift logistic regression with an $\ell_1$ penalty
\citep{tibsmanning2014,sheowen2011}; PLC-DPO \citep{cho2026plc} with its three published presets;
and RW-DPO (Algorithm~\ref{alg:rwdpo}, $z$-scaled agreement), with its cut
$c\in\{2,1.5,1,0.75,0.5,0.25\}$ and saturation weight $w_{\mathrm{sat}}\in\{0.005,0.1,0.3\}$ tuned jointly
($w_{\mathrm{sat}}=0.005$ only down-weights at these cuts, $0.1$ and $0.3$ also allow reversal); we also
report RW-DPO with $w_{\mathrm{sat}}=0.005$ fixed, which tunes the cut alone. Labels-known references are the oracle-label fit and the fit
that discards the flipped comparisons. \emph{Implementation:} every likelihood is minimized by L-BFGS with the
same small ridge term (below), started from the DPO fit; RW-DPO alternates one weight update with a full refit
of $\theta$ until no weight changes by more than $10^{-8}$, and PLC-DPO takes $400$ gradient steps.

\paragraph{Tuning.} Every method chooses its constants by the lowest log-loss on the same clean
validation split. For RW-DPO, the cut and saturation weight are chosen as described in
Appendix~\ref{sec:tuning}; the baselines choose their constant --- the flip rate $\varepsilon$ of
rDPO/cDPO/Copas, the trimmed share, the mean-shift penalty, PLC-DPO's preset --- from grids of
$3$--$7$ values. Tables report means with standard
errors; in each column the best method other than the oracle is in bold and methods within two
standard errors of it are underlined.

\paragraph{Ridge.} All studies use a ridge of $10^{-4}$ per comparison, small enough that the
oracle-label fit is essentially unbiased, so that relative error measures estimation error.
(With a ridge of the strength of a KL anchor, every fit is shrunk and methods that inflate the
scale can spuriously beat the oracle on relative error.)

\paragraph{Random flips.} Features are Gaussian; $n\in\{2000,8000\}$,
$p\in\{10,50\}$, the standard deviation of $d^*$ in $\{1.0,1.5,2.5\}$, and
$\rho\in\{0,0.1,0.2,0.3,0.4\}$, with $100$ replicates per setting. Flips are drawn independently
of the features, so the reported labels still depend on $\phi$ only through $\phi^\top\vartheta$;
for Gaussian features DPO's direction is then unbiased \citep{brillinger1982,liduan1989}, and the
robust methods can improve the scale and calibration but not the direction.
Averaged over $\rho>0$, the angle errors of all
methods lie within $0.374$--$0.391$ of DPO's $0.383$, while the relative error drops from $0.632$
for DPO to below $0.40$ for the best methods. RW-DPO has the lowest relative error ($0.381$) and
test log-loss ($0.545$), ahead of the Copas likelihood ($0.385$, $0.546$), which is the correctly
specified model when flips are uniform at random and has the lowest angle error ($0.374$ vs.\
$0.378$). RW-DPO's advantage grows with the flip rate (relative error $0.285$, $0.398$, $0.612$ at
$\rho=0.2,0.3,0.4$, against $0.289$, $0.404$, $0.634$ for Copas). Without
flips RW-DPO is close to the oracle ($0.178$ vs.\ $0.162$), whereas methods that assume a positive
flip rate pay for it (Copas $0.311$, PLC-DPO $0.250$, trimmed logistic $0.722$). Tuning $w_{\mathrm{sat}}$
as well as the cut helps: the joint RW-DPO beats RW-DPO with $w_{\mathrm{sat}}$ fixed in $85\%$ of
settings on test log-loss.

\paragraph{Feature-dependent flips.} The same grid, with flips concentrated along a
nuisance direction $u\perp\vartheta$, $\Pr(i\in\mathcal F^*)\propto\exp\{2\phi_i^\top u/\mathrm{sd}(\phi^\top u)\}$
--- an annotator bias tied to a feature unrelated to the true preference. With Gaussian features
the direction is again protected, and the methods behave as under random flips. With skewed features (centred exponential), the
reported labels pull DPO's direction away from $\vartheta$, and robust estimation pays off in the
direction itself (Figure~\ref{fig:sim-main}): RW-DPO has the lowest relative
error ($0.526$), angle error ($0.566$, against $0.591$ for DPO) and test log-loss ($0.579$) of all
methods. At $\rho=0.4$ no method recovers the direction.

\paragraph{Clear-cut flips.} Annotators can also err on easy comparisons (a careless
click or a swapped pair). Here flips are drawn only from comparisons with $|d_i^*|\ge t$, for
$t\in\{1.5,2.5,3.5,4.5\}$, with sd$(d^*)=4$, $\rho\in\{0.05,0.1\}$ (so that at most about $40\%$ of the
eligible comparisons are flipped and the direction stays identifiable), $n\in\{2000,8000\}$ and
$p\in\{10,50\}$. Clean comparisons keep their natural spread, close calls included. Results are in the last columns of
Table~\ref{tab:sim-main}. Tuned on clean validation data, RW-DPO's detection rate rises from $0.70$ to
$0.98$ as $t$ grows, while swamping falls from $0.012$ to $0.005$ and precision rises from $0.83$ to
$0.94$; its angle error ties the best method (trimmed logistic regression) at every $t$. PLC-DPO and
the mean-shift estimator detect nearly every flip but at precision $0.30$--$0.37$ and $0.18$--$0.22$,
flagging $14$--$18\%$ and $28$--$37\%$ of the clean comparisons. Validation tuning mostly down-weights
rather than reverses, because reversing even a few clean comparisons costs more in fit than reversing
a flip gains. Configured for flip recovery ($c=1$, $w_{\mathrm{sat}}=0.3$), the same update reverses
$27\%$, $37\%$, $56\%$ and $85\%$ of the flips at precision $0.98$--$0.99$, with the lowest angle error
of any setting at $t=4.5$ ($0.082$): detection consistency in the clearness direction, with the
direction estimate unharmed.

\paragraph{Summary.} With constants tuned on clean validation data, RW-DPO has the lowest estimation error and
log-loss under random flips and the lowest error on every metric under feature-dependent flips with
skewed features; its flags are among the most precise, and as flips become clearer its detection
approaches one. Its advantage is not universal: the correctly specified Copas likelihood has the
best angle under random flips, and flips on close calls cannot be undone by any method.

\section{Additional simulation results}
\label{app:sim}

\paragraph{Reversal on clear-cut flips.} Reversal pays when flips are clear-cut: at $c=1$, $w_{\mathrm{sat}}=0.3$ reverses a share of flips that grows with their clearness, from $27\%$ to $85\%$,
while setting at most $0.05\%$ of clean comparisons to $0$, with the same angle error as $w_{\mathrm{sat}}=0.005$.
When flips are expected to be clear-cut and the goal is to correct labels, $w_{\mathrm{sat}}=0.3$ is the better
choice; for estimation alone the two are equivalent.

\begin{table}[t]
\centering
\small
\caption{Full simulation results behind Table~\ref{tab:sim-main} (clean-validation tuning; 100 replicates per setting). (a) Relative error under random flips and relative and angle error under feature-dependent flips with skewed features, by flip rate $\rho$, means over the 12 settings (standard errors $\le0.008$); bold: best non-oracle; underlined: within 2 s.e. (b) Clear-cut flips, by clearness threshold $t$ (flips only where $|d_i^*|\ge t$, $\rho\in\{0.05,0.1\}$): detection / swamping / precision of the flags ($\hat w_i<1$ for RW-DPO) and angle error. Last row: RW-DPO configured for flip recovery, not tuned, counting exact reversals ($\hat w_i=0$).}
\label{tab:sim-full}
\textit{(a) Error by flip rate}\\[2pt]
\begin{adjustbox}{max width=\textwidth}\begin{tabular}{lccccccccccccc}
\toprule
 & \multicolumn{5}{c}{random flips: relative error} & \multicolumn{4}{c}{feature-dependent: relative error} & \multicolumn{4}{c}{feature-dependent: angle error} \\
Method & $\rho=0.0$ & $\rho=0.1$ & $\rho=0.2$ & $\rho=0.3$ & $\rho=0.4$ & $\rho=0.1$ & $\rho=0.2$ & $\rho=0.3$ & $\rho=0.4$ & $\rho=0.1$ & $\rho=0.2$ & $\rho=0.3$ & $\rho=0.4$ \\
\midrule
Oracle labels & 0.162 & 0.162 & 0.162 & 0.162 & 0.162 & 0.167 & 0.167 & 0.167 & 0.167 & 0.153 & 0.153 & 0.153 & 0.153 \\
\midrule
DPO (logistic) & \textbf{0.162} & 0.364 & 0.563 & 0.728 & 0.872 & 0.407 & 0.602 & 0.759 & 0.897 & 0.277 & 0.424 & 0.645 & \underline{1.016} \\
rDPO & 0.445 & \underline{0.219} & 0.303 & 0.436 & 0.667 & 0.296 & 0.456 & 0.631 & \textbf{0.841} & 0.278 & 0.427 & 0.647 & \textbf{1.016} \\
cDPO & 0.239 & 0.443 & 0.612 & 0.756 & 0.884 & 0.479 & 0.645 & 0.783 & 0.904 & 0.276 & 0.424 & 0.645 & \underline{1.016} \\
Copas (noise-aware) & 0.311 & \textbf{0.212} & \underline{0.289} & \underline{0.404} & 0.634 & \textbf{0.264} & 0.403 & \underline{0.606} & \underline{0.848} & \underline{0.250} & \underline{0.386} & \underline{0.627} & \underline{1.021} \\
Trimmed logistic & 0.722 & 0.304 & 0.304 & \underline{0.412} & \underline{0.620} & 0.348 & 0.406 & \underline{0.609} & 0.877 & \underline{0.247} & \underline{0.385} & \underline{0.635} & \underline{1.037} \\
Mean-shift $\ell_1$ & 0.466 & 0.252 & 0.302 & \underline{0.406} & \textbf{0.611} & 0.302 & 0.417 & \underline{0.607} & 0.853 & 0.259 & 0.403 & \underline{0.636} & \underline{1.023} \\
PLC-DPO & 0.250 & 0.222 & 0.300 & 0.424 & 0.630 & \underline{0.272} & 0.403 & \underline{0.609} & 0.861 & 0.255 & 0.394 & \underline{0.634} & \underline{1.027} \\
RW-DPO ($w_{\mathrm{sat}}$ fixed) & 0.178 & 0.232 & 0.298 & 0.417 & \underline{0.626} & \underline{0.270} & \underline{0.394} & \underline{0.608} & 0.853 & \underline{0.243} & \underline{0.379} & \underline{0.626} & \underline{1.023} \\
\textbf{RW-DPO} & 0.178 & 0.227 & \textbf{0.285} & \textbf{0.398} & \underline{0.612} & \underline{0.266} & \textbf{0.388} & \textbf{0.599} & 0.852 & \textbf{0.242} & \textbf{0.376} & \textbf{0.625} & \underline{1.022} \\
\bottomrule
\end{tabular}\end{adjustbox}

\vspace{6pt}
\textit{(b) Clear-cut flips by clearness threshold}\\[2pt]
\begin{adjustbox}{max width=\textwidth}\begin{tabular}{lcccccccc}
\toprule
Method & \multicolumn{2}{c}{$t=1.5$} & \multicolumn{2}{c}{$t=2.5$} & \multicolumn{2}{c}{$t=3.5$} & \multicolumn{2}{c}{$t=4.5$} \\
 & det./swamp./prec. & angle & det./swamp./prec. & angle & det./swamp./prec. & angle & det./swamp./prec. & angle \\
\midrule
Trimmed logistic & 0.73/0.013/0.82 & 0.112 & 0.86/0.008/0.89 & 0.108 & 0.94/0.004/0.95 & 0.095 & 0.98/0.002/0.97 & 0.087 \\
Mean-shift $\ell_1$ & 0.96/0.279/0.22 & 0.115 & 0.98/0.315/0.20 & 0.120 & 0.99/0.345/0.19 & 0.121 & 1.00/0.373/0.18 & 0.123 \\
PLC-DPO & 0.95/0.181/0.30 & 0.113 & 0.98/0.166/0.32 & 0.111 & 0.99/0.151/0.35 & 0.111 & 1.00/0.137/0.37 & 0.111 \\
\textbf{RW-DPO} & 0.70/0.012/0.83 & 0.111 & 0.87/0.011/0.87 & 0.106 & 0.95/0.008/0.91 & 0.092 & 0.98/0.005/0.94 & 0.087 \\
RW-DPO, flip recovery ($c{=}1$, $w_{\mathrm{sat}}{=}0.3$) & 0.27/0.000/0.98 & 0.112 & 0.37/0.000/0.99 & 0.113 & 0.56/0.000/0.99 & 0.089 & 0.85/0.001/0.99 & 0.082 \\
\bottomrule
\end{tabular}\end{adjustbox}
\end{table}
\begin{table}[t]
\centering
\small
\caption{RW-DPO's advantage in relative error is systematic: mean paired difference (RW-DPO minus method; negative favours RW-DPO) over the same replicates, with its standard error, every method tuned on clean validation data (Table~\ref{tab:sim-main}).}
\label{tab:sim-paired}
\begin{tabular}{lcc}
\toprule
Method & random flips & feature-dependent flips \\
\midrule
Copas (noise-aware) & $-0.0039$ {\scriptsize(0.0008)} & $-0.0044$ {\scriptsize(0.0007)} \\
Mean-shift $\ell_1$ & $-0.0120$ {\scriptsize(0.0009)} & $-0.0188$ {\scriptsize(0.0009)} \\
PLC-DPO & $-0.0135$ {\scriptsize(0.0008)} & $-0.0103$ {\scriptsize(0.0007)} \\
rDPO & $-0.0256$ {\scriptsize(0.0011)} & $-0.0299$ {\scriptsize(0.0011)} \\
Trimmed logistic & $-0.0293$ {\scriptsize(0.0010)} & $-0.0341$ {\scriptsize(0.0011)} \\
DPO (logistic) & $-0.2511$ {\scriptsize(0.0026)} & $-0.1405$ {\scriptsize(0.0019)} \\
cDPO & $-0.2930$ {\scriptsize(0.0026)} & $-0.1765$ {\scriptsize(0.0021)} \\
\bottomrule
\end{tabular}
\end{table}
\begin{table}[t]
\centering
\small
\caption{RW-DPO with a small candidate set of cuts and $w_{\mathrm{sat}}=0.005$: each of $c=1.25$, $1$, $0.85$ fixed in advance; the cut chosen from $\{0.85,1,1.25\}$ by clean-validation log-loss (the candidates used on the language model); the six grid cuts at $w_{\mathrm{sat}}=0.005$; and the full grid of Table~\ref{tab:sim-main} ($6$ cuts $\times$ $3$ values of $w_{\mathrm{sat}}$). Relative error $\|\hat\theta-\theta^*\|/\|\theta^*\|$; last column: share of replicates in which the three-cut choice beats DPO.}
\label{tab:sim-recipes}
\begin{adjustbox}{max width=\columnwidth}\begin{tabular}{llcccc|ccc|c}
\toprule
 & & & \multicolumn{3}{c|}{fixed cut} & \multicolumn{3}{c|}{chosen on validation} & beats \\
Flips & $\rho$ & DPO & $1.25$ & $1$ & $0.85$ & 3 cuts & 6 cuts & grid & DPO \\
\midrule
random & 0.1 & 0.364 & \textbf{0.241} & 0.277 & 0.409 & 0.219 & 0.232 & 0.227 & 89\% \\
 & 0.2 & 0.563 & 0.422 & \textbf{0.349} & 0.368 & 0.304 & 0.298 & 0.285 & 96\% \\
 & 0.3 & 0.728 & 0.622 & 0.537 & \textbf{0.497} & 0.465 & 0.417 & 0.398 & 97\% \\
 & 0.4 & 0.872 & 0.810 & 0.756 & \textbf{0.734} & 0.691 & 0.626 & 0.612 & 93\% \\
 & mean & 0.632 & 0.524 & \textbf{0.479} & 0.502 & 0.420 & 0.393 & 0.381 & 94\% \\
\midrule
feat.-dep. & 0.1 & 0.407 & \textbf{0.280} & 0.327 & 0.466 & 0.262 & 0.270 & 0.266 & 87\% \\
 & 0.2 & 0.602 & 0.477 & \textbf{0.436} & 0.481 & 0.397 & 0.394 & 0.388 & 95\% \\
 & 0.3 & 0.759 & 0.684 & \textbf{0.651} & 0.676 & 0.615 & 0.608 & 0.599 & 93\% \\
 & 0.4 & 0.897 & \textbf{0.878} & 0.892 & 0.954 & 0.861 & 0.853 & 0.852 & 74\% \\
 & mean & 0.666 & 0.580 & \textbf{0.577} & 0.644 & 0.534 & 0.531 & 0.526 & 87\% \\
\bottomrule
\end{tabular}\end{adjustbox}
\end{table}

\clearpage
\section{Additional language-model results}
\label{app:llm}

\paragraph{Training details.} We follow the PLC-DPO protocol \citep{cho2026plc}: LoRA rank $16$, $\beta=0.01$,
learning rate $10^{-5}$ with $10\%$ warm-up and cosine decay, effective batch $64$, and a fixed data order
shared by all methods. Training used single NVIDIA RTX 3090 and RTX 4090 GPUs (24 GB); a two-epoch RW-DPO run takes
about $5.4$ hours on an RTX 3090 and $3.2$ on an RTX 4090, and the simulations ran on CPUs. The data
(UltraFeedback), the base and reference model (a Qwen2.5-1.5B model fine-tuned on UltraChat), the judge
(Skywork-Reward-V2-Llama-3.1-8B) and the PLC-DPO code are public and were used under the licences stated on their
release pages; no new data were collected.

\paragraph{Implementation of RW-DPO on the language model.} Algorithm~\ref{alg:rwdpo} is adapted to stochastic
training. Instead of a full DPO fit, the first $7\%$ of the updates ($43$ of $621$ in two epochs) are plain DPO.
The weight step runs on each mini-batch when it is visited, once per epoch, and is iterated at the current
margins until the weights change by less than $10^{-3}$. The robust scale $\hat s$ is the MAD of a running window
of the $2000$ most recent label-oriented margins, computed before the current batch is added. The consistency
theory of Section~\ref{sec:feasible} covers the full-data scheme of the linear-index model, not this stochastic
variant. Two-epoch results use the adapter saved at the end of the second epoch.

\paragraph{Development cell.} On $\rho=0.2$, seed $0$, the cuts $0.85$ and $1$ gave validation log-loss $0.615$
and $0.619$ and win rates against one-epoch DPO $72.7$ and $72.8$; a smaller cut, $c=0.5$, gave $0.773$ and $69.7$.
The third candidate, $c=1.25$, was run on this cell after the grid, to complete the comparison: validation
log-loss $0.617$ and win rate $72.8$, so the choice of $c=0.85$ stands (test log-loss, not used for the choice, is
$0.644$ at $c=1.25$ against $0.660$ at $c=0.85$).

\paragraph{Further results.} Table~\ref{tab:llm-clean} gives every method at one and two epochs against
clean-label DPO. Table~\ref{tab:llm-prefacc} gives preference accuracy on the clean test set; it does not track generation quality
here, since cDPO and DPO have the highest accuracy and the lowest win rates.

\begin{table}[t]
\centering
\small
\caption{Win rate (\%) against DPO trained on clean labels for two epochs, averaged over its three seeds (the reference of Table~\ref{tab:llm-summary}); $50$ = as good as clean-label DPO. UltraFeedback with a share $\rho$ of flipped labels, Qwen2.5-1.5B (LoRA), mean $\pm$ s.d.\ over three seeds (fewer where marked). }
\label{tab:llm-clean}
\begin{tabular}{llccccc}
\toprule
Epochs & Method & $\rho=0.0$ & $\rho=0.1$ & $\rho=0.2$ & $\rho=0.3$ & mean over $\rho>0$ \\
\midrule
1 & DPO & 37.8\,{\scriptsize$\pm$0.2} & 33.6\,{\scriptsize$\pm$0.5} & 30.1\,{\scriptsize$\pm$0.7} & 26.4\,{\scriptsize$\pm$0.2} & 30.0 \\
 & rDPO ($\varepsilon=\rho$) & -- & 34.1\,{\scriptsize$\pm$0.5} & 30.1\,{\scriptsize$\pm$0.1} & 26.8\,{\scriptsize$\pm$0.3} & 30.4 \\
 & cDPO ($\varepsilon=\rho$) & -- & 32.7\,{\scriptsize$\pm$0.7} & 29.5\,{\scriptsize$\pm$0.7} & 25.7\,{\scriptsize$\pm$0.2} & 29.3 \\
 & PLC-DPO & 48.5\,{\scriptsize$\pm$0.7} & 44.8\,{\scriptsize$\pm$0.9} & 40.2\,{\scriptsize$\pm$1.5} & 32.6\,{\scriptsize$\pm$0.1} & 39.2 \\
 & \textbf{RW-DPO} & 48.8\,{\scriptsize$\pm$0.3} & 46.2\,{\scriptsize$\pm$0.4} & 40.1\,{\scriptsize$\pm$0.5} & 31.6\,{\scriptsize$\pm$2.2} & 39.3 \\
\midrule
2 & DPO & 50.0\,{\scriptsize$\pm$1.1} & 45.9\,{\scriptsize$\pm$1.0} & 41.7\,{\scriptsize$\pm$1.8} & 34.5\,{\scriptsize$\pm$0.2} & 40.7 \\
 & rDPO ($\varepsilon=\rho$) & -- & 49.4\,{\scriptsize$\pm$0.6} & 46.6\,{\scriptsize$\pm$0.8} & 40.0\,{\scriptsize$\pm$1.3} & 45.3 \\
 & cDPO ($\varepsilon=\rho$) & -- & 43.9\,{\scriptsize$\pm$0.9} & 36.6\,{\scriptsize$\pm$0.8} & 29.9\,{\scriptsize$\pm$0.6} & 36.8 \\
 & PLC-DPO & 53.8\,{\scriptsize$\pm$0.5} & 53.7\,{\scriptsize$\pm$0.6} & 54.2\,{\scriptsize$\pm$1.7} & 54.0\,{\scriptsize$\pm$1.8} & 54.0 \\
 & \textbf{RW-DPO} & 52.8\,{\scriptsize$\pm$0.2} & 53.4\,{\scriptsize$\pm$1.2} & 53.7\,{\scriptsize$\pm$1.1} & 55.1\,{\scriptsize$\pm$2.9} & 54.1 \\
\bottomrule
\end{tabular}
\end{table}
\begin{table}[t]
\centering
\small
\caption{Preference accuracy (\%) on the clean held-out test set. UltraFeedback with a share $\rho$ of flipped labels, Qwen2.5-1.5B (LoRA), mean $\pm$ s.d.\ over three seeds (fewer where marked). }
\label{tab:llm-prefacc}
\begin{tabular}{llccccc}
\toprule
Epochs & Method & $\rho=0.0$ & $\rho=0.1$ & $\rho=0.2$ & $\rho=0.3$ & mean over $\rho>0$ \\
\midrule
1 & DPO & 64.7\,{\scriptsize$\pm$0.2} & 65.5\,{\scriptsize$\pm$0.1} & 65.7\,{\scriptsize$\pm$0.4} & 65.9\,{\scriptsize$\pm$0.2} & 65.7 \\
 & rDPO ($\varepsilon=\rho$) & -- & 65.3\,{\scriptsize$\pm$0.2} & 65.7\,{\scriptsize$\pm$0.0} & 66.0\,{\scriptsize$\pm$0.9} & 65.7 \\
 & cDPO ($\varepsilon=\rho$) & -- & 65.5\,{\scriptsize$\pm$0.1} & 65.7\,{\scriptsize$\pm$0.3} & 65.7\,{\scriptsize$\pm$0.9} & 65.6 \\
 & PLC-DPO & 61.6\,{\scriptsize$\pm$0.1} & 62.0\,{\scriptsize$\pm$0.3} & 63.1\,{\scriptsize$\pm$0.1} & 64.1\,{\scriptsize$\pm$0.3} & 63.1 \\
 & \textbf{RW-DPO} & 62.4\,{\scriptsize$\pm$0.1} & 61.9\,{\scriptsize$\pm$0.3} & 62.3\,{\scriptsize$\pm$1.1} & 63.0\,{\scriptsize$\pm$1.5} & 62.4 \\
\midrule
2 & DPO & 66.7\,{\scriptsize$\pm$0.2} & 65.5\,{\scriptsize$\pm$0.3} & 64.8\,{\scriptsize$\pm$0.4} & 65.3\,{\scriptsize$\pm$0.4} & 65.2 \\
 & rDPO ($\varepsilon=\rho$) & -- & 65.3\,{\scriptsize$\pm$0.5} & 64.2\,{\scriptsize$\pm$0.2} & 64.3\,{\scriptsize$\pm$0.6} & 64.6 \\
 & cDPO ($\varepsilon=\rho$) & -- & 65.6\,{\scriptsize$\pm$0.4} & 66.0\,{\scriptsize$\pm$0.4} & 66.9\,{\scriptsize$\pm$0.5} & 66.2 \\
 & PLC-DPO & 64.0\,{\scriptsize$\pm$0.1} & 63.6\,{\scriptsize$\pm$0.2} & 62.7\,{\scriptsize$\pm$0.4} & 61.0\,{\scriptsize$\pm$0.3} & 62.4 \\
 & \textbf{RW-DPO} & 65.1\,{\scriptsize$\pm$0.1} & 64.3\,{\scriptsize$\pm$0.3} & 63.4\,{\scriptsize$\pm$0.6} & 61.2\,{\scriptsize$\pm$0.3} & 63.0 \\
\bottomrule
\end{tabular}
\end{table}
\begin{table}[t]
\centering
\small
\caption{RW-DPO's flags on the language model (two epochs, $c=0.85$, $w_{\mathrm{sat}}=0.005$), by flip rate: share of comparisons with $\hat w_i<1$ in the weights used in the second epoch, detection (share of flips flagged), swamping (share of clean comparisons flagged) and precision (share of flags that are flips); means over seeds. Flip masks are used only for this evaluation.}
\label{tab:llm-flags}
\begin{tabular}{ccccc}
\toprule
$\rho$ & flagged & detection & swamping & precision \\
\midrule
0.1 & 0.128 & 0.308 & 0.108 & 0.24 \\
0.2 & 0.152 & 0.307 & 0.114 & 0.40 \\
0.3 & 0.179 & 0.296 & 0.129 & 0.50 \\
\bottomrule
\end{tabular}
\end{table}
\begin{table}[t]
\centering
\small
\caption{Where each method's correction lands on the language model (two epochs), by flip rate. The correction of a comparison is the weight RW-DPO removes, $1-\hat w_i$ (weights used in the second epoch), and for PLC-DPO the weight its loss puts on the reversed label, $m_i\,p_{\mathrm{flip},i}$ with mixture weight $m_i=\gamma\cdot\text{confidence}_i^{\,\kappa}$, recomputed with its own routing formula from the final model (its routing is not stored during training). Applied: mean correction; on flips: share of the total correction that falls on flipped comparisons (equal to $\rho$ if spread at random); ratio: mean correction on a flipped comparison over that on a clean one (1 if spread evenly). Both corrections are the weight placed on the reversed label, in the likelihood for RW-DPO and in the loss for PLC-DPO, so their levels are comparable only roughly; the share on flips and the ratio are scale-free. Mean $\pm$ s.d.\ over three seeds.}
\label{tab:llm-landing}
\begin{adjustbox}{max width=\columnwidth}\begin{tabular}{c|ccc|ccc}
\toprule
 & \multicolumn{3}{c|}{RW-DPO} & \multicolumn{3}{c}{PLC-DPO} \\
$\rho$ & applied & on flips & ratio & applied & on flips & ratio \\
\midrule
0.1 & 0.03\,{\scriptsize$\pm$0.00} & 0.26\,{\scriptsize$\pm$0.01} & 3.21\,{\scriptsize$\pm$0.15} & 0.11\,{\scriptsize$\pm$0.00} & 0.10\,{\scriptsize$\pm$0.00} & 0.99\,{\scriptsize$\pm$0.01} \\
0.2 & 0.04\,{\scriptsize$\pm$0.00} & 0.43\,{\scriptsize$\pm$0.02} & 2.99\,{\scriptsize$\pm$0.24} & 0.12\,{\scriptsize$\pm$0.00} & 0.20\,{\scriptsize$\pm$0.00} & 1.00\,{\scriptsize$\pm$0.01} \\
0.3 & 0.05\,{\scriptsize$\pm$0.00} & 0.51\,{\scriptsize$\pm$0.02} & 2.41\,{\scriptsize$\pm$0.17} & 0.10\,{\scriptsize$\pm$0.00} & 0.30\,{\scriptsize$\pm$0.00} & 1.02\,{\scriptsize$\pm$0.00} \\
\bottomrule
\end{tabular}\end{adjustbox}
\end{table}

\end{document}